\newcommand{\Z}{\mathbb{Z}}
\newcommand{\N}{\mathbb{N}}
\newcommand{\D}{\mathcal{D}}
\newcommand{\E}{\mathbb{E}}
\newcommand{\R}{\mathbb{R}}
\def\bw{\mathbf{w}}
\def\O{\mathcal{O}}
\def\gep{\epsilon}
\def\X{\mathcal{X}}
\def\Y{\mathcal{Y}}
\def\W{\mathcal{W}}
\def\Z{\mathcal{Z}}
\def\N{\mathcal{N}}
\def\mbI{\mathbb{I}}
\def\EX{\mathbb{E}}
\def\S{\mathcal{S}}
\def\bw{\mathbf{w}}
\def\priv{\text{priv}}
\def\sgd{\text{SGD}}
\def\bb{\mathbf{b}}
\def\A{\mathcal{A}}
\def\gb{\beta}
\def\ga{\alpha}
\def\gd{\delta}
\def\gep{\epsilon}
\def\gD{\Delta}
\def\cR{\mathcal{R}}
\def\0{\mathbf{0}}
\def\proj{\text{Proj}}
\newcommand{\renyi}{R\'enyi}
\newtheorem{theorem}{Theorem}
\newtheorem{lemma}[theorem]{Lemma}
\newtheorem{proposition}[theorem]{Proposition}
\newtheorem{corollary}[theorem]{Corollary}
\theoremstyle{definition}
\newtheorem{definition}{Definition}
\newtheorem{example}{Example}
\theoremstyle{definition}
\newtheorem{remark}{Remark}
\def\begeqn{\begin{equation}}
\def\endeqn{\end{equation}}
\def\begth{\begin{theorem}}
\def\endth{\end{theorem}}
\def\begprop{\begin{proposition}}
\def\endprop{\end{proposition}}
\def\begcor{\begin{corollary}}
\def\endcor{\end{corollary}}
\def\begdef{\begin{definition}}
\def\enddef{\end{definition}}
\def\beglemm{\begin{lemma}}
\def\endlemm{\end{lemma}}
\def\begexm{\begin{example}}
\def\endexm{\end{example}}
\def\begrem{\begin{remark}}
\def\endrem{\end{remark}}
\def\begdef{\begin{definition}}
\def\enddef{\end{definition}}
\def\excess{\gep_{\text{risk}}}
\def\proj{\text{Proj}}
\begin{document}

\title{Differentially Private SGD with Non-Smooth Losses\thanks{Corresponding author: Yiming Ying.   Email: yying@albany.edu}}

\author{Puyu Wang$^{\dag}$, Yunwen Lei$^{\ddag}$, Yiming Ying$^*$ and Hai Zhang$^\dag$ \\
\\
$^{\dag}$ School of Mathematics, Northwest University, Xi'an, 710127, China\\
$^{\ddag}$ School of Computer Science, University of Birmingham, Birmingham B15 2TT, UK\\
$^{*}$Department of Mathematics and Statistics, State University of New York at Albany, \\
Albany, NY, 12222, USA}

\date{}

\maketitle

\begin{abstract} In this paper, we are concerned with  differentially  private {stochastic gradient descent (SGD)} algorithms in the setting of stochastic convex optimization (SCO). Most of the existing work requires the loss to be  Lipschitz continuous  and strongly smooth, and the model parameter to be uniformly bounded.  However, 
these assumptions are restrictive as many popular losses violate these conditions including the hinge loss for SVM, the absolute loss in robust regression, and even the least square loss in an  unbounded domain. We  significantly relax these restrictive assumptions  and establish privacy and generalization (utility) guarantees for private SGD algorithms using output and gradient perturbations  associated with non-smooth convex losses.   Specifically,  the loss function is relaxed to have an  $\alpha$-H\"{o}lder  continuous gradient (referred to as {\em $\alpha$-H\"{o}lder smoothness}) which instantiates the Lipschitz continuity ($\ga=0$) and  the strong smoothness ($\ga=1$). We prove that noisy  SGD with $\alpha$-H\"older smooth losses using gradient perturbation can guarantee $(\gep,\gd)$-differential privacy (DP) and attain optimal excess population risk $\O\Big(\frac{\sqrt{d\log(1/\delta)}}{n\epsilon}+\frac{1}{\sqrt{n}}\Big)$, up to logarithmic terms, with the gradient complexity  $ \O( n^{2-\ga\over 1+\ga}+ n).$ 
This shows an important trade-off  between $\alpha$-H\"older smoothness of the loss and the computational complexity for private SGD with statistically optimal  performance.  In particular, our results indicate that $\alpha$-H\"older smoothness  with $\ga\ge {1/2}$ is sufficient to guarantee $(\gep,\gd)$-DP of noisy SGD algorithms while achieving  optimal excess risk with the linear gradient complexity $  \O(n).$

\bigskip

\noindent{\bf Keywords:} Stochastic Gradient Descent , Algorithmic Stability,  Differential Privacy,  Generalization 
\end{abstract}

\parindent=0cm

\section{Introduction}
Stochastic gradient descent (SGD) algorithms  are widely employed to train a wide range of machine learning (ML) models such as SVM, logistic regression, and deep neural networks. It is an iterative algorithm which replaces the true gradient on the entire training data by a randomized gradient estimated from a random subset (mini-batch) of the available data. As opposed to gradient descent algorithms, this reduces the computational burden at each iteration trading for a lower convergence rate  \cite{bousquet2008tradeoffs}. There is a large amount of work considering the optimization error (convergence analysis) of SGD and its variants in the linear case \cite{Bach,johnson2013accelerating,lacoste2012simpler,rakhlin2012making,shamir2013stochastic} as well as the general setting of reproducing kernel Hilbert spaces  \cite{Dieuleveut,lin2016optimal,Orabona,ying2008online, ying2017unregularized, smale2006online}.

At the same time, data collected often contain sensitive information such as individual records from schools and hospitals, financial records for fraud detection, online behavior from social media and genomic data from cancer diagnosis. Modern ML algorithms can explore the fine-grained information about data in order to make a perfect prediction which, however, can lead to privacy leakage \cite{carlini2019secret,shokri2017membership}. To a large extent, SGD algorithms have become the workhorse behind  the remarkable progress of ML and AI. Therefore, it is of pivotal importance for developing privacy-preserving SGD algorithms to protect the privacy of the data.  Differential privacy (DP) \cite{dwork2009differential,dwork2014algorithmic} has emerged as a well-accepted mathematical definition of privacy which ensures that an attacker gets roughly the same information from the dataset regardless of whether an individual is present or not. Its related technologies have been adopted by Google \cite{google-DP}, Apple \cite{apple-DP}, Microsoft \cite{microsoft-DP} and the US Census Bureau \cite{us-census-bureau-DP}.

In this paper, we are concerned with differentially private SGD algorithms in the setting of stochastic convex optimization (SCO). Specifically, let the input space $\X$ be a domain in some Euclidean space,  the output space $\Y\subseteq \R$, and $\Z = \X \times \Y.$ Denote the loss function by $\ell: \R^d \times \Z \mapsto [0, \infty)$ and assume, for any $z\in \Z$,  that $\ell(\cdot, z)$ is  convex  with respect to (w.r.t.) the first argument.   SCO aims to minimize the expected (population) risk, i.e. $ \cR(\bw) := \E_z [\ell(\bw,z)]$, where the model parameter $\bw$ belongs to a  (not necessarily bounded)  domain $\W\subseteq \R^d$, and  the expectation is taken w.r.t. $z$  according to 
a population  distribution $\D.$  While the population distribution is usually unknown, we have access to {a finite set of $n$ training data points} denoted by $S  = \{z_i \in\Z: i = 1,2,\ldots, n\}.$ It is  assumed to be independently  and identically distributed (i.i.d.) according to the distribution $\D$ on $\Z. $  In this context, one often  considers SGD algorithms to solve the Empirical Risk Minimization (ERM) problem defined by 
\[
\min_{\bw\in \W} \Bigl\{\cR_S(\bw): = \frac{1}{n}\sum_{i=1}^n \ell(\bw, z_i) \Bigr\}.\] 

For a randomized algorithm (e.g., SGD) $\A$ to solve the above ERM problem, let $\A(S)$ be the output of algorithm $\A$ based on the dataset $S$.  Then, its statistical generalization performance is measured by the excess (population) risk, i.e., the discrepancy between the expected risk $\cR(\A(S))$ and the least possible one in $\W$, which is defined by
\[
\excess(\A(S)) = \cR(\A(S))  -  \min_{\bw\in \W} \cR(\bw).
\]
Along this line,  there are a considerable amount of work \cite{wu2017bolt,bassily2019private,feldman2020private}  on analyzing the excess risk of private SGD algorithms in the setting of SCO.   However, most of such approaches often require two assumptions:  1) the loss $\ell$ is   $L$-Lipschitz and $\gb$-smooth; 2) the domain $\W$ is   uniformly bounded.  
These assumptions are very restrictive as many popular losses violate these conditions including the hinge loss $(1-y\bw^Tx)^q_+$ for $q$-norm soft margin SVM and the $q$-norm loss $|y-\bw^Tx|^q$ in regression with $1\le q\le 2.$ More specifically,   the work \cite{wu2017bolt} assumed the loss to be Lipschitz continuous and strongly smooth and showed that the private SGD algorithm with output perturbation can achieve $(\gep, \gd)$-DP and an excess risk rate   $\O(\frac{(d\log(1/\delta))^{1/4}}{\sqrt{n\epsilon}})$ when the gradient complexity (i.e. the number of computing gradients) $T=n$. The study \cite{bassily2019private} proved, under the same assumptions,  that  the  private SGD algorithm with gradient perturbation  can achieve an optimal excess risk rate $\O\Big(\frac{\sqrt{d\log(1/\delta)}}{n\epsilon}+ \frac{1}{\sqrt{n}}\Big)$ while guaranteeing its $(\gep, \gd)$-DP. To deal with the non-smoothness,  it used the Moreau envelope technique to smooth the loss function and got the optimal rate. However, the algorithm is  computationally inefficient with a  gradient complexity $\O\Big(n^{4.5}\sqrt{\epsilon}+ \frac{n^{6.5}\epsilon^{4.5}}{(d \log(1/\delta))^2}\Big).$ The work \cite{feldman2020private} improved the gradient complexity of the algorithm to $\O(n^2\log(\frac{1}{\delta}))$ by localizing the approximate minimizer of the population loss on each phase. Recently,  \cite{bassily2020stability} showed that a simple variant of noisy projected SGD yields the optimal rate with gradient complexity $\O(n^2)$.  However, it only focused on the Lipschitz continuous losses and assumed that the parameter domain $\W$ is bounded.

Our main contribution is to significantly  relax these restrictive assumptions  and to prove both privacy and generalization (utility) guarantees for private SGD algorithms  with non-smooth convex losses in both bounded and unbounded domains. 
Specifically, the loss function $\ell(\bw, z)$ is relaxed to have an  $\alpha$-H\"{o}lder  continuous gradient w.r.t. the first argument, i.e.,  there exists $L>0$ such that, for any $\bw, \bw'\in \W$ and any $z\in \Z$,  
\[
\|\partial\ell(\bw,z) - \partial\ell(\bw',z)\|_2 \le L \|\bw - \bw'\|_2^\ga,\]
where $\|\cdot\|_2$ denotes the Euclidean norm,  $\partial \ell(\bw,z)$ denotes a subgradient of $\ell$ w.r.t. the first argument. 
For the sake of notional simplicity,  we refer to this condition as {\em $\alpha$-H\"{o}lder smoothness} with parameter $L$. 
The smoothness parameter $\alpha\in [0,1]$ characterizes the smoothness of the loss function $\ell(\cdot, z)$.  The case of $\alpha =0$ corresponds to  the Lipschitz continuity of the loss $\ell$ while  $\alpha=1$ means its  strong smoothness. {This definition instantiates many non-smooth loss functions mentioned above. For instance, the hinge loss for $q$-norm soft-margin SVM  and $q$-norm loss for regression mentioned above with $q\in [1,2]$ are $(q-1)$-H\"older smooth.}  In particular, we prove that noisy  SGD with $\alpha$-H\"older smooth losses using gradient perturbation can guarantee $(\gep,\gd)$-DP and attain the optimal excess population risk $\O\Big(\frac{\sqrt{d\log(1/\delta)}}{n\epsilon}+\frac{1}{\sqrt{n}}\Big)$, up to logarithmic terms, with gradient complexity  $ \O( n^{2-\ga\over 1+\ga}+ n).$  This shows an important trade-off  between $\alpha$-H\"older smoothness of the loss and the computational complexity for private SGD in order to achieve statistically optimal  performance.  In particular, our results indicate that $\alpha$-H\"older smoothness  with $\ga\ge {1/2}$ is sufficient to guarantee $(\gep,\gd)$-DP of noisy SGD algorithms while achieving the optimal excess risk with linear gradient complexity $  \O(n).$  Table 1 summarizes the upper bound of the excess population risk, gradient complexity of the aforementioned algorithms in comparison to our methods.

Our key idea to handle general  H\"older smooth losses is to establish the approximate non-expansiveness of the gradient mapping, and the refined boundedness of the iterates of  SGD algorithms when domain $\W$ is unbounded.  This allows us to show the uniform argument stability \cite{liu2017algorithmic} of the iterates of SGD algorithms with high probability w.r.t. the internal randomness of the algorithm (not w.r.t. the data $S$), and consequently estimate the generalization error of differentially private SGD with non-smooth losses.

\begin{table}[H]\label{table:summary}
{\small{
\begin{center}
\begin{tabular}{ |c|ccccc| } 
 \hline
 \multirow{2}{*}{\textbf{Reference} }& \multirow{2}{*}{\textbf{Loss}} & \multirow{2}{*}{\textbf{ Method}}  & \multirow{2}{*}{ \textbf{Utility bounds}} &  \multirow{2}{*}{ \textbf{Gradient Complexity}} & \multirow{2}{*}{ \textbf{Domain}}  \\
 &&&&&\\
 \hline
 \multirow{3}{*}{\cite{wu2017bolt}} & \multirow{2}{*}{Lipschitz} & \multirow{3}{*}{ \textit{Output}}  &  \multirow{3}{*}{$\O\Big(\frac{(d \log(\frac{1}{\delta}))^{\frac{1}{4}}}{\sqrt{n\epsilon}} \Big)$}&  \multirow{3}{*}{$\O\big( n \big)$}  &  \multirow{3}{*}{bounded}\\
 ~& \multirow{2}{*}{\& smooth} &&&&\\
  &&&&&\\
 \hline
 \multirow{6}{*}{\cite{bassily2019private}} & \multirow{2}{*}{Lipschitz} & \multirow{3}{*}{ \textit{Gradient}}  &  \multirow{3}{*}{$\O\Big(\frac{\sqrt{d \log(\frac{1}{\delta})}}{ n \epsilon}+ \frac{1}{\sqrt{n}}\Big)$}&  \multirow{3}{*}{$\O\Big( n^{1.5} \sqrt{\epsilon} + \frac{(n\epsilon)^{2.5}}{ d \log(\frac{1}{\delta})} \Big)$}  &  \multirow{3}{*}{bounded}\\
 ~&  \multirow{2}{*}{\& smooth} &&&& \\  
 &&&&&\\
 \cline{2-6}
  ~ & \multirow{3}{*}{ Lipschitz} & \multirow{3}{*}{ \textit{Gradient}} &  \multirow{3}{*}{$\O\Big(\frac{\sqrt{d \log(\frac{1}{\delta})}}{ n \epsilon}+ \frac{1}{\sqrt{n}}\Big)$}&  \multirow{3}{*}{$\O\Big( n^{4.5} \sqrt{\epsilon} + \frac{n^{6.5}\epsilon^{4.5}}{( d \log(\frac{1}{\delta}))^2} \Big)$}  &  \multirow{3}{*}{bounded} \\ 
  &&&&&\\
 &&&&&\\
 \hline
 \multirow{6}{*}{\cite{feldman2020private}} & \multirow{2}{*}{Lipschitz} & \multirow{3}{*}{\textit{Phased Output}} & \multirow{3}{*}{$\O\Big(\frac{\sqrt{d \log(\frac{1}{\delta})}}{ n \epsilon}+ \frac{1}{\sqrt{n}}\Big)$}&  \multirow{3}{*}{$\O\big(n  \big)$}  &  \multirow{3}{*}{bounded  } \\
 &  \multirow{2}{*}{\& smooth} &&&&  \\ 
 &&&&&\\
 \cline{2-6}
 ~ & \multirow{3}{*}{ Lipschitz}  & \multirow{3}{*}{\textit{Phased ERM}} & \multirow{3}{*}{$\O\Big(\frac{\sqrt{d \log(\frac{1}{\delta})}}{ n \epsilon}+ \frac{1}{\sqrt{n}}\Big)$}&  \multirow{3}{*}{$\O\big( n^2  \log(\frac{1}{\delta}) \big)$}  &  \multirow{3}{*}{ bounded  } \\ 
 &&&&&\\
 &&&&&\\
 \hline
 \multirow{3}{*}{\cite{bassily2020stability}} & \multirow{3}{*}{ Lipschitz} & \multirow{3}{*}{\textit{Gradient}}  &  \multirow{3}{*}{$\O\Big(\frac{\sqrt{d \log(\frac{1}{\delta})}}{ n \epsilon}+ \frac{1}{\sqrt{n}}\Big)$}&  \multirow{3}{*}{$\O\big( n^2 \big)$}  &  \multirow{3}{*}{ bounded } \\ 
 &&&&&\\
 &&&&&\\
 \hline
 \multirow{10}{*}{Ours} & \multirow{2}{*}{ $\alpha$-H\"older }  &  \multirow{3}{*}{\textit{Output}}&  \multirow{3}{*}{$\O\Big(  \frac{(d\log(\frac{1}{\delta}))^{\frac{1}{4}} \sqrt{\log(\frac{n}{\delta})} }{\sqrt{n \epsilon }}\Big)$} &  \multirow{3}{*}{$\O\big(n^{\frac{2-\alpha}{1+\alpha}} + n \big)$}&  \multirow{3}{*}{ bounded  } \\ 
 ~& \multirow{2}{*}{ smooth} &&&&\\
 &&&&&\\
 \cline{2-6}
 ~&\multirow{3}{*}{$\alpha$-H\"older  }&\multirow{4}{*}{\textit{Output}}&\multirow{4}{*}{$\O\Big(\frac{\sqrt{d\log(\frac{1}{\delta})} \log(\frac{n}{\delta}) }{ n^{\frac{2}{3+\alpha}} \epsilon  }+ \frac{ \log(\frac{n}{\delta}) }{n^{\frac{1}{3+\alpha}}} \Big)$}&\multirow{4}{*}{$\O\big(n^{\frac{-\alpha^2-3\alpha+6}{(1+\alpha)(3+\alpha)}}+n\big)$}&  \multirow{4}{*}{ unbounded  } \\
  ~& \multirow{3}{*}{ smooth} &&&&\\
  &&&&&\\
  &&&&&\\
 \cline{2-6}
 ~&\multirow{3}{*}{$\alpha$-H\"older }&\multirow{4}{*}{\textit{Gradient}}&\multirow{4}{*}{$\O\Big(\frac{\sqrt{d\log(\frac{1}{\delta}) }  }{n\epsilon} + \frac{ 1}{ \sqrt{n} }  \Big)$}&\multirow{4}{*}{$\O\big(n^{
\frac{2-\alpha}{1+\alpha}}+n\big)$}&  \multirow{4}{*}{ bounded  } \\
  ~& \multirow{3}{*}{ smooth} &&&&\\
  &&&&&\\
  &&&&&\\
 \hline 
\end{tabular}
\smallskip
\caption{Comparison of different $(\epsilon,\delta)$-DP algorithms.  We report the method, utility (generalization) bound, gradient complexity and parameter domain for three types of convex losses, i.e. Lipschitz, Lipschitz and smooth, and $\alpha$-H\"older smooth. Here \textit{Output}, \textit{Gradient}, \textit{Phased Output} and \textit{Phased ERM} denote output perturbation which adds Gaussian noise to the output of non-private SGD, gradient perturbation which adds Gaussian noise at each SGD update, phased output perturbation and phased ERM output perturbation \cite{feldman2020private}, respectively. The gradient complexity is the total number of computing the gradient on one datum in the algorithm.  }
\end{center}
}}\vspace*{-0.05in}
\end{table}

\smallskip 
\noindent \textbf{Organization of the Paper.} The rest of the paper is organized as follows.   The formulation of SGD algorithms and the main results are given in Section \ref{sec:formulation}. We provide the proofs in Section \ref{sec:proofs} and conclude the paper in Section \ref{sec:conclusion}.

\section{Problem Formulation and Main Results}\label{sec:formulation}
\subsection{Preliminaries}\label{sec:preliminaries}

Throughout the paper, we assume that the loss function $\ell: \W\times \Z\rightarrow \R$ is convex  w.r.t. the first argument, i.e., for any $ z  \in \Z$ and $\bw,\bw'\in\mathcal{W}$, there holds $ \ell(\bw, z)\ge \ell(\bw', z) + \langle \partial \ell(\bw', z) , \bw-\bw'  \rangle  $ where $\partial \ell(\bw', z)$ denotes a subgradient of $\ell(\cdot, z)$ in the first argument.  
We restrict our attention to the (projected) stochastic gradient descent algorithm which is defined as below. 
\begin{definition}[Stochastic Gradient Descent]\label{def-SGD-updaterule}

Let $\W \subseteq \R^d$ be convex,  {$T$ denote the number of iterations}, and $\proj_{\W}$ denote the projection to $\W$.
Let $\bw_1=\mathbf{0} \in \R^d$ be an initial point, and $\{\eta_t\}_{t=1}^{T-1}$ be a sequence of positive step sizes. At step $t\in \{1,\ldots, T-1\}$, the update rule of (projected) stochastic gradient decent   is given by 
\begin{equation}\label{eq:pro-SGD-updaterule}
\bw_{t+1}=\proj_{\W} \big(\bw_t-\eta_t \partial \ell(\bw_t,z_{i_t})\big),
\end{equation}
where $\{i_t\}$ is uniformly drawn from $[n]:=\{1,2,\ldots,n\}$.  
When $\W = \R^d$, then \eqref{eq:pro-SGD-updaterule} is reduced to
$
\bw_{t+1}=\bw_t-\eta_t \partial \ell(\bw_t,z_{i_t}).$
\end{definition}

For a randomized learning algorithm $\A: \Z^n \rightarrow \W$, let $\A(S)$ denote the  model produced by running $\A$ over the training dataset $S$. We say two datasets $S$ and $S'$ are  \textit{neighboring} datasets, denoted by $S\simeq S'$, if they differ by a single datum. {We consider the following high-probabilistic version of the uniform argument stability (UAS), 
which is an extension of the UAS in expectation \cite{liu2017algorithmic}. }
\begin{definition}[Uniform argument stability]\label{def:uniform-stability}
We say an algorithm $\A$ has $\Delta_\A$-UAS with probability at least $1-\gamma$ \big($\gamma\in(0,1)$\big) if
$$\mathbb{P}_\A (\sup_{S\simeq S'} \delta_\A(S,S')\ge \Delta_\A)\le \gamma,$$
where  
$ \delta_{\mathcal{\A}}(S,S'):= \|\mathcal{A}(S)-\mathcal{A}(S')\|_2.$
\end{definition}
{
We will use UAS to study generalization bounds with high probability.
In particular, 
the following lemma as a straightforward extension of Corollary 8 in \cite{bousquet2019sharper}  establishes the relationship between UAS and generalization errors. The proof is given in the Appendix for completeness.} \begin{lemma}\label{lem:generror-high-probability}
Suppose $\ell$ is nonnegative, convex and $\alpha$-H\"older smooth with parameter $L$.  Let $M_0=\sup_{z\in \Z} \ell(0,z)$ and $M=\sup_{z\in\Z} \|\partial \ell(0, z)\|_2 $.
Let $\A$ be a randomized algorithm with the output of $\A$ bounded by $G$ and
 $$\mathbb{P}_\A (\sup_{S\simeq S'}\delta_\A(S,S')\ge \Delta_\A)\le \gamma_0.$$  
Then there exists a constant $c>0$ such that for any distribution $\D$ over $\Z$ and any $\gamma \in (0,1)$, there holds
$$\mathbb{P}_{\S \sim \D^n, \A}\biggl[|\cR(\mathcal{A(S)})-\cR_S(\mathcal{A(S)})|\ge c\bigg((M+ LG^\alpha  ) \Delta_\A \log(n)\log(1/{\gamma})+\big( M_0+ (M+LG^\alpha)G\big)   \sqrt{n^{-1}\log(1/\gamma)} \bigg)\biggr] \le \gamma_0+\gamma.$$
\end{lemma}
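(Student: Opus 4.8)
The plan is to reduce Lemma~\ref{lem:generror-high-probability} to the general-purpose high-probability generalization bound of \cite{bousquet2019sharper} (their Corollary~8), which applies to algorithms that are uniformly argument-stable, together with a uniform Lipschitz bound on the loss over the range of the algorithm. The two quantities controlling the bound in \cite{bousquet2019sharper} are: (i) a Lipschitz constant $\tilde{L}$ such that $|\ell(\bw,z)-\ell(\bw',z)|\le \tilde{L}\|\bw-\bw'\|_2$ for all $\bw,\bw'$ in the relevant domain and all $z$; and (ii) a uniform bound $B$ on $\ell(\A(S),z)$, i.e.\ an envelope for the loss evaluated at the algorithm's output. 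The first step, therefore, is to translate the hypotheses ``$\alpha$-H\"older smooth with parameter $L$'', ``$M=\sup_z\|\partial\ell(0,z)\|_2$'', ``$M_0=\sup_z\ell(0,z)$'', and ``output of $\A$ bounded by $G$'' into explicit values of $\tilde{L}$ and $B$.

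For step (i): by $\alpha$-H\"older smoothness, for any $\bw$ with $\|\bw\|_2\le G$ and any $z$, the triangle inequality gives $\|\partial\ell(\bw,z)\|_2 \le \|\partial\ell(0,z)\|_2 + L\|\bw-0\|_2^\alpha \le M + LG^\alpha$. Since $\ell(\cdot,z)$ is convex, a bound on subgradients on the ball of radius $G$ yields that $\ell(\cdot,z)$ is $(M+LG^\alpha)$-Lipschitz on that ball; hence we take $\tilde{L}=M+LG^\alpha$, which is exactly the factor appearing in front of $\Delta_\A$ in the claimed bound. For step (ii): for $\|\bw\|_2\le G$, convexity and the subgradient inequality give $\ell(\bw,z)\le \ell(0,z) + \langle\partial\ell(\bw,z),\bw\rangle \le M_0 + (M+LG^\alpha)G$ (using Cauchy--Schwarz and the subgradient bound just derived), so we take $B = M_0 + (M+LG^\alpha)G$, matching the term multiplying $\sqrt{n^{-1}\log(1/\gamma)}$.

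With $\tilde{L}$ and $B$ in hand, I would invoke Corollary~8 of \cite{bousquet2019sharper}: on the event (of $\A$-probability at least $1-\gamma_0$) that $\sup_{S\simeq S'}\delta_\A(S,S')\le\Delta_\A$, their result states that with probability at least $1-\gamma$ over $S\sim\D^n$ (and the algorithm's randomness), $|\cR(\A(S))-\cR_S(\A(S))| \le c(\tilde{L}\Delta_\A\log(n)\log(1/\gamma) + B\sqrt{n^{-1}\log(1/\gamma)})$ for an absolute constant $c$. A union bound over the failure event for UAS (probability $\gamma_0$) and the failure event for the concentration inequality (probability $\gamma$) then yields the stated conclusion with total failure probability at most $\gamma_0+\gamma$. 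One technical point to be careful about is that the bound from \cite{bousquet2019sharper} is stated for deterministic algorithms or for the stability notion in expectation; here we need the high-probability UAS version, so I would note that conditioning on the good event for the internal randomness makes the argument map of $\A$ effectively $\Delta_\A$-uniformly stable in the classical sense, and then the deterministic-style bound applies conditionally — integrating out the randomness over the good event preserves the high-probability statement.

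The main obstacle is the last point: ensuring that the transfer from ``UAS holds with high probability over the internal randomness'' to ``the generalization bound of \cite{bousquet2019sharper} applies'' is rigorous, since that reference's hypothesis is a uniform (worst-case over all coin flips, or in-expectation) stability condition rather than a high-probability one. The cleanest route is to condition on the randomness $r$ of $\A$; on the event $E=\{r:\sup_{S\simeq S'}\|\A_r(S)-\A_r(S')\|_2\le\Delta_\A\}$, the deterministic map $S\mapsto\A_r(S)$ is $\Delta_\A$-uniformly argument stable, so Corollary~8 gives the concentration bound with probability $\ge1-\gamma$ over $S$ for each such fixed $r\in E$; averaging over $r\in E$ and then accounting for $\Prob_\A(E^c)\le\gamma_0$ gives the claim. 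Everything else — deriving $\tilde{L}$ and $B$ from the H\"older-smoothness hypotheses — is routine convex-analysis bookkeeping of the kind sketched above.
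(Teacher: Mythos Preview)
Your proposal is correct and follows essentially the same approach as the paper: derive the Lipschitz constant $M+LG^\alpha$ and the envelope $M_0+(M+LG^\alpha)G$ from convexity and $\alpha$-H\"older smoothness, apply Corollary~8 of \cite{bousquet2019sharper} on the good event where the UAS bound holds, and finish with a union bound over the $\gamma_0$-failure event for stability. The paper packages the first part as a separate auxiliary lemma (stated for algorithms with deterministic $\Delta_\A$-UAS) and then performs the event decomposition $\Prob[E_2]\le \Prob[E_1]+\Prob[E_2\cap E_1^c]$, which is exactly your conditioning-on-the-good-event argument.
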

Differential privacy  \cite{dwork2006calibrating} is a \textit{de facto} standard privacy measure for a randomized algorithm $\A.$ 
\begin{definition}[Differential Privacy]\label{def:DP}
We say a randomized algorithm $\A$ satisfies $(\gep, \delta)$-DP if, for any two neighboring datasets $S$ and $S'$  and any event $E$ in the output space of $\A$, there holds 
\[
\mathbb{P}(\A (S)\in E) \le e^\gep \mathbb{P}(\A(S')\in E)+ \gd.
\]
In particular,  we call it satisfies $\gep$-DP if $\gd=0$. 
\end{definition} 
We also need the following concept called  $\ell_2$-sensitivity.
\begin{definition}[$\ell_2$-sensitivity]\label{def:sensitivity}
The $\ell_2$-sensitivity of a function (mechanism) $\mathcal{M}:\Z^n \rightarrow \W$ is defined as 
$
\gD  = \sup_{S\simeq S'} \|\mathcal{M}(S) - \mathcal{M}(S')\|_2,
$ where $S$ and $S'$ are neighboring datasets. 
\end{definition}

A basic mechanism to obtain $(\gep, \delta)$-DP from  a given function  $\mathcal{M}:\Z^n \rightarrow \W$ is to add a random  noise from a Gaussian distribution $\mathcal{N}(0, \sigma^2\mathbf{I}_d )$ where $\sigma$ is proportional to its $\ell_2$-sensitivity. This mechanism is often referred to as Gaussian mechanism as stated in the following lemma. 

\begin{lemma}[\cite{dwork2014algorithmic}]\label{lem:gaussian-noise}
Given a  function $\mathcal{M}: \Z^n \rightarrow \W$ with the $\ell_2$-sensitivity $\gD $ and a dataset $S\subset \Z^n$, and assume that $\sigma \geq \frac{\sqrt{2\log(1.25/\delta)}\Delta }{\epsilon}$. The following Gaussian mechanism yields $(\gep, \gd)$-DP: 
\[\label{eq:gausian-mech}
\mathcal{G}(S,\sigma) := \mathcal{M}(S) + \bb, ~~\bb \sim \mathcal{N}(0, \sigma^2\mathbf{I}_d), 
\]
where $\mathbf{I}_d$ is the identity matrix in $\mathbb{R}^{d\times d}$.  
\end{lemma}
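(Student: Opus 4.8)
The plan is to follow the classical privacy-loss analysis of the Gaussian mechanism. Fix any pair of neighboring datasets $S \simeq S'$ and write $\mu = \mathcal{M}(S)$ and $\mu' = \mathcal{M}(S')$, so that by the definition of $\ell_2$-sensitivity we have $\norm{\mu - \mu'}_2 =: \gD' \le \gD$. Under the Gaussian mechanism the output $\mathcal{G}(S,\gs)$ is distributed as $\mathcal{N}(\mu, \gs^2 \mathbf{I}_d)$ with density $p_S$, and $\mathcal{G}(S',\gs)$ as $\mathcal{N}(\mu', \gs^2 \mathbf{I}_d)$ with density $p_{S'}$. The central object is the \emph{privacy-loss} random variable $L(x) = \ln\big(p_S(x)/p_{S'}(x)\big)$, and the whole argument reduces to controlling its upper tail under $p_S$.

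First I would record the reduction from the DP inequality to a tail bound on $L$. Setting the bad set $B = \{x : L(x) > \gep\}$, for any event $E$ in the output space one splits $\mathbb{P}(\mathcal{G}(S,\gs) \in E)$ over $E \cap B$ and $E \setminus B$: on the good part $E \setminus B$ one has $p_S(x) \le e^{\gep} p_{S'}(x)$ pointwise, which yields the term $e^{\gep}\,\mathbb{P}(\mathcal{G}(S',\gs) \in E)$, while the contribution of the bad part is at most $\mathbb{P}(\mathcal{G}(S,\gs) \in B) = \mathbb{P}_{x \sim p_S}[L(x) > \gep]$. Hence it suffices to prove $\mathbb{P}_{x \sim p_S}[L(x) > \gep] \le \gd$ for every neighboring pair, and by symmetry of the roles of $S$ and $S'$ the same estimate handles the reverse inequality.

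Next I would compute the law of $L$. Writing $x = \mu + \bb$ with $\bb \sim \mathcal{N}(0, \gs^2\mathbf{I}_d)$, a direct expansion of the two squared norms gives $L(x) = \big(2\inner{\bb, \mu - \mu'} + \norm{\mu-\mu'}_2^2\big)/(2\gs^2)$, so $L$ is a one-dimensional Gaussian with mean $\gD'^2/(2\gs^2)$ and variance $\gD'^2/\gs^2$. By the rotational invariance of the isotropic Gaussian only the component of $\bb$ along $\mu-\mu'$ enters, so the problem is genuinely univariate and depends on the neighbors only through $\gD'$. Writing the tail as $\mathbb{P}[L > \gep] = \mathbb{P}\big[Z > \tfrac{\gs\gep}{\gD'} - \tfrac{\gD'}{2\gs}\big]$ with $Z \sim \mathcal{N}(0,1)$, one checks that the threshold $t = \tfrac{\gs\gep}{\gD'} - \tfrac{\gD'}{2\gs}$ is decreasing in $\gD'$, hence the tail is increasing in $\gD'$ and it suffices to treat the worst case $\gD' = \gD$.

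The remaining and most delicate step is the constant-sharp tail estimate. Substituting the hypothesis $\gs = \sqrt{2\log(1.25/\gd)}\,\gD/\gep$ (larger $\gs$ only decreases the tail) into $t$ and applying the standard bound $\mathbb{P}[Z > t] \le (t\sqrt{2\pi})^{-1} e^{-t^2/2}$ for $t>0$, one expands $t^2/2 = \log(1.25/\gd) - \gep/2 + \gep^2/\big(16\log(1.25/\gd)\big)$ and verifies that the surviving prefactor $(t\sqrt{2\pi})^{-1}$ together with the factor $e^{\gep/2}$ is absorbed by the constant $1.25$, leaving $\mathbb{P}[Z > t] \le \gd$. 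This is where the precise numerical constant $1.25$ originates and where I expect the main obstacle to lie: the bound is not loose, so the algebra must be done carefully, under the usual normalization $\gep \le 1$ and with $\gd$ small enough that $t$ is bounded away from zero, to guarantee that the leftover prefactor and positive cross term do not overshoot $\gd$. Combining this tail estimate with the reduction of the second paragraph yields $(\gep,\gd)$-DP for all neighboring datasets and completes the proof.
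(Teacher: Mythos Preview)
The paper does not prove this lemma; it is quoted verbatim as a known result from \cite{dwork2014algorithmic} and used as a black box. Your proposal reconstructs precisely the standard argument from that reference: reduce the $(\gep,\gd)$-DP inequality to a tail bound on the privacy-loss random variable, identify that variable as a one-dimensional Gaussian via rotational invariance, and then control the tail with the Mills-ratio bound $\mathbb{P}[Z>t]\le (t\sqrt{2\pi})^{-1}e^{-t^2/2}$. This is correct and is exactly the approach in the cited source, so there is nothing to compare against within the present paper.

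One small caveat worth making explicit in your write-up: the constant $1.25$ only closes under the side condition $\gep\in(0,1)$ (and $\gd$ small enough that $t>0$), which you allude to but do not state as a hypothesis. The lemma as quoted in the paper omits this restriction, but the original result in \cite{dwork2014algorithmic} carries it, and your final inequality $(t\sqrt{2\pi})^{-1}e^{\gep/2}\le 1.25$ genuinely needs it.
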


Although the concept of $(\epsilon,\delta)$-DP is widely used in privacy-preserving methods, its composition and subsampling amplification results are relatively loose, which are not suitable for  iterative SGD algorithms. Based on the \renyi \ divergence, the work \cite{mironov2017renyi} proposed  \renyi \  differential privacy (RDP) as a relaxation of DP to achieve tighter analysis of composition and amplification mechanisms. 
{
\begin{definition}[RDP \cite{mironov2017renyi}]\label{def:RDP}
For $\lambda > 1$, $\rho > 0$, a randomized mechanism $\A$ satisfies $(\lambda, \rho)$-RDP, if,  for all neighboring datasets $S$ and $S'$, we have 
     $$ D_{\lambda}\big(\A(S)\parallel \A(S')\big):= \frac{1}{\lambda-1}\log \int  \Big( \frac{ P_{\A(S)}(\theta) }{ P_{\A(S')}(\theta) }  \Big)^\lambda    d P_{\A(S')}(\theta) \le \rho,$$
    where $P_{\A(S)}(\theta)$ and $P_{\A(S')}(\theta)$ are the density of $\A(S) $ and $\A(S')$, respectively. 
\end{definition}
}
 
{As $\lambda \rightarrow \infty$, RDP reduces to $\epsilon$-DP, i.e., $\A$ satisfies $\epsilon$-DP if and only if $D_{\infty}\big(\A(S)||\A(S')\big)\le \epsilon$ for any neighboring datasets $S$ and $S'$. 
Our analysis requires the introduction of several lemmas on useful properties of RDP listed below. 
}

{First, we introduce the privacy amplification of RDP by uniform subsampling, which  is  fundamental to establish privacy guarantees of noisy SGD algorithms. In general, a uniform subsampling scheme first draws a subset with size $pn$ uniformly at random with a subsampling rate $p \le 1$,  and then applies a known randomized mechanism to the subset. 
}
\begin{lemma}[\cite{liang2020exploring}]\label{lem:uniform} Consider a function $\mathcal{M}:  \Z^n\rightarrow \W$ with the $\ell_2$-sensitivity $\Delta$,  and a dataset $S\subset\Z^n$.  
{The Gaussian mechanism $\mathcal{G}(S,\sigma)=\mathcal{M}(S)+\bb$, where $\bb\sim \mathcal{N}(0,\sigma^2\mathbf{I}_d)$, } applied to  a subset of samples that are drawn uniformly without replacement with subsampling rate $p$ satisfies 
$(\lambda,3.5p^2\lambda \Delta^2/\sigma^2)$-RDP  if $\sigma^2\geq 0.67 \Delta^2$ and $\lambda-1 \leq \frac{2\sigma^2}{3\Delta^2} \log \big(\frac{1}{\lambda p (1+ \sigma^2/\Delta^2)} \big)$. 
\end{lemma}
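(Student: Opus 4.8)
The plan is to follow the now-standard route for analyzing the subsampled Gaussian mechanism under \renyi\ differential privacy: reduce the divergence on neighboring datasets to a one-dimensional two-Gaussian mixture, expand the relevant moment quantity in powers of the subsampling rate $p$, and show that under the two stated hypotheses this expansion is dominated by its term quadratic in $p$. The sharp constants $3.5$ and $0.67$ are exactly the ones \cite{liang2020exploring} obtains; I describe how they arise.

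\textbf{Step 1: reduction to a one-dimensional mixture.} I would first fix neighboring $S\simeq S'$ differing only at an index $i^\ast$. Conditioning on whether the size-$pn$ uniform subsample contains $i^\ast$ (which happens with probability exactly $p$, after which the remainder is again a uniform subsample), the law of $\mathcal{G}$ run on the subsample of $S$ is a mixture $(1-p)R+pA$ and that of $S'$ is $(1-p)R+pA'$, where $R$ is the common Gaussian-convolved law over subsets avoiding $i^\ast$, and $A,A'$ are the Gaussian-convolved laws over subsets containing $i^\ast$, with paired Gaussians in $A$ and $A'$ having means at Euclidean distance at most $\Delta$. Using the joint quasi-convexity and the advanced joint convexity of the \renyi\ divergence (in the spirit of Bun--Steinke and Wang--Balle--Kasiviswanathan) together with the translation invariance of $\mathcal{N}(\mathbf{0},\sigma^2\mathbf{I}_d)$ --- so that only the component of the mean shift along the shift direction matters --- this reduces $D_\lambda(\mathcal{G}(S,\sigma)\|\mathcal{G}(S',\sigma))$ to the one-dimensional worst case
\[
\max\Big\{D_\lambda\big((1-p)\mu_0+p\mu_1\,\big\|\,\mu_0\big),\ D_\lambda\big(\mu_0\,\big\|\,(1-p)\mu_0+p\mu_1\big)\Big\},
\]
with $\mu_0=\mathcal{N}(0,\sigma^2)$ and $\mu_1=\mathcal{N}(\Delta,\sigma^2)$ on $\mathbb{R}$.

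\textbf{Step 2: moment expansion.} Writing $r(x)=\mathrm{d}\mu_1/\mathrm{d}\mu_0(x)=\exp(\Delta x/\sigma^2-\Delta^2/(2\sigma^2))\ge 0$, the first term satisfies
\[
\exp\!\big((\lambda-1)D_\lambda((1-p)\mu_0+p\mu_1\|\mu_0)\big)=\mathbb{E}_{x\sim\mu_0}\big[(1-p+pr(x))^\lambda\big]=\sum_{k\ge0}\binom{\lambda}{k}p^k\,\mathbb{E}_{\mu_0}\big[(r-1)^k\big],
\]
where for non-integer $\lambda$ one must justify the generalized binomial series, which I would do by applying it on the region $\{p(r(x)-1)<1\}$ and bounding the complementary far tail of $\mu_0$ directly. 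The $k=0$ term is $1$, the $k=1$ term vanishes because $\mathbb{E}_{\mu_0}[r]=1$, and the $k=2$ term equals $\binom{\lambda}{2}p^2(\mathbb{E}_{\mu_0}[r^2]-1)=\binom{\lambda}{2}p^2(e^{\Delta^2/\sigma^2}-1)$, using the closed form $\mathbb{E}_{\mu_0}[r^j]=e^{j(j-1)\Delta^2/(2\sigma^2)}$.

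\textbf{Step 3: tail control (the main obstacle) and conclusion.} The genuinely delicate part is bounding $\sum_{k\ge3}\binom{\lambda}{k}p^k\,\mathbb{E}_{\mu_0}[(r-1)^k]$: expanding $(r-1)^k$ and using the closed form for $\mathbb{E}_{\mu_0}[r^j]$, one estimates the $k$-th term and shows the series is geometric and sums to at most a constant times the $k=2$ term. This is exactly where the two hypotheses are needed. The condition $\sigma^2\ge 0.67\Delta^2$ keeps $\Delta^2/\sigma^2$ bounded (below $\approx 1.49$), hence keeps $e^{\Delta^2/\sigma^2}-1$ and the moment growth controlled, so that $e^{\Delta^2/\sigma^2}-1\le c\,\Delta^2/\sigma^2$ with a manageable $c$; and $\lambda-1\le\frac{2\sigma^2}{3\Delta^2}\log\!\big(\frac{1}{\lambda p(1+\sigma^2/\Delta^2)}\big)$ is precisely the condition making the ratio of consecutive tail terms strictly below $1$ (morally, $\lambda p\,e^{O(\Delta^2/\sigma^2)}<1$), so the tail converges and is absorbed into the quadratic term. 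Collecting everything gives a bound of the form $\mathbb{E}_{\mu_0}[(1-p+pr)^\lambda]\le 1+3.5\,\lambda(\lambda-1)\,p^2\Delta^2/\sigma^2$, and then $\log(1+u)\le u$ yields
\[
D_\lambda\big((1-p)\mu_0+p\mu_1\,\|\,\mu_0\big)=\frac{1}{\lambda-1}\log\mathbb{E}_{\mu_0}\big[(1-p+pr)^\lambda\big]\le 3.5\,p^2\lambda\,\frac{\Delta^2}{\sigma^2}.
\]
The reverse ordering $D_\lambda(\mu_0\|(1-p)\mu_0+p\mu_1)$ is handled by the same expansion and obeys the same bound, so combining with Step 1 the subsampled mechanism $\mathcal{G}(\cdot,\sigma)$ satisfies $(\lambda,3.5p^2\lambda\Delta^2/\sigma^2)$-RDP, as claimed. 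Besides Step 3, the only other nontrivial ingredient is the joint-convexity-based reduction in Step 1, but that is off-the-shelf.
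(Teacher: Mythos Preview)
The paper does not prove this lemma at all: it is stated as a cited result from \cite{liang2020exploring} and used as a black box in the privacy analysis of Algorithm~\ref{alg2}. So there is no ``paper's own proof'' to compare against.

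That said, your proposal is the standard route for subsampled-Gaussian RDP bounds and is essentially how \cite{liang2020exploring} (and the earlier Wang--Balle--Kasiviswanathan line) proceeds: reduce to a one-dimensional two-Gaussian mixture via joint convexity, expand the $\lambda$-moment of the likelihood ratio in powers of $p$, note the $k=1$ term vanishes, and show the $k\ge 3$ tail is dominated by the quadratic term under exactly the two stated hypotheses. Your identification of where each hypothesis enters is correct: $\sigma^2\ge 0.67\Delta^2$ controls the base of the geometric moment growth, and the logarithmic constraint on $\lambda-1$ forces the tail ratio below one. One small caution: in Step~1 you should be explicit that uniform subsampling without replacement gives inclusion probability exactly $p=\frac{pn}{n}$ for the differing index, and that after conditioning on inclusion the remaining $pn-1$ draws from the other $n-1$ points couple across $S$ and $S'$; this is what makes the mixture decomposition clean. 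Also, for the ``reverse ordering'' you mention at the end of Step~3, note this is not the same expansion symbolically (the base measure changes), so one typically handles it either by a separate but parallel moment computation or by invoking the known fact that for Gaussian mixtures of this form the forward direction dominates in the relevant regime; you should flag which route you take rather than assert ``same expansion''.
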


{ 
The following adaptive composition theorem of RDP establishes the privacy of a composition of several adaptive mechanisms in terms of that of individual mechanisms. We say a sequence of mechanisms $ (\A_1,\ldots,\A_k)  $ are chosen adaptively if $\A_i$ can be chosen based on the outputs of the previous mechanisms $\A_1(S),\ldots,\A_{i-1}(S)$ for any $i\in[k]$.  

\begin{lemma}[Adaptive Composition of RDP \cite{mironov2017renyi}]\label{lem:composition_RDP}
If a mechanism $\A$ consists of a sequence of  adaptive mechanisms $(\A_1,\ldots,\A_k)$ with $\A_i$ satisfying  $(\lambda, \rho_i)$-RDP, $i\in[k]$, then $\A$ satisfies $(\lambda, \sum_{i=1}^k \rho_i)$-RDP. 
\end{lemma}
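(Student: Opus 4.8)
The plan is to reduce the $k$-fold statement to the two-fold case by induction and to prove the two-fold composition directly from the definition of the \renyi\ divergence in Definition~\ref{def:RDP}. Showing that $\A$ satisfies $(\lambda,\sum_i\rho_i)$-RDP amounts to proving $D_\lambda(\A(S)\|\A(S'))\le\sum_{i=1}^k\rho_i$ for every pair of neighboring datasets $S\simeq S'$, which we fix once and for all.

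\emph{Two-fold composition.} Write the composed mechanism as $\A(S)=\bigl(\A_1(S),\A_2(S,\A_1(S))\bigr)$, where $\A_2(\cdot,y_1)$ may depend on the output $y_1$ of $\A_1$. Factor the joint densities as $P_{\A(S)}(y_1,y_2)=P_{\A_1(S)}(y_1)\,P_{\A_2(S,y_1)}(y_2)$ and likewise for $S'$. Using Tonelli's theorem to split the double integral appearing in the definition of $D_\lambda$,
\[
e^{(\lambda-1)D_\lambda(\A(S)\|\A(S'))}=\int\frac{P_{\A_1(S)}(y_1)^{\lambda}}{P_{\A_1(S')}(y_1)^{\lambda-1}}\left(\int\frac{P_{\A_2(S,y_1)}(y_2)^{\lambda}}{P_{\A_2(S',y_1)}(y_2)^{\lambda-1}}\,dy_2\right)dy_1 .
\]
For each fixed $y_1$ the inner integral equals $e^{(\lambda-1)D_\lambda(\A_2(S,y_1)\|\A_2(S',y_1))}\le e^{(\lambda-1)\rho_2}$ by the $(\lambda,\rho_2)$-RDP of $\A_2$; pulling this constant out and applying the $(\lambda,\rho_1)$-RDP of $\A_1$ to the remaining outer integral yields the bound $e^{(\lambda-1)(\rho_1+\rho_2)}$. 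Taking logarithms and dividing by $\lambda-1>0$ gives $D_\lambda(\A(S)\|\A(S'))\le\rho_1+\rho_2$.

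\emph{Induction.} Regard $\A_{1:k-1}:=(\A_1,\dots,\A_{k-1})$ as a single mechanism; by the inductive hypothesis it satisfies $(\lambda,\sum_{i=1}^{k-1}\rho_i)$-RDP, and $\A_k$ — with the entire transcript of $\A_{1:k-1}$ as auxiliary input — satisfies $(\lambda,\rho_k)$-RDP, so the two-fold bound applied to this pair delivers $(\lambda,\sum_{i=1}^{k}\rho_i)$-RDP for $\A$.

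The step requiring the most care is the adaptivity: the bound on the inner integral must hold for \emph{every} realization $y_1$ of the earlier outputs, i.e., the $(\lambda,\rho_i)$-RDP guarantee of each $\A_i$ must be uniform over its auxiliary input — this is exactly what "chosen adaptively" encodes, and it is what legitimizes the factorization. Two minor technical points: one should assume, as in Definition~\ref{def:RDP}, that the output distributions admit densities that are mutually absolutely continuous so the ratios are well defined (otherwise $D_\lambda=\infty$ and the claim is vacuous), and the boundary case $\lambda=\infty$ — corresponding to $\epsilon$-DP — is handled either by replacing integrals with essential suprema in the same computation or by letting $\lambda\to\infty$ in the finite-$\lambda$ bound.
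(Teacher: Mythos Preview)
Your proof is correct and is precisely the standard argument: factor the joint density, bound the inner integral uniformly in the conditioning variable using the RDP guarantee of $\A_2$, then bound the outer integral via the RDP of $\A_1$, and iterate by induction. The paper itself does not supply a proof of this lemma---it is quoted from \cite{mironov2017renyi} (Proposition~1 there), whose proof is exactly the one you wrote.
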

}
{ 
Lemme \ref{lem:composition_RDP} tells us that  the derivation of the privacy guarantee for a composition mechanism is simple and direct.   This is the underlying reason that we adopt RDP in our subsequent privacy analysis. The following lemma allows us to further convert RDP back to  $(\epsilon,\delta)$-DP.}
\begin{lemma}[From RDP to $(\epsilon,\delta)$-DP \cite{mironov2017renyi}]\label{lemma:RDP_to_DP}
	If a randomized mechanism $\mathcal{A}$ satisfies $(\lambda,\rho)$-RDP, then $\mathcal{A}$ satisfies $(\rho+\log(1/\delta)/(\lambda-1),\delta)$-DP for all $\delta\in(0,1)$.
\end{lemma}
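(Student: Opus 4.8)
This is the standard conversion from \renyi\ differential privacy to $(\epsilon,\delta)$-DP, and the plan is to run a Chernoff-bound argument on the \emph{privacy loss random variable}. First I would fix an arbitrary pair of neighboring datasets $S\simeq S'$ and write $P$ and $Q$ for the densities of $\mathcal{A}(S)$ and $\mathcal{A}(S')$, setting $Z:=\log\big(P(\theta)/Q(\theta)\big)$ with $\theta$ drawn according to $P$. The first step is the elementary reduction that, for any measurable event $E$ in the output space and any $\epsilon>0$, with $B:=\{\theta:P(\theta)>e^{\epsilon}Q(\theta)\}=\{Z>\epsilon\}$,
$$\mathbb{P}(\mathcal{A}(S)\in E)-e^{\epsilon}\,\mathbb{P}(\mathcal{A}(S')\in E)=\int_{E}\big(P(\theta)-e^{\epsilon}Q(\theta)\big)\,d\theta\le\int_{B}P(\theta)\,d\theta=\mathbb{P}(Z>\epsilon),$$
where we used that the integrand is nonpositive off $B$ and that $P-e^\epsilon Q\le P$ on $B$. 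So it suffices to pick $\epsilon$ with $\mathbb{P}(Z>\epsilon)\le\delta$.

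The second step converts the RDP hypothesis into a bound on an exponential moment of $Z$. By the definition of the \renyi\ divergence, $(\lambda,\rho)$-RDP yields $\mathbb{E}_{\theta\sim Q}\big[(P(\theta)/Q(\theta))^{\lambda}\big]=\exp\big((\lambda-1)D_\lambda(\mathcal{A}(S)\|\mathcal{A}(S'))\big)\le e^{(\lambda-1)\rho}$, and a change of measure from $Q$ to $P$ rewrites the left-hand side as $\mathbb{E}_{\theta\sim P}\big[(P(\theta)/Q(\theta))^{\lambda-1}\big]=\mathbb{E}\big[e^{(\lambda-1)Z}\big]$. Hence $\mathbb{E}\big[e^{(\lambda-1)Z}\big]\le e^{(\lambda-1)\rho}$, and Markov's inequality applied to the nonnegative variable $e^{(\lambda-1)Z}$ (using $\lambda>1$) gives
$$\mathbb{P}(Z>\epsilon)=\mathbb{P}\big(e^{(\lambda-1)Z}>e^{(\lambda-1)\epsilon}\big)\le e^{-(\lambda-1)\epsilon}\,\mathbb{E}\big[e^{(\lambda-1)Z}\big]\le e^{(\lambda-1)(\rho-\epsilon)}.$$
Choosing $\epsilon=\rho+\log(1/\delta)/(\lambda-1)$ makes the last quantity exactly $\delta$; combined with the first step this gives $\mathbb{P}(\mathcal{A}(S)\in E)\le e^{\epsilon}\mathbb{P}(\mathcal{A}(S')\in E)+\delta$ for every event $E$, and since $S\simeq S'$ was arbitrary (and the RDP hypothesis is symmetric in the ordered pair), $\mathcal{A}$ is $(\epsilon,\delta)$-DP.

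I do not expect a genuine obstacle: once the privacy loss variable is isolated, this is a textbook Chernoff estimate, and the value of $\epsilon$ falls out by equating the exponent to $\log\delta$. The only points needing mild care are measure-theoretic bookkeeping — one should assume (or reduce to the case) that $\mathcal{A}(S)$ is absolutely continuous with respect to $\mathcal{A}(S')$ so that the density ratio and the change of measure are meaningful, noting that otherwise $D_\lambda=\infty$ and the RDP bound is vacuous — together with the observation that the crude bound $\int_B(P-e^\epsilon Q)\le\int_B P$ already suffices, so no finer estimate is required.
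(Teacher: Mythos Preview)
Your argument is correct and is essentially the standard proof of this conversion (the privacy-loss Chernoff bound from Mironov's paper). Note, however, that the paper does not supply its own proof of this lemma: it is imported verbatim from \cite{mironov2017renyi} and simply invoked in the proof of Theorem~\ref{thm:dpSGD-gradient-Holder}, so there is nothing in the paper to compare against beyond the citation.
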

{
The following lemma shows that a post-processing procedure always preserves   privacy.
\begin{lemma}[Post-processing \cite{mironov2017renyi}]\label{lemma:post-processing}
Let  $\A: \Z^n \rightarrow \W_1 $  satisfy $(\lambda, \rho)$-RDP  and $f: \W_1 \rightarrow \W_2$ be an arbitrary function. Then $f \circ \A : \Z^n \rightarrow \W_2$ satisfies $(\lambda, \rho)$-RDP.     
\end{lemma}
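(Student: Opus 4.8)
The claim is the post-processing (data-processing) property of the \renyi\ divergence $D_\lambda$, applied to the two output distributions coming from a pair of neighboring datasets. So the plan is: fix neighboring $S\simeq S'$, write $P$ and $Q$ for the laws of $\A(S)$ and $\A(S')$ on $\W_1$ (by hypothesis $D_\lambda(P\|Q)\le\rho$), and let $\mu=f_\#P$, $\nu=f_\#Q$ be the pushforwards, which are exactly the laws of $f\circ\A(S)$ and $f\circ\A(S')$ on $\W_2$. Since the neighboring pair is arbitrary, it is enough to prove the single inequality $D_\lambda(\mu\|\nu)\le D_\lambda(P\|Q)$. If the right-hand side is $\infty$ there is nothing to do, so I would assume it finite, which forces $P\ll Q$ and hence $\mu\ll\nu$; I would also fix a common $\sigma$-finite dominating measure for $P,Q$ at the outset so as not to rely on densities against Lebesgue measure (the paper's definition is stated with densities, but the pushforward need not have one).

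The core step is to identify the likelihood ratio of the pushforwards with a conditional expectation of the original ratio, and then apply Jensen's inequality. Writing $r:=dP/dQ$, a disintegration of $Q$ along $f$ (regular conditional probabilities of $X\sim Q$ given $f(X)$) gives
\[
\widetilde r(y):=\frac{d\mu}{d\nu}(y)=\EX_{X\sim Q}\bigl[\,r(X)\,\big|\,f(X)=y\,\bigr]\qquad(\nu\text{-a.e.}).
\]
Since $\lambda>1$, the map $t\mapsto t^{\lambda}$ is convex on $[0,\infty)$, so conditional Jensen yields $\widetilde r(y)^{\lambda}\le\EX_{Q}[\,r(X)^{\lambda}\mid f(X)=y\,]$ for $\nu$-a.e.\ $y$. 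Integrating against $\nu=f_\#Q$ and using the tower rule,
\[
\int \widetilde r^{\,\lambda}\,d\nu\ \le\ \int \EX_{Q}\bigl[\,r^{\lambda}\mid f\,\bigr]\,d\nu\ =\ \EX_{Q}\bigl[r^{\lambda}\bigr]\ =\ \int r^{\lambda}\,dQ.
\]
Applying the increasing map $t\mapsto\tfrac{1}{\lambda-1}\log t$ to both ends gives $D_\lambda(\mu\|\nu)\le D_\lambda(P\|Q)\le\rho$, i.e.\ $f\circ\A$ is $(\lambda,\rho)$-RDP.

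If one also wants the statement for a \emph{randomized} post-processing kernel, I would reduce it to the deterministic case: write the kernel as $y=g(x,U)$ with $U$ an auxiliary variable independent of everything, note that appending the independent coordinate $U$ does not change $D_\lambda$ (the joint likelihood ratio on the product space is still $r(x)$, independent of $U$), and apply the deterministic result to $(x,U)\mapsto g(x,U)$.

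The only genuinely delicate part is the conditional-expectation identity for $d\mu/d\nu$: it needs the measure-theoretic scaffolding (common dominating measure, existence of regular conditional distributions, care on $\nu$-null sets), and it is the one line that deserves to be written out. Everything after it — convexity of $t\mapsto t^{\lambda}$, the tower property, monotonicity of $\log$ — is routine, and the argument uses nothing about $f$ beyond measurability. If the paper prefers to avoid disintegration entirely, I would instead verify the finite/discrete case first, where $\widetilde r(y)$ is literally a weighted average of the values of $r$ over the fiber $f^{-1}(y)$, and then pass to the general case by a standard monotone-class/approximation argument; I would choose whichever is shorter in context.
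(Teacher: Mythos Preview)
Your argument is correct: it is the standard data-processing inequality for \renyi\ divergence, proved via the identification $d\mu/d\nu = \EX_Q[\,dP/dQ \mid f\,]$ followed by conditional Jensen for the convex map $t\mapsto t^\lambda$ ($\lambda>1$). The paper does not supply its own proof of this lemma; it is quoted from \cite{mironov2017renyi} and used as a black box, so there is nothing to compare your approach against.
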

}

\subsection{Main Results}\label{sec:main-results}
{We present our main results here. First, we state a key bound of UAS for SGD   when $\W\subseteq \R^d$ and the loss function is  $\ga$-H\"older smooth. Then, we propose two privacy-preserving SGD-type algorithms using output and gradient perturbations, and present the corresponding privacy and generalization (utility) guarantees. The utility guarantees in terms of the excess risk typically rely on two main   errors:  optimization errors and  generalization errors, as shown soon in \eqref{eq:err-decomp-out} and \eqref{eq:Gradient-error-decom2} for the algorithms with output  and gradient perturbations, respectively. 
We will apply techniques in optimization theory to handle the optimization errors \cite{Nem}, and the concept of UAS \cite{bousquet2002stability,hardt2016train,liu2017algorithmic}, which was given in Definition \ref{def:uniform-stability}  in Subsection \ref{sec:preliminaries}, to estimate the generalization errors. }

\subsubsection{UAS bound of SGD with  Non-Smooth Losses}

We begin by stating the key result on the distance between two iterate trajectories produced  by  SGD on neighboring datasets.
Let 
\begin{equation}\label{alpha-1}
  c_{\ga,1}=\begin{cases}
     (1+1/\ga)^{\frac{\ga}{1+\ga}}L^{\frac{1}{1+\ga}}, & \mbox{if } \ga\in (0,1] \\
                 M+L, & \mbox{if } \ga=0.
               \end{cases}
\end{equation} and $c_{\ga,2}=\sqrt{\frac{1-\ga}{1+\ga}}(2^{-\ga}L)^\frac{1}{1-\ga}$, where $M=\sup_{z\in \Z}\|\partial \ell(0,z)\|_2$.  In addition, define $C_\ga=\frac{1-\ga}{1+\ga}  c^{\frac{2(1+\ga)}{1-\ga}}_{\ga,1} \big( \frac{\ga}{1+\ga}\big)^{\frac{2\ga}{1-\ga}}+ 2\sup_{z\in \Z} \ell(0;z)$. Furthermore, let $\mathcal{B}(0, r)$ denote the Euclidean ball of radius $r>0$ centered at $0 \in \R^d$. Without loss of generality, we assume $\eta> 1/T$.

\begin{theorem}\label{thm:difference-w}
Suppose that the loss function $\ell$ is convex and $\alpha$-H\"older smooth with parameter $L$.  
Let $\A$ be the SGD with $T$ iterations and  $\eta_t=\eta < \min \{1, 1/L \} $, and $\bar{\bw}=\frac{1}{T}\sum_{t=1}^T \bw_t$ be the output produced by $\A$. Further, let  $c_{\gamma,T}=\max\Big\{\big(3n\log(n/\gamma)/T\big)^{\frac{1}{2}}, 3n\log(n/\gamma)/T \Big\}$.  
\begin{enumerate}[label=({\alph*})]
\item If $\ell$ is nonnegative and $\W=\R^d$, then, for any $\gamma \in (0,1)$, there holds
\begin{equation*}
\mathbb{P}_\A\Big( \sup_{S\simeq S'}  \delta_{\A}(S,S') \ge \Delta_{SGD}(\gamma) \Big) \le \gamma,
\end{equation*}
where $\Delta_{SGD}(\gamma)=\Big(e\big(c^2_{\alpha,2}T\eta^{\frac{2}{1-\alpha}}+4\big(M+L (C_\ga T \eta)^{\frac{\ga}{2}}\big)^2\eta^2\Big(1+\frac{T}{n}(1+c_{\gamma,T})\Big)\frac{T}{n}(1+c_{\gamma,T})\big)\Big)^{1/2}$. 
\item If $\W \subseteq \mathcal{B}(0,R)$ with $R> 0$, then, for any $\gamma \in (0,1)$, there holds
\begin{equation*}
 \mathbb{P}_\A\Big( \sup_{S\simeq S'} \delta_{\A}(S,S')   \ge \Tilde{\Delta}_\sgd(\gamma) \Big) \le \gamma,
\end{equation*}
where $\Tilde{\Delta}_\sgd(\gamma)=\Big(e\big(c^2_{\alpha,2}T\eta^{\frac{2}{1-\alpha}}+4\big(M+LR^\alpha \big)^2\eta^2\Big(1+\frac{T}{n}(1+c_{\gamma,T})\Big)\frac{T}{n}(1+c_{\gamma,T})\big)\Big)^{1/2}$.
\end{enumerate}
\end{theorem}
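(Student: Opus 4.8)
The plan is to couple two runs of SGD on neighbouring datasets $S\simeq S'$ differing only in the $j$-th sample, driven by the \emph{same} index sequence $i_1,\dots,i_{T-1}$, and to track the trajectory gap $u_t:=\bw_t(S)-\bw_t(S')$. Since the output is the average $\bar\bw=\frac1T\sum_{t=1}^T\bw_t$, we have $\delta_\A(S,S')=\|\frac1T\sum_{t=1}^T u_t\|\le\max_{t\le T}\|u_t\|$, so it suffices to bound $\|u_t\|$ uniformly in $t$. Non-expansiveness of $\proj_\W$ gives $\|u_{t+1}\|\le\|u_t-\eta(g_t-g_t')\|$, where $g_t,g_t'$ are the two sampled subgradients; these are subgradients of the \emph{same} loss at $\bw_t,\bw_t'$ when $i_t\ne j$ (``stable'' steps) and of \emph{different} losses when $i_t=j$ (``jump'' steps), and the two regimes are handled separately. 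Throughout, $\0=\bw_1\in\W$ both in case (a) ($\W=\R^d$) and case (b) ($\W\subseteq\mathcal{B}(0,R)$).

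Two ingredients feed the recursion. \emph{(i) Approximate non-expansiveness of the gradient map.} For convex $\alpha$-H\"older smooth $f$ with parameter $L$ (hence $C^1$ when $\alpha>0$), applying the $\alpha$-H\"older descent inequality to the auxiliary convex function $\phi(\bx)=f(\bx)-\langle\partial f(\bw'),\bx\rangle$, which is minimised at $\bw'$, and optimising a gradient step from $\bw$ gives $\phi(\bw')\le\phi(\bw)-\tfrac{\alpha}{1+\alpha}L^{-1/\alpha}\|\partial f(\bw)-\partial f(\bw')\|^{\frac{1+\alpha}{\alpha}}$; symmetrising in $\bw,\bw'$ yields the inexact co-coercivity $\langle\partial f(\bw)-\partial f(\bw'),\bw-\bw'\rangle\ge\tfrac{2\alpha}{1+\alpha}L^{-1/\alpha}\|\partial f(\bw)-\partial f(\bw')\|^{\frac{1+\alpha}{\alpha}}$. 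Substituting into $\|u-\eta g\|^2=\|u\|^2-2\eta\langle g,u\rangle+\eta^2\|g\|^2$ (with $u=\bw-\bw'$, $g=\partial f(\bw)-\partial f(\bw')$) and optimising the residual over $\|g\|\ge0$ — finite since $\tfrac{1+\alpha}{\alpha}>2$, the maximiser being $(\eta L^{1/\alpha}/2)^{\alpha/(1-\alpha)}$ — gives, with the exact constant, $\|u-\eta g\|^2\le\|u\|^2+c_{\alpha,2}^2\eta^{\frac{2}{1-\alpha}}$, so one stable SGD step contracts up to this \emph{purely additive} error (the case $\alpha=0$ follows directly from $\|g\|\le L$ and monotonicity). \emph{(ii) Gradient bound along the trajectory.} In case (b), $\|\bw_t\|\le R$, hence $\|\partial\ell(\bw_t,z)\|\le M+LR^\alpha$. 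In case (a) we invoke the self-bounding property of nonnegative $\alpha$-H\"older smooth losses, $\|\partial\ell(\bw,z)\|\le c_{\alpha,1}\ell(\bw,z)^{\frac{\alpha}{1+\alpha}}$, plug it into $\|\bw_{t+1}\|^2\le\|\bw_t\|^2-2\eta(\ell(\bw_t,z_{i_t})-\ell(\0,z_{i_t}))+\eta^2\|\partial\ell(\bw_t,z_{i_t})\|^2$, bound the last term by $\eta\,\ell(\bw_t,z_{i_t})$ plus a constant via Young's inequality, drop the nonnegative term, and telescope (using $\ell(\0,z)\le\sup_z\ell(\0,z)$ and $\eta<1/L$) to the deterministic bound $\|\bw_t\|^2\le C_\alpha T\eta$, so $\|\partial\ell(\bw_t,z)\|\le M+L(C_\alpha T\eta)^{\alpha/2}$.

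Now write $A:=c_{\alpha,2}^2\eta^{\frac{2}{1-\alpha}}$, $J:=2\eta\big(M+L(C_\alpha T\eta)^{\alpha/2}\big)$ (resp. $J:=2\eta(M+LR^\alpha)$ in case (b)), and $k:=\tfrac Tn(1+c_{\gamma,T})$. On a stable step, ingredient (i) gives $\|u_{t+1}\|^2\le\|u_t\|^2+A$; on a jump step the triangle inequality and ingredient (ii) give $\|u_{t+1}\|\le\|u_t\|+J$, hence $\|u_{t+1}\|^2\le(1+\tfrac1k)\|u_t\|^2+(1+k)J^2$ via $(a+b)^2\le(1+\tfrac1k)a^2+(1+k)b^2$. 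Suppose the number of jump steps $N_j:=\#\{t\in[T-1]:i_t=j\}$ is at most $k$. Unrolling from $\|u_1\|=0$: the jump steps contribute together a multiplicative factor at most $(1+\tfrac1k)^k\le e$, and each of the at most $T$ additive increments $A$ and at most $k$ additive increments $(1+k)J^2$ is amplified by at most this same $e$; thus $\max_{t\le T}\|u_t\|^2\le e\big(TA+k(1+k)J^2\big)$, which is exactly $\Delta_{SGD}(\gamma)^2$ (resp. $\widetilde{\Delta}_{SGD}(\gamma)^2$).

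It remains to bound $\mathbb{P}_\A(\exists j\in[n]:N_j>k)$. Since $N_j$ is stochastically dominated by $\mathrm{Bin}(T,1/n)$ with mean $T/n$, the multiplicative Chernoff bound $\mathbb{P}(X\ge(1+\theta)\mu)\le\exp(-\mu\min\{\theta,\theta^2\}/3)$ applied with $\theta=c_{\gamma,T}$ gives $\mathbb{P}_\A(N_j>k)\le\gamma/n$ — the two branches $\sqrt{3n\log(n/\gamma)/T}$ and $3n\log(n/\gamma)/T$ in the definition of $c_{\gamma,T}$ are precisely what makes $\tfrac13\mu\min\{\theta,\theta^2\}=\log(n/\gamma)$ in the regimes $c_{\gamma,T}\le1$ and $c_{\gamma,T}>1$ respectively — and a union bound over $j\in[n]$ yields probability at most $\gamma$. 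On the complementary event every neighbouring pair $S\simeq S'$ has at most $k$ jump steps, so the recursion above bounds $\delta_\A(S,S')$ by $\Delta_{SGD}(\gamma)$ (resp. $\widetilde{\Delta}_{SGD}(\gamma)$), which proves both parts. I expect the main obstacle to be ingredient (i): proving the inexact co-coercivity with the sharp constant so that the per-step error on stable steps is \emph{purely additive} with no multiplicative factor, which is exactly what keeps the final prefactor at $e$ rather than $e^{T}$; the self-bounding/Young-inequality bookkeeping behind $C_\alpha$ in the unbounded case is the next most delicate point, and the concentration step is routine.
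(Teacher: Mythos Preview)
Your proposal is correct and follows essentially the same route as the paper: the two-case (stable/jump) recursion, the approximate non-expansiveness bound $\|u_{t+1}\|^2\le\|u_t\|^2+c_{\alpha,2}^2\eta^{2/(1-\alpha)}$ on stable steps, the $(1+p)a^2+(1+1/p)b^2$ trick on jump steps with $p=1/k$ so the multiplicative factor is at most $e$, the Chernoff bound on the number of jumps, and the union bound over the differing index $j\in[n]$ are all exactly what the paper does. Your ingredients (i) and (ii) are precisely the paper's Lemmas~17 and~16 respectively (the paper cites \cite{lei2020fine} for Lemma~17, while you sketch its proof via co-coercivity, and your sketch of the iterate-boundedness argument via self-bounding plus Young matches the paper's proof of Lemma~16); the only cosmetic difference is that the paper decomposes the jump step directly via $(a+b)^2\le(1+p)a^2+(1+1/p)b^2$ on $u_t-\eta(g_t-g_t')$ rather than first taking the triangle inequality and then squaring, and bounds $\frac1T\sum_t\|u_t\|$ rather than $\max_t\|u_t\|$, but these lead to the same final expression.
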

\begin{remark}Under the reasonable assumption of $T\ge n$, we have $c_{\gamma,T}=\O(\log(n/\gamma))$. Then $\Delta_\sgd(\gamma)=\O\Big(  \sqrt{T}\eta^{\frac{1}{1-\alpha}}+\frac{(T\eta)^{1+\alpha/2}\log( n /\gamma)}{n} \Big)$ and $\tilde{ \Delta}_\sgd(\gamma)=\O\Big( \sqrt{T}\eta^{\frac{1}{1-\alpha}}+\frac{  T \eta  \log( n /\gamma)}{n} \Big)$. In addition, if $\ell$ is strongly smooth, i.e., $\alpha=1$, the first term in the UAS bounds tends to $0$ under the typical assumption of $\eta< 1$. In this case we have $\Delta_\sgd(\gamma)=\O\Big(\frac{\big(T\eta\big)^{3/2}\log( n /\gamma)}{n}\Big)$ and $\tilde{ \Delta}_\sgd(\gamma)=\O\Big(\frac{  T \eta   \log( n /\gamma) }{n}\Big)$. The work \cite{bassily2020stability} established the high probability upper bound of the  random variable of the argument stability  $\delta_{SGD}$ in the order of $\O(\sqrt{T} \eta + \frac{T\eta}{n})$ for Lipschitz continuous losses under an additional assumption  $\gamma \ge \exp(-n/2)$. 
{Our result gives the upper bound of $\sup_{S\simeq S'} \delta_{SGD}(S,S')$ in the order of $\O(\sqrt{T}\eta+\frac{T\eta \log(n/\gamma)}{n})$ for any $\gamma \in (0,1)$ for the case of $\alpha=0$.} 
The work \cite{hardt2016train} gave the bound of $\O({T \eta}/{n})$ in expectation for Lipschitz continuous and smooth loss functions. As a comparison, our stability bounds are stated with high probability and do not require the Lipschitz condition. Under a further Lipschitz condition, our stability bounds actually recover the bound $\O({T \eta}/{n})$ in \cite{hardt2016train} in the smooth case. Indeed, both the term $\big(M+(C_\ga T \eta)^{\frac{\ga}{2}}\big)^2$  and the term $\big(M+L R^\alpha \big)^2$ are due to controlling the magnitude of gradients, and can be replaced by $L^2$ for $L$-Lipschitz losses. \end{remark}

\subsubsection{Differentially Private SGD with Output Perturbation }\label{sec:output}

\begin{algorithm}[t]
\begin{algorithmic}[1]
\caption{Differentially Private SGD with Output perturbation (\texttt{DP-SGD-Output})}\label{alg1}
\STATE{\bf Inputs:}  Data $S= \{z_{i}\in \Z: i=1,\ldots, n\}$,  $\ga$-H\"older smooth loss $\ell(\bw,z)$ with parameter $L$, the convex set $\W$, step size $\eta$, number of iterations $T$, and privacy parameters $\gep$, $\gd$
\STATE{\bf Set:}  $\bw_1=\0$ 
\FOR { $t=1$ to $T$ } 
\STATE{Sample $i_t\sim \text{Unif}([n])$} 
\STATE{$\bw_{t+1}=\proj_{\W}(\bw_t-\eta \partial \ell(\bw_t;z_{i_t}))$}
\ENDFOR
\IF{$\W=\R^d$}
\STATE{let $\Delta=\Delta_\sgd(\delta/2)$ }
\ELSIF{  $\W \subseteq \mathcal{B}(0,R)$  }
\STATE{let $\Delta=\Tilde{\Delta}_\sgd(\delta/2)$  }
\ENDIF
\STATE{\bf Compute:}
 $\sigma^2=\frac{2\log(2.5/\delta)\Delta^2}{\epsilon^2}$
\STATE  {\bf return:}   ${\bw}_\priv = \frac{1}{T}\sum_{t=1}^T\bw_t+ \bb$ where  $\bb\sim \N(0,\sigma^2 \mathbf{I}_d) $
\end{algorithmic}
\end{algorithm}

 {Output perturbation} \cite{chaudhuri2011differentially,dwork2006calibrating} is a common approach to achieve $(\gep,\gd)$-DP.  The main idea is to  add a random noise $\bb$ to the output of the SGD algorithm, where $\bb$ is randomly sampled from the Gaussian distribution with mean $0$ and variance proportional to the $\ell_2$-sensitivity of SGD. In Algorithm~\ref{alg1}, we  
 propose the private SGD algorithm with output perturbation for non-smooth losses in both bounded domain $\W \subseteq \mathcal{B}(0,R)$ and unbounded domain $\W=\R^d$. The difference in these two cases is that we add random noise with different variances according to the sensitivity analysis of SGD stated in Theorem \ref{thm:difference-w}. In the sequel, we  present the privacy and utility guarantees for Algorithm~\ref{alg1}. 
 
\begin{theorem}[Privacy guarantee]\label{thm:dpSGD-output-Gaussian} 
Suppose that the loss function $\ell$ is convex, nonnegative and $\alpha$-H\"older smooth with parameter $L$. Then Algorithm~\ref{alg1} (\texttt{DP-SGD-Output}) satisfies $(\gep,\gd)$-DP. 
\end{theorem}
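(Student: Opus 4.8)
The plan is to reduce the privacy claim to the Gaussian mechanism (Lemma~\ref{lem:gaussian-noise}) applied \emph{conditionally}. The obstruction to a direct application is that the quantity being perturbed, the SGD average $\bar{\bw}=\frac1T\sum_{t=1}^T\bw_t$, is the output of a \emph{randomized} map $S\mapsto\bar{\bw}$: its $\ell_2$-sensitivity is a random variable (depending on the sampled indices $i_1,\dots,i_T$), and Theorem~\ref{thm:difference-w} controls it only with high probability. So I would split the algorithm's internal randomness into the SGD indices $\xi=(i_1,\dots,i_T)$ and the output noise $\bb\sim\N(0,\sigma^2\mathbf{I}_d)$, and handle the two sources separately: condition on a ``good'' event for $\xi$ on which the sensitivity bound holds, run the Gaussian mechanism there, and pay for the ``bad'' event out of the $\delta$ budget.

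In detail, fix neighboring $S\simeq S'$ and a measurable event $E\subseteq\R^d$. Writing $\bar{\bw}(S;\xi)$ for the SGD average on $S$ with indices $\xi$ (the indices are coupled, i.e.\ shared, across the runs on $S$ and $S'$, which is exactly the coupling under which Theorem~\ref{thm:difference-w} is stated), let $\Delta$ be the quantity chosen in Algorithm~\ref{alg1} --- namely $\Delta_{\sgd}(\delta/2)$ if $\W=\R^d$ and $\Tilde{\Delta}_{\sgd}(\delta/2)$ if $\W\subseteq\mathcal{B}(0,R)$ --- and define $G=\{\xi:\sup_{S\simeq S'}\|\bar{\bw}(S;\xi)-\bar{\bw}(S';\xi)\|_2\le\Delta\}$. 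Theorem~\ref{thm:difference-w} with $\gamma=\delta/2$ gives $\mathbb{P}_\xi(G^c)\le\delta/2$. The one computation I would then check is that the noise scale matches: since $2.5/\delta=1.25/(\delta/2)$, the algorithm's choice $\sigma^2=2\log(2.5/\delta)\Delta^2/\epsilon^2$ is exactly $\sigma=\sqrt{2\log(1.25/(\delta/2))}\,\Delta/\epsilon$, i.e.\ the threshold in Lemma~\ref{lem:gaussian-noise} for parameters $(\epsilon,\delta/2)$ and sensitivity $\Delta$. Hence for each \emph{fixed} $\xi\in G$ the deterministic map $\bar{\bw}(\cdot;\xi)$ has $\ell_2$-sensitivity at most $\Delta$, and Lemma~\ref{lem:gaussian-noise} yields
\[
\mathbb{P}_\bb\big(\bar{\bw}(S;\xi)+\bb\in E\big)\le e^\epsilon\,\mathbb{P}_\bb\big(\bar{\bw}(S';\xi)+\bb\in E\big)+\delta/2 .
\]

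The last step is to integrate out $\xi$. I would write $\mathbb{P}(\bw_{\priv}(S)\in E)=\mathbb{E}_\xi[\mathbb{P}_\bb(\bar{\bw}(S;\xi)+\bb\in E)]$, decompose the expectation over $G$ and $G^c$, bound the integrand on $G^c$ trivially by $1$ (contributing at most $\mathbb{P}_\xi(G^c)\le\delta/2$), apply the displayed inequality on $G$, and then relax $\mathbf{1}_G\le1$ to recover the full expectation for $S'$; this gives $\mathbb{P}(\bw_{\priv}(S)\in E)\le e^\epsilon\,\mathbb{P}(\bw_{\priv}(S')\in E)+\delta/2+\delta/2$, which is $(\epsilon,\delta)$-DP. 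There is no real obstacle here once Theorem~\ref{thm:difference-w} is in hand; the only points requiring care are (i) keeping the SGD randomness $\xi$ coupled between the two runs so that $G$ is a single event simultaneously controlling all neighboring pairs, and (ii) the bookkeeping that splits $\delta$ as $\delta/2+\delta/2$, which is the reason $\sigma$ is calibrated to $\log(2.5/\delta)$ rather than $\log(1.25/\delta)$.
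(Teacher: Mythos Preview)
Your proposal is correct and follows essentially the same approach as the paper: both condition on the ``good'' index sequence event (your $G$, the paper's $\mathcal{B}$) from Theorem~\ref{thm:difference-w} with $\gamma=\delta/2$, apply the Gaussian mechanism (Lemma~\ref{lem:gaussian-noise}) at level $(\epsilon,\delta/2)$ on that event, and absorb the complementary $\delta/2$ probability into the additive slack. Your pointwise-in-$\xi$ integration is a slightly more explicit rendering of the paper's conditioning on $I\in\mathcal{B}$, and your observation that $2.5/\delta=1.25/(\delta/2)$ is exactly the bookkeeping the paper relies on implicitly.
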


{ 
According to the definitions, 
the $\ell_2$-sensitivity of SGD is identical to the UAS of SGD: $\sup_{S\simeq S'}\delta_{SGD}(S,S')$. 
}
In this sense,  the proof of Theorem \ref{thm:dpSGD-output-Gaussian} directly follows from Theorem \ref{thm:difference-w} and  Lemma \ref{lem:gaussian-noise}. For completeness, we include the detailed proof in Subsection~\ref{subsec:output}.

Recall that the empirical risk is defined by $\cR_S(\bw)=\frac{1}{n}\sum_{i=1}^n \ell(\bw,z_i)$, and the population risk is $\cR(\bw)=\EX_z[\ell(\bw,z)]$. Let $\bw^{*}\in \arg\min_{\bw\in \W} \cR(\bw)$ be the one with the best prediction performance over $\W$. We use the notation $B \asymp \tilde{B}$ if there exist constants $c_1, c_2> 0$ such that $c_1  \tilde{B}< B\le c_2\tilde{B}$.  Without loss of generality, we always assume $\|\bw^*\|_2\geq1$.

\begin{theorem}[Utility guarantee for unbounded domain]\label{thm:output-utility}
Suppose the loss function $\ell$ is nonnegative, convex  and $\alpha$-H\"older smooth with parameter $L$.  Let $\bw_{\priv}$ be the output produced by  Algorithm~\ref{alg1} with $\W=\R^d$ and  $\eta= n^{\frac{1}{3+\alpha}}/\big(T (\log(\frac{1}{\gamma}))^{\frac{1}{3+\alpha}} \big)$. Let $T\asymp n^{\frac{-\alpha^2-3\alpha+6}{(1+\alpha)(3+\alpha)}} $ if $0\le\alpha < \frac{\sqrt{73}-7}{4}$, and $T\asymp n$ else. Then, for any $\gamma \in (4\max\{\exp(-d/8),  \delta\},1)$, with probability at least $1-\gamma$ over the randomness in both the sample and the algorithm, there holds
\begin{equation*}
    \cR(\bw_{\priv})-\cR(\bw^{*})
    =  \|\bw^{*}\|_2^2\cdot  \O\bigg( \frac{\sqrt{d\log(1/\delta)} {\log(n/\delta)}  }{ (\log(1/\gamma))^{\frac{1+ \alpha}{4(3+\alpha)}} n^{\frac{2}{3+\alpha}} \epsilon} + \frac{\log(n)\big(\log(1/\gamma)\big)^{\frac{2}{ 3+\alpha }}
    {\log(n/\delta)}}{n^{\frac{1}{3+\alpha}}}\bigg).
\end{equation*}
\end{theorem}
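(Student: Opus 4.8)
I would follow the standard template for analyzing output‑perturbed SGD. Write $\bar{\bw}=\frac1T\sum_{t=1}^T\bw_t$ for the non‑private SGD average, so that $\bw_\priv=\bar{\bw}+\bb$ with $\bb\sim\N(0,\sigma^2\mathbf{I}_d)$ and $\sigma^2=2\log(2.5/\delta)\,\Delta_\sgd(\delta/2)^2/\epsilon^2$. Decompose
\begin{align*}
\cR(\bw_\priv)-\cR(\bw^{*})
={}&\bigl[\cR(\bw_\priv)-\cR(\bar{\bw})\bigr]+\bigl[\cR(\bar{\bw})-\cR_S(\bar{\bw})\bigr]\\
&{}+\bigl[\cR_S(\bar{\bw})-\cR_S(\bw^{*})\bigr]+\bigl[\cR_S(\bw^{*})-\cR(\bw^{*})\bigr],
\end{align*}
call the four brackets $(\mathrm{I})$–$(\mathrm{IV})$ (the noise, generalization, optimization and sampling errors), bound each on a high‑probability event, and union‑bound. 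A preliminary step is a \emph{deterministic} bound on the iterates when $\W=\R^d$: from $\|\bw_{t+1}\|_2^2=\|\bw_t\|_2^2-2\eta\langle\partial\ell(\bw_t,z_{i_t}),\bw_t\rangle+\eta^2\|\partial\ell(\bw_t,z_{i_t})\|_2^2$, convexity ($\langle\partial\ell(\bw_t,z),\bw_t\rangle\ge\ell(\bw_t,z)-\ell(0,z)$), the self‑bounding inequality for nonnegative $\alpha$‑Hölder smooth losses ($\|\partial\ell(\bw,z)\|_2^2\le c_{\alpha,1}^2\,\ell(\bw,z)^{2\alpha/(1+\alpha)}$), and a Young‑type inequality that cancels the $\ell(\bw_t,z_{i_t})$ terms, telescoping yields $\sup_{1\le t\le T}\|\bw_t\|_2^2\le C_\alpha T\eta$, hence $\|\bar{\bw}\|_2^2\le C_\alpha T\eta$ by Jensen, and a uniform gradient bound $\widehat{G}:=M+L(C_\alpha T\eta)^{\alpha/2}$ along the trajectory — exactly the quantity appearing in $\Delta_\sgd$ in Theorem~\ref{thm:difference-w}(a). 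I expect \emph{this} to be the main obstacle: in the unbounded domain the iterate norm and the gradient magnitude are mutually dependent, and closing the loop is what forces the nonnegativity and self‑bounding structure.

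For $(\mathrm{I})$, since $\bb$ is drawn independently of $(S,\bar{\bw})$, the Hölder‑smoothness descent inequality $\ell(\bar{\bw}+\bb,z)\le\ell(\bar{\bw},z)+\langle\partial\ell(\bar{\bw},z),\bb\rangle+\tfrac{L}{1+\alpha}\|\bb\|_2^{1+\alpha}$ gives, after taking $\E_z$, $(\mathrm{I})\le\langle\partial\cR(\bar{\bw}),\bb\rangle+\tfrac{L}{1+\alpha}\|\bb\|_2^{1+\alpha}$; conditionally on $\bar{\bw}$ the first term is mean‑zero Gaussian with standard deviation at most $\sigma\widehat{G}$, and Gaussian‑norm concentration gives $\|\bb\|_2=\O(\sigma\sqrt{d})$ with probability $\ge 1-\gamma/4$ once $\gamma>4e^{-d/8}$, so $(\mathrm{I})=\O\bigl(\sigma\widehat{G}\sqrt{\log(1/\gamma)}+L(\sigma\sqrt{d})^{1+\alpha}\bigr)$. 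I then substitute $\sigma$ together with the bound $\Delta_\sgd(\delta/2)=\O\bigl(\sqrt{T}\eta^{1/(1-\alpha)}+(T\eta)^{1+\alpha/2}\log(n/\delta)/n\bigr)$ from the remark after Theorem~\ref{thm:difference-w}; the hypothesis $\gamma>4\delta$ ensures $\Delta_\sgd(\delta/2)\ge\Delta_\sgd(\gamma/4)$, so the same bound on $\Delta_\sgd$ is reused below.

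For $(\mathrm{II})$, apply Lemma~\ref{lem:generror-high-probability} to non‑private SGD with $G=\sqrt{C_\alpha T\eta}$ and $\Delta_\A=\Delta_\sgd(\gamma/4)$ from Theorem~\ref{thm:difference-w}(a) (note $M+LG^\alpha=\widehat{G}$). For $(\mathrm{III})$, Jensen and convexity give $(\mathrm{III})\le\frac1T\sum_t\langle\partial\cR_S(\bw_t),\bw_t-\bw^{*}\rangle$; splitting $\partial\cR_S(\bw_t)=\partial\ell(\bw_t,z_{i_t})+\bigl(\partial\cR_S(\bw_t)-\partial\ell(\bw_t,z_{i_t})\bigr)$, the first part telescopes (no projection since $\W=\R^d$) to $\tfrac{\|\bw^{*}\|_2^2}{2T\eta}+\tfrac{\eta\widehat{G}^2}{2}$, and the second is a martingale‑difference sum with increments $\O\bigl(\widehat{G}(\sqrt{C_\alpha T\eta}+\|\bw^{*}\|_2)\bigr)$, controlled by Azuma's inequality at scale $\widehat{G}(\sqrt{T\eta}+\|\bw^{*}\|_2)\sqrt{\log(1/\gamma)/T}$. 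For $(\mathrm{IV})$, since $\ell(\bw^{*},z)\le M_0+M\|\bw^{*}\|_2+\tfrac{L}{1+\alpha}\|\bw^{*}\|_2^{1+\alpha}$ (the Hölder expansion of $\ell(\cdot,z)$ about $0$), the terms $\{\ell(\bw^{*},z_i)\}_{i=1}^n$ are i.i.d.\ and bounded, so Hoeffding gives $(\mathrm{IV})=\O\bigl((M_0+M\|\bw^{*}\|_2+L\|\bw^{*}\|_2^{1+\alpha})\sqrt{\log(1/\gamma)/n}\bigr)$.

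Finally I sum the four bounds, use $\|\bw^{*}\|_2\ge1$ to absorb all powers $\|\bw^{*}\|_2^{k}$, $k\le2$, into a single $\|\bw^{*}\|_2^2$ factor, and substitute $\eta=n^{1/(3+\alpha)}/\bigl(T(\log(1/\gamma))^{1/(3+\alpha)}\bigr)$, so $T\eta=(n/\log(1/\gamma))^{1/(3+\alpha)}$ is fixed and only $T$ remains free. Choosing $T$ to equalize the dominant contributions — the $1/(T\eta)$‑type optimization term against the noise and step‑size ($\eta\widehat{G}^2$) terms — gives an unconstrained optimum with exponent $\tfrac{-\alpha^2-3\alpha+6}{(1+\alpha)(3+\alpha)}$, which exceeds $1$ precisely when $\alpha<\tfrac{\sqrt{73}-7}{4}$ (equality at the positive root of $2\alpha^2+7\alpha-3=0$); since the clean form of $\Delta_\sgd$ needs $T\ge n$, for larger $\alpha$ one instead takes $T\asymp n$, producing the two cases in the statement. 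Carrying the $\log$‑factors through this optimization yields the stated rate. Besides the deterministic iterate bound flagged above, the remaining work is the high‑probability (rather than in‑expectation) control of $(\mathrm{I})$ and $(\mathrm{III})$ via Gaussian and martingale concentration, and verifying that the prescribed $(\eta,T)$ genuinely balance every dominant term, including the $\log(1/\gamma)$ powers.
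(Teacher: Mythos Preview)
Your proposal is correct and follows essentially the same route as the paper: the identical four-term decomposition \eqref{eq:err-decomp-out}, the deterministic iterate bound $\|\bw_t\|_2^2\le C_\alpha T\eta$ via self-bounding and Young's inequality (the paper's Lemma~\ref{lem:boundedness}), Lemma~\ref{lem:generror-high-probability} with $G=\sqrt{C_\alpha T\eta}$ for $(\mathrm{II})$, Hoeffding for $(\mathrm{IV})$, and the same balancing of $\eta$ and $T$.

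Two minor technical differences are worth noting. For $(\mathrm{I})$ the paper does not use the H\"older descent inequality; it instead bounds $\cR(\bw_\priv)-\cR(\bar\bw)\le(M+L\|\bar\bw\|_2^\alpha+L\|\bb\|_2^\alpha)\|\bb\|_2$ via convexity at $\bw_\priv$ and Cauchy--Schwarz, then applies the $\chi^2$ concentration of $\|\bb\|_2$; your sub-Gaussian treatment of $\langle\partial\cR(\bar\bw),\bb\rangle$ is legitimate and in fact shaves a $\sqrt{d}$ from that linear piece, but the $L\|\bb\|_2^{1+\alpha}$ term still produces the dominant $d$-contribution, so the final rate is unchanged. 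For $(\mathrm{III})$ the paper decomposes at the \emph{loss} level into $\frac1T\sum_t[\cR_S(\bw_t)-\ell(\bw_t,z_{i_t})]+\frac1T\sum_t[\ell(\bw^*,z_{i_t})-\cR_S(\bw^*)]+\frac1T\sum_t[\ell(\bw_t,z_{i_t})-\ell(\bw^*,z_{i_t})]$ and, for the last piece, uses the self-bounding inequality $\|\partial\ell(\bw_t,z_{i_t})\|_2^2\le c_{\alpha,1}\ell(\bw_t,z_{i_t})+c_{\alpha,1}$ to absorb the gradient-squared term back into the left-hand side, yielding $\O(\|\bw^*\|_2^2/(T\eta)+\|\bw^*\|_2^{1+\alpha}\eta)$ rather than your $\O(\|\bw^*\|_2^2/(T\eta)+\eta\widehat{G}^2)$. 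Your cruder $\eta\widehat{G}^2=\O((T\eta)^{1+\alpha}/T)$ is still dominated by $n^{-1/(3+\alpha)}$ under both prescribed choices of $T$ (one checks $T\ge n^{(2+\alpha)/(3+\alpha)}$ in either regime), so the endpoint is the same.
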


\noindent 
To examine the excess population risk $\cR(\bw_{\priv})-\cR(\bw^{*})$, we use the following error decomposition: 
\begin{equation}\label{eq:err-decomp-out}
    \cR(\bw_{\priv})-\cR(\bw^{*})=[\cR(\bw_{\priv})-\cR(\Bar{\bw})]+[\cR(\Bar{\bw})-\cR_S(\Bar{\bw})]
    +[\cR_S(\Bar{\bw})-\cR_S(\bw^{*})] +[\cR_S(\bw^{*})-\cR(\bw^{*})],
\end{equation}
where $\Bar{\bw}=\frac{1}{T}\sum_{t=1}^T\bw_t$ is the output of non-private SGD. The first term is due to the added noise $\bb$, which can be estimated by the Chernoff bound for Gaussian random vectors. The second term is the generalization error of SGD, which can be handled by the stability analysis.  The third term is an optimization error and can be controlled by standard techniques in optimization theory. Finally, the last term can be bounded by $\O(1/\sqrt{n})$ by Hoeffding  inequality. The proof of Theorem~\ref{thm:output-utility} is given in Subsection~\ref{subsec:output}.

Now, we turn our attention to the utility guarantee for the case with a  bounded domain. 
\begin{theorem}[Utility guarantees for bounded domain]
\label{thm:output-utility-proj}
If the loss function $\ell$ is nonnegative, convex and $\alpha$-H\"older smooth with parameter $L$.  Let $\bw_{\priv}$ be the output produced by  Algorithm~\ref{alg1} with  $\W\subseteq \mathcal{B}(0,R)$. Let $T \asymp n^{
\frac{2-\alpha}{1+\alpha}} $ if $\alpha < \frac{1}{2}$,  $T\asymp n$ else, and choose $\eta=1 / \Big(T \max\Big\{ \frac{ \sqrt{\log(n/\delta) \log(n) \log(1/\gamma)} }{\sqrt{n}}, \frac{    \big(d\log(1/\delta)\big)^{1/4} \sqrt{\log(n/\delta)} (\log(1/\gamma))^{1/8}}{\sqrt{n\epsilon}}  \Big\} \Big)$. Then for any $\gamma \in (4\max\{\exp(-d/8), \delta\},1)$, with probability at least $1-\gamma$ over the randomness in both the sample and the algorithm, there holds
\begin{equation*}
    \cR(\bw_{\priv})-\cR(\bw^{*})=    \|\bw^{*}\|_2^2\cdot    \O\bigg(  \frac{\big(d\log(1/\delta)\big)^{\frac{1}{4}} (\log(1/\gamma))^{\frac{1 }{8}}{ \sqrt{\log(n/\delta)} } }{ \sqrt{n\epsilon}  }   + \frac{\sqrt{\log(n)\log(1/\gamma){\log(n/\delta)}} }{\sqrt{n}}   \bigg).
\end{equation*}
\end{theorem}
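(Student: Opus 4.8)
The plan is to follow the same error decomposition as in the unbounded case, namely
\begin{equation*}
    \cR(\bw_{\priv})-\cR(\bw^{*})=[\cR(\bw_{\priv})-\cR(\bar{\bw})]+[\cR(\bar{\bw})-\cR_S(\bar{\bw})]+[\cR_S(\bar{\bw})-\cR_S(\bw^{*})]+[\cR_S(\bw^{*})-\cR(\bw^{*})],
\end{equation*}
where $\bar{\bw}=\frac{1}{T}\sum_{t=1}^T\bw_t$ is the output of the non-private projected SGD, and to bound each of the four terms in turn. The key simplification relative to Theorem~\ref{thm:output-utility} is that the domain is contained in $\mathcal{B}(0,R)$, so the iterates are automatically bounded by $R$ and no refined boundedness argument for the iterates is needed; in particular the gradient magnitudes are controlled by $M+LR^\alpha$ rather than by the growing quantity $M+L(C_\alpha T\eta)^{\alpha/2}$, and Theorem~\ref{thm:difference-w}(b) gives directly $\tilde{\Delta}_\sgd(\gamma)=\O\big(\sqrt{T}\eta^{1/(1-\alpha)}+\frac{T\eta\log(n/\gamma)}{n}\big)$.

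For the first term, the added noise is $\bb\sim\N(0,\sigma^2\mathbf{I}_d)$ with $\sigma^2=\frac{2\log(2.5/\delta)\tilde{\Delta}_\sgd(\delta/2)^2}{\epsilon^2}$; by the Chernoff/concentration bound for the norm of a Gaussian vector, $\|\bb\|_2=\O(\sigma\sqrt{d})$ with probability at least $1-\gamma/4$ provided $\gamma\ge 4\exp(-d/8)$, and then convexity and $\alpha$-H\"older smoothness of $\ell$ (hence local Lipschitzness on the relevant ball) let us bound $\cR(\bw_{\priv})-\cR(\bar{\bw})$ by something like $(M+LR^\alpha)\|\bb\|_2+\tfrac{L}{1+\alpha}\|\bb\|_2^{1+\alpha}$. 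The second term is handled by Lemma~\ref{lem:generror-high-probability} together with the UAS bound $\tilde{\Delta}_\sgd(\gamma/4)$ from Theorem~\ref{thm:difference-w}(b), contributing the $\log(n)$ and $\sqrt{n^{-1}\log(1/\gamma)}$ factors. The third term, the optimization error of projected SGD with constant step size $\eta$ on a convex (non-smooth) objective, is bounded by standard arguments (e.g.\ the regret/averaging analysis in \cite{Nem}) by $\O\big(\frac{\|\bw^*\|_2^2}{T\eta}+\eta\,(M+LR^\alpha)^2\big)$; and the last term is $\O(\sqrt{n^{-1}\log(1/\gamma)})$ by Hoeffding's inequality since $\ell$ is bounded on $\mathcal{B}(0,R)$.

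Assembling, the excess risk is of the order (up to $\|\bw^*\|_2^2$ and logarithmic factors)
\begin{equation*}
    \frac{1}{T\eta}+\eta+\sqrt{T}\eta^{\frac{1}{1-\alpha}}+\frac{T\eta}{n}+\frac{\sigma\sqrt{d}}{1}+\frac{1}{\sqrt{n}},
\end{equation*}
where $\sigma\sqrt{d}\asymp\frac{\sqrt{d\log(1/\delta)}}{\epsilon}\big(\sqrt{T}\eta^{1/(1-\alpha)}+\frac{T\eta}{n}\big)$. Plugging in the prescribed $\eta=1/\big(T\max\{\cdots\}\big)$ makes $T\eta$ equal to the reciprocal of that max, so the three terms $\frac{1}{T\eta}$, $\frac{T\eta\log(n/\gamma)\log(n)}{n}$ (second/third/fourth contributions after using $\tilde{\Delta}_\sgd$) and $\frac{\sqrt{d\log(1/\delta)}\,T\eta}{n\epsilon}$ all collapse to the claimed two-term bound, provided that with the chosen $T\asymp n^{(2-\alpha)/(1+\alpha)}$ (for $\alpha<1/2$) or $T\asymp n$ (for $\alpha\ge 1/2$) the ``fast'' term $\sqrt{T}\eta^{1/(1-\alpha)}$ is dominated and the term $\eta$ is negligible. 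The main obstacle is precisely this bookkeeping: one must verify that the two candidate terms inside the $\max$ in the definition of $\eta$ are exactly the ones that balance the optimization error $\frac{1}{T\eta}$ against, respectively, the generalization error (the $\sqrt{\log(n)\log(1/\gamma)\log(n/\delta)/n}$ term) and the noise-injection error (the $(d\log(1/\delta))^{1/4}(\log(1/\gamma))^{1/8}\sqrt{\log(n/\delta)}/\sqrt{n\epsilon}$ term), and that the choice of $T$ in the two regimes of $\alpha$ is exactly what is needed to kill the residual $\sqrt{T}\eta^{1/(1-\alpha)}$ and to keep the gradient complexity at $\O(n^{(2-\alpha)/(1+\alpha)}+n)$; the threshold $\alpha=1/2$ arises from comparing the exponent $(2-\alpha)/(1+\alpha)$ with $1$. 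Care is also needed because the $\alpha=1$ and $\alpha=0$ endpoints of the H\"older scale must be treated by the corresponding endpoint forms of $c_{\alpha,1},c_{\alpha,2}$ and of $\tilde{\Delta}_\sgd$, but in the bounded-domain case these reduce cleanly.
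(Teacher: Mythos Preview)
Your proposal is correct and follows essentially the same route as the paper: the same four-term decomposition, with the bounded-domain versions (parts (b)) of Lemmas~\ref{lem:output-utility-1}--\ref{lem:output-utility-3} for the noise, generalization, and optimization terms, Hoeffding for the last term, and then the same balancing of $1/(T\eta)$ against the $\eta$-proportional terms to pick $\eta$ and subsequently $T$. One minor bookkeeping point: for the generalization term the paper uses $\tilde{\Delta}_\sgd(\delta/2)$ rather than $\tilde{\Delta}_\sgd(\gamma/4)$, invoking the hypothesis $\gamma\ge 4\delta$ to absorb the $\delta/2$-probability stability event into a $\gamma/8$-probability event; this is what yields the $\log(n/\delta)$ factors (matching the $\sigma$ already fixed by the algorithm) in the stated bound.
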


\noindent The definition of  $\alpha$-H\"older smoothness  and the convexity of $\ell$ imply the following inequalities
\[\|\partial \ell(\bw;z)\|_2\le M+LR^\alpha\text{ and }\ell(\bw;z)\le \ell(0;z) + MR+LR^{1+\alpha},\quad\forall z\in \Z,\bw\in\W.\]
These together with Theorem~\ref{thm:dpSGD-output-Gaussian} and Theorem~\ref{thm:output-utility} imply the privacy and utility guarantees in the above theorem. The detailed proof is given in Subsection~\ref{subsec:output}.

\begin{remark}
The private SGD algorithm with output perturbation was studied in \cite{wu2017bolt} under both the Lipschitz continuity and the strong smoothness assumption, where the excess population risk for one-pass private SGD (i.e. the total iteration number $T= n$) with a bounded parameter domain was bounded by $\O\big((n\epsilon)^{-\frac{1}{2}}(d\log(1/\delta)^{\frac{1}{4}}\big)$. As a comparison, we show that the same rate (up to a logarithmic factor) $\O\big( (n\epsilon)^{-\frac{1}{2}}(d\log(1/\delta))^{\frac{1}{4}} \log^{\frac{1}{2}}(n/\delta)\big)$ can be achieved for general $\ga$-H\"{o}lder smooth losses by taking $T = \O( n^{2-\ga\over 1+\ga}+n).$ Our results extend the output perturbation for private SGD algorithms to a more general class of non-smooth losses. 
\end{remark}

\subsubsection{Differentially Private SGD with Gradient Perturbation}\label{sec:gradient}

\begin{algorithm}[t]
\begin{algorithmic}[1]
\caption{Differentially Private SGD with Gradient perturbation (\texttt{DP-SGD-Gradient})}\label{alg2}
\STATE{\bf Inputs:}  Data $S= \{z_{i}\in \Z: i=1,\ldots, n\}$, loss function $\ell(\bw,z)$ with H\"older parameters $\alpha$ and $L$, the  convex set $\W\subseteq \mathcal{B}(0, R)$, step size $\eta$, number of iterations $T$, privacy parameters $\gep$, $\gd$, and constant $\beta$.
\STATE{{\bf Set:}  $\bw_1=\0$ }
\STATE{Compute $\sigma^2=\frac{14 (M+LR^\alpha)^2 T }{\beta n^2 \epsilon} \Big( \frac{\log(1/\delta)}{(1-\beta)\epsilon} +1\Big)$ }
\FOR { $t=1$ to $T$ } 
\STATE{Sample $i_t\sim \text{Unif}([n])$} 
\STATE{$\bw_{t+1}=\proj_{\W}\big(\bw_t-\eta (\partial \ell(\bw_t;z_{i_t})+\bb_t)\big)$, where $\bb_t\sim \N(0,\sigma^2 \mathbf{I}_d)$}
\ENDFOR
\STATE  {\bf return:}   ${\bw}_\priv = \frac{1}{T}\sum_{t=1}^T\bw_t$ 
\end{algorithmic}
\end{algorithm}

An alternative approach to achieve $(\epsilon,\delta)$-DP is gradient perturbation, i.e., adding Gaussian noise to the stochastic gradient at each update. The detailed algorithm is described in Algorithm~\ref{alg2}, whose privacy guarantee is established in the following theorem.

\begin{theorem}[Privacy guarantee]\label{thm:dpSGD-gradient-Holder} 
Suppose the loss function $\ell$ is nonnegative, convex and  $\alpha$-H\"older smooth with parameter $L$.  Then Algorithm~\ref{alg2} (\texttt{DP-SGD-Gradient})   satisfies $(\gep,\gd)$-DP
if there exists $\beta \in (0,1)$ such that
$\frac{\sigma^2}{ 4(M+LR^\alpha)^2 } \ge 0.67 $
and $\lambda -1 \le \frac{\sigma^2}{6(M+LR^\alpha)^2} \log\Big(\frac{n}{\lambda(1+ \frac{\sigma^2}{4(M+LR^\alpha)^2} )}\Big)$ hold with $\lambda=\frac{\log(1/\delta)}{(1-\beta)\epsilon}+1$. 
\end{theorem}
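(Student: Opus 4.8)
The plan is to view Algorithm~\ref{alg2} as an adaptive composition of $T$ subsampled Gaussian mechanisms, propagate the R\'enyi privacy parameter through the composition and a post-processing step, and finally convert back to $(\epsilon,\delta)$-DP; the specific forms of $\sigma^2$ and $\lambda$ are engineered so that the final privacy parameters collapse exactly to $(\epsilon,\delta)$.

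First I would bound the $\ell_2$-sensitivity of a single stochastic gradient. By $\alpha$-H\"older smoothness and $\W\subseteq\mathcal{B}(0,R)$, for every $\bw\in\W$ and $z\in\Z$ we have $\|\partial\ell(\bw,z)\|_2\le\|\partial\ell(\bw,z)-\partial\ell(0,z)\|_2+\|\partial\ell(0,z)\|_2\le LR^\alpha+M$, so on neighboring datasets the per-sample gradient $\partial\ell(\bw_t,z_{i_t})$ changes by at most $\Delta:=2(M+LR^\alpha)$. Drawing $i_t$ uniformly from $[n]$ is exactly uniform subsampling without replacement at rate $p=1/n$, so by Lemma~\ref{lem:uniform} releasing the noisy gradient $g_t=\partial\ell(\bw_t,z_{i_t})+\bb_t$ at step $t$ satisfies $(\lambda,\rho_0)$-RDP with $\rho_0=3.5p^2\lambda\Delta^2/\sigma^2=14\lambda(M+LR^\alpha)^2/(n^2\sigma^2)$, provided $\sigma^2\ge0.67\Delta^2$ and $\lambda-1\le\frac{2\sigma^2}{3\Delta^2}\log\big(\tfrac{1}{\lambda p(1+\sigma^2/\Delta^2)}\big)$. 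Substituting $\Delta^2=4(M+LR^\alpha)^2$ and $p=1/n$ turns these two conditions into precisely the hypotheses $\frac{\sigma^2}{4(M+LR^\alpha)^2}\ge0.67$ and $\lambda-1\le\frac{\sigma^2}{6(M+LR^\alpha)^2}\log\big(\tfrac{n}{\lambda(1+\sigma^2/(4(M+LR^\alpha)^2))}\big)$ stated in the theorem.

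Next I would assemble the whole run. Since $\bw_t$ is a deterministic function of $g_1,\dots,g_{t-1}$ and the fixed initialization $\bw_1=\0$, the sequence of mechanisms releasing $g_1,\dots,g_T$ is adaptively composed, so Lemma~\ref{lem:composition_RDP} gives that releasing $(g_1,\dots,g_T)$ is $(\lambda,T\rho_0)$-RDP with $T\rho_0=14\lambda T(M+LR^\alpha)^2/(n^2\sigma^2)$. The returned vector $\bw_{\priv}=\frac1T\sum_{t=1}^T\bw_t$ is a function of $(g_1,\dots,g_T)$, so post-processing (Lemma~\ref{lemma:post-processing}) preserves $(\lambda,T\rho_0)$-RDP. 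Now plug in $\sigma^2=\frac{14(M+LR^\alpha)^2T}{\beta n^2\epsilon}\big(\frac{\log(1/\delta)}{(1-\beta)\epsilon}+1\big)$: the factors $14$, $T$, $(M+LR^\alpha)^2$ and $n^2$ cancel, leaving $T\rho_0=\lambda\beta\epsilon/\big(\frac{\log(1/\delta)}{(1-\beta)\epsilon}+1\big)=\beta\epsilon$, since $\lambda=\frac{\log(1/\delta)}{(1-\beta)\epsilon}+1$. Finally Lemma~\ref{lemma:RDP_to_DP} converts $(\lambda,\beta\epsilon)$-RDP into $\big(\beta\epsilon+\frac{\log(1/\delta)}{\lambda-1},\delta\big)$-DP, and $\frac{\log(1/\delta)}{\lambda-1}=(1-\beta)\epsilon$, so Algorithm~\ref{alg2} is $(\beta\epsilon+(1-\beta)\epsilon,\delta)=(\epsilon,\delta)$-DP.

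The algebra above is routine; the main thing to get right is the adaptivity bookkeeping in the composition step — one must argue that, conditioned on the previously released noisy gradients, each step is a fresh subsampled Gaussian mechanism with the \emph{same} sensitivity bound, which works precisely because the bound $M+LR^\alpha$ on $\|\partial\ell(\bw,z)\|_2$ is uniform over $\bw\in\W$ regardless of where the iterate $\bw_t$ has landed, and because the projection $\proj_\W$ and the averaging are post-processing and hence incur no further privacy cost. Beyond that, the only care needed is checking, as indicated above, that the two displayed conditions in the theorem statement are exactly the instantiation of the preconditions of Lemma~\ref{lem:uniform} with $\Delta=2(M+LR^\alpha)$ and $p=1/n$.
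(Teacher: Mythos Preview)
Your proposal is correct and follows essentially the same approach as the paper: bound the per-step gradient sensitivity by $2(M+LR^\alpha)$ via H\"older smoothness and $\W\subseteq\mathcal{B}(0,R)$, invoke the subsampled Gaussian RDP bound (Lemma~\ref{lem:uniform}) with $p=1/n$, compose adaptively over $T$ steps, post-process, and convert back to $(\epsilon,\delta)$-DP with the chosen $\lambda$ and $\sigma^2$ making the final parameters telescope to $(\epsilon,\delta)$. The only cosmetic difference is that the paper applies post-processing at each step (to $\bw_{t+1}$) before composing, whereas you compose the noisy gradients first and post-process once at the end; both orderings are valid and yield the same guarantee.
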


\noindent  Since $\W\subseteq \mathcal{B}(0, R)$,  the H\"older smoothness of $\ell$ implies that $\|\partial \ell(\bw_t,z)\|_2\le M+LR^\alpha$ for any $t\in[T]$ and any $z\in \Z$, from which we know that the $\ell_2$-sensitivity of the function $\mathcal{M}_t=\partial \ell(\bw_t,z)$ can be bounded by $2(M+LR^\alpha)$. By Lemma~\ref{lem:uniform} and the post-processing property of DP, it is easy to show that the update of $\bw_t$ satisfies  $(\frac{\log(1/\delta)}{(1-\beta)\epsilon}+1, \frac{\beta \epsilon}{T})$-RDP for any $t\in[T]$. Furthermore, by the composition theorem of RDP and the relationship between $(\epsilon,\delta)$-DP and RDP, we can show that the proposed algorithm satisfies $(\epsilon,\delta)$-DP. The detailed proof can be found in Subsection~\ref{subsec:gradient}.

\bigskip
Other than the privacy guarantees, the \texttt{DP-SGD-Gradient} algorithm also enjoys utility guarantees as stated in the following theorem.

\begin{theorem}[Utility guarantee]\label{thm:gradient-utility-proj}
Suppose the loss function $\ell$ is nonnegative, convex and $\alpha$-H\"older smooth with parameter $L$.  Let $\bw_{\priv}$ be the output produced by Algorithm~\ref{alg2}
with $\eta=\frac{1}{T} \max\big\{ \frac{\sqrt{ \log(n)\log(n/\gamma) \log(1/\gamma)  }   }{\sqrt{n}}, \frac{  \sqrt{d \log(1/\delta)} (\log(1/\gamma))^{\frac{1}{4}}}{n\epsilon} \big\} $. Furthermore, let $T\asymp n^{
\frac{2-\alpha}{1+\alpha}} $ if $\alpha < \frac{1}{2}$, and $T\asymp n$ else. Then, for any $\gamma \in (18\exp(-Td/8),1)$, with probability at least $1-\gamma$ over the randomness in both the sample and the algorithm, there holds
\begin{equation*}
    \cR(\bw_{\priv})-\cR(\bw^{*})= \|\bw^{*}\|_2^{2}\cdot\O\bigg(   \frac{ \sqrt{d\log(1/\delta) \log(1/\gamma) }}{n\epsilon} + \frac{\sqrt{\log(n){\log(n/\gamma) }\log(1/\gamma) }   }{ \sqrt{n} } \bigg).
\end{equation*}
\end{theorem}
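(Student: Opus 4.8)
The plan is to bound the excess population risk through the three-term decomposition
\[
\cR(\bw_{\priv})-\cR(\bw^{*}) = \big[\cR(\bar\bw)-\cR_S(\bar\bw)\big] + \big[\cR_S(\bar\bw)-\cR_S(\bw^{*})\big] + \big[\cR_S(\bw^{*})-\cR(\bw^{*})\big],
\]
where $\bar\bw=\frac1T\sum_{t=1}^T\bw_t$ is exactly the output $\bw_{\priv}$, since Algorithm~\ref{alg2} returns the running average without any additional perturbation. The last (sampling) term is an average of i.i.d.\ terms $\ell(\bw^{*};z_i)$; using $\alpha$-H\"older smoothness and $\W\subseteq\mathcal{B}(0,R)$ one has $\ell(\bw^{*};z)\le M_0+MR+LR^{1+\alpha}$ for all $z$, so Hoeffding's inequality bounds it by $\O\big((M_0+MR+LR^{1+\alpha})\sqrt{\log(1/\gamma)/n}\big)$ with probability at least $1-\gamma/3$.

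For the generalization term I would first observe that the noisy projected SGD of Algorithm~\ref{alg2} inherits the UAS bound $\tilde{\Delta}_\sgd(\gamma)$ of plain projected SGD from Theorem~\ref{thm:difference-w}(b). Indeed, running the algorithm on neighbouring datasets $S\simeq S'$ while coupling the internal randomness (the same indices $i_t$ and the same Gaussian vectors $\bb_t$), the noise sits inside both projections with identical values, so nonexpansiveness of $\proj_{\W}$ gives $\|\bw_{t+1}-\bw'_{t+1}\|_2\le\|(\bw_t-\bw'_t)-\eta(\partial\ell(\bw_t;z_{i_t})-\partial\ell(\bw'_t;z'_{i_t}))\|_2$, which is exactly the recursion analysed in the noiseless case; moreover the projection confines every iterate to $\mathcal{B}(0,R)$, so the gradient-magnitude bound $M+LR^\alpha$ remains valid along the trajectory. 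Hence Theorem~\ref{thm:difference-w}(b) applies verbatim, and feeding $\Delta_\A=\tilde{\Delta}_\sgd(\gamma)$ and $G=R$ into Lemma~\ref{lem:generror-high-probability} bounds the generalization term, with probability at least $1-\gamma/3$, by $\O\big((M+LR^\alpha)\tilde{\Delta}_\sgd(\gamma)\log(n)\log(1/\gamma)+(M_0+(M+LR^\alpha)R)\sqrt{\log(1/\gamma)/n}\big)$; by the remark after Theorem~\ref{thm:difference-w} we may use $\tilde{\Delta}_\sgd(\gamma)=\O\big(\sqrt{T}\eta^{1/(1-\alpha)}+T\eta\log(n/\gamma)/n\big)$.

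The optimization term is treated by a high-probability convergence analysis of projected SGD on the convex objective $\cR_S$ with the perturbed stochastic gradient $g_t=\partial\ell(\bw_t;z_{i_t})+\bb_t$, which satisfies $\E[g_t\mid\mathcal{F}_{t-1}]=\partial\cR_S(\bw_t)$. From $\langle g_t,\bw_t-\bw^{*}\rangle\le\frac{\|\bw_t-\bw^{*}\|_2^2-\|\bw_{t+1}-\bw^{*}\|_2^2}{2\eta}+\frac{\eta}{2}\|g_t\|_2^2$ (nonexpansiveness of the projection, $\bw^{*}\in\W$) and convexity of $\cR_S$, summing over $t$, telescoping, and using $\bw_1=\0$ gives
\[
\cR_S(\bar\bw)-\cR_S(\bw^{*})\le\frac{\|\bw^{*}\|_2^2}{2\eta T}+\frac{\eta}{2T}\sum_{t=1}^T\|g_t\|_2^2+\frac1T\sum_{t=1}^T\langle\partial\cR_S(\bw_t)-g_t,\ \bw_t-\bw^{*}\rangle.
\]
The first two summands become deterministic once we use $\|g_t\|_2^2\le2(M+LR^\alpha)^2+2\|\bb_t\|_2^2$ and a $\chi^2$-tail bound on $\sum_t\|\bb_t\|_2^2$ (this is where the hypothesis $\gamma>18\exp(-Td/8)$ is needed), giving $\O\big(\|\bw^{*}\|_2^2/(\eta T)+\eta((M+LR^\alpha)^2+\sigma^2 d)\big)$; the last summand is a martingale-difference sum whose increments have an unbounded (Gaussian) component, so after a high-probability bound on $\|\bb_t\|_2$ and $\|\bw_t-\bw^{*}\|_2\le2R$ a sub-Gaussian/Freedman-type concentration (in the spirit of \cite{Nem}) controls it by $\O\big(R(\sigma\sqrt{d}+M+LR^\alpha)\sqrt{\log(1/\gamma)/T}\big)$, all with probability at least $1-\gamma/3$.

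It remains to assemble the three bounds and substitute $\sigma^2\asymp(M+LR^\alpha)^2T\log(1/\delta)/(n^2\epsilon^2)$ from Algorithm~\ref{alg2}, the prescribed step size $\eta$ (so that $\eta T\asymp\min\{\sqrt{n}/\mathrm{polylog},\ n\epsilon/(\sqrt{d\log(1/\delta)}\,\mathrm{polylog})\}$), and the $\alpha$-dependent choice of $T$, treating $R\asymp\|\bw^{*}\|_2\ge1$ and $M,L$ as constants. Then $\|\bw^{*}\|_2^2/(\eta T)$ reproduces the advertised $\|\bw^{*}\|_2^2\cdot\O\big(\sqrt{d\log(1/\delta)\log(1/\gamma)}/(n\epsilon)+\sqrt{\log(n)\log(n/\gamma)\log(1/\gamma)}/\sqrt{n}\big)$, and one checks that every remaining term is of the same or smaller order: the generalization term $T\eta\log(n/\gamma)/n\lesssim n^{-1/2}\,\mathrm{polylog}$, the noise terms $\eta\sigma^2 d$ and $R\sigma\sqrt{d\log(1/\gamma)/T}$ are both $\lesssim\sqrt{d\log(1/\delta)\log(1/\gamma)}/(n\epsilon)$, the sampling term is $\lesssim n^{-1/2}$, and the non-smooth stability term $\sqrt{T}\eta^{1/(1-\alpha)}$ is $\lesssim n^{-1/2}$ precisely because $T\asymp n^{(2-\alpha)/(1+\alpha)}$ when $\alpha<1/2$ (and it is negligible for $\alpha\ge1/2$ with $T\asymp n$). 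The step I expect to be the main obstacle is this combined task: obtaining the martingale concentration for the perturbed gradients with a tight enough $\sigma\sqrt{d}$ dependence to recover the optimal $\sqrt{d\log(1/\delta)}/(n\epsilon)$ privacy rate, and simultaneously tuning $T$ so that the $\alpha$-H\"older (non-smooth) stability contribution $\sqrt{T}\eta^{1/(1-\alpha)}$ stays below the statistical floor $n^{-1/2}$ over the whole range $\alpha\in[0,1]$.
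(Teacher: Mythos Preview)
Your proposal is correct and follows essentially the same route as the paper. The paper uses the identical three-term decomposition, handles the sampling term by Hoeffding, the generalization term by coupling the noise so that Theorem~\ref{thm:difference-w}(b) applies to the noisy iterates and then invoking Lemma~\ref{lem:generror-high-probability}, and the optimization term by the projected-SGD telescoping identity together with a sub-Gaussian bound on $\sum_t\langle\bb_t,\cdot\rangle$ and a $\chi^2$-tail bound on $\sum_t\|\bb_t\|_2^2$; the balancing of $\frac{1}{T\eta}$ against $\eta\big(\frac{Td\log(1/\delta)}{n^2\epsilon^2}+\frac{T\log(n)\log(n/\gamma)\log(1/\gamma)}{n}\big)$ and the choice of $T$ are exactly as you describe. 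The only cosmetic difference is that the paper routes the sampling-noise part of the optimization error through the intermediate quantity $\ell(\bw_t,z_{i_t})$ (two separate Azuma--Hoeffding applications) rather than bundling it with the Gaussian noise into a single martingale term as you do, and the paper bounds $\langle\bb_t,\bw_t-\bw^*-\eta\partial\ell(\bw_t,z_{i_t})\rangle$ directly as a $\sigma^2\|\cdot\|_2^2$-sub-Gaussian scalar (no $\sqrt d$), which is slightly sharper than your route via $\|\bb_t\|_2$ but does not affect the final rate since the $\eta\sigma^2 d$ term dominates anyway.
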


\noindent 
Our basic idea to prove Theorem \ref{thm:gradient-utility-proj} is to use the following error decomposition: 
\begin{equation}\label{eq:Gradient-error-decom2}
    \cR(\bw_{\priv})-\cR(\bw^{*})=[\cR(\bw_{\priv})-\cR_S(\bw_{\priv})]
    +[\cR_S(\bw_{\priv})-\cR_S(\bw^{*})] +[\cR_S(\bw^{*})-\cR(\bw^{*})]. 
\end{equation} 
Similar to the proof of Theorem~\ref{thm:output-utility}, the generalization error $\cR(\bw_{\priv})-\cR_S(\bw_{\priv})$ can be handled by the UAS bound, the optimization error $\cR_S(\bw_{\priv})-\cR_S(\bw^{*})$ can be estimated by standard techniques in optimization \cite[e.g.][]{Nem}, and the last term $\cR_S(\bw^{*})-\cR(\bw^{*})$ can be bounded by the Hoeffding  inequality. 
The detailed proof can be found in Subsection~\ref{subsec:gradient}.

\begin{remark}
We now compare our results with the related work under a bounded domain assumption.
The work \cite{bassily2019private} established the optimal rate $\O(\frac{1}{n\epsilon}{\sqrt{d\log(1/\delta)}}+\frac{1}{\sqrt{n}})$ for the excess population risk of private SCO algorithm in either smooth case ($\alpha=1$) or non-smooth case ($\alpha=0$). However, their algorithm has a large  gradient complexity  $\O\Big(n^{4.5}\sqrt{\epsilon}+ \frac{n^{6.5} \epsilon^{4.5}}{(d\log(\frac{1}{\delta}))^2}\Big)$. 
The work \cite{feldman2020private} proposed a private phased ERM algorithm for SCO, which can achieve the optimal excess population risk for non-smooth losses with a better gradient complexity of the order $\O(n^2\log(1/{\delta}))$. The very recent work \cite{bassily2020stability} improved the gradient complexity to $\O(n^2)$.  As a comparison, we show that SGD with gradient complexity  $\O( n^{2-\ga\over 1+\ga}+ n)$ is able to achieve the optimal (up to logarithmic terms) excess population risk $\O(\frac{1}{n\epsilon}{\sqrt{d\log(1/\delta)}}+\frac{1}{\sqrt{n}})$ for general $\alpha$-H\"older smooth losses. Our results match the existing gradient complexity for both the smooth case in \cite{bassily2019private} and the Lipschitz continuity case \cite{bassily2020stability}. An interesting observation is that our algorithm can achieve the optimal utility guarantee with the linear  gradient complexity $\O(n)$ for $\ga\ge 1/2$, which shows that a relaxation of the strong smoothness from $\alpha=1$ to $\alpha\geq1/2$ does not bring any harm in both the generalization and computation complexity.
\end{remark}

Now, we give a sufficient condition for the existence of $\beta$ in Theorem~\ref{thm:dpSGD-gradient-Holder} under a specific parameter setting.  
\begin{lemma}\label{lem:condition-beta}
    Let $n \ge 18$, $T=n$ and $\delta=1/{n^2}$. If $ \epsilon \ge   \frac{ 7(n^{\frac{1}{3}}-1) + 4\log(n) n +7 }{ 2n(n^{\frac{1}{3}}-1) }, $
    then there exists $\beta \in (0,1)$ such that  Algorithm~\ref{alg2} satisfies $(\epsilon,\delta)$-DP. 
\end{lemma}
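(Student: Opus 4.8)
The plan is to invoke Theorem~\ref{thm:dpSGD-gradient-Holder}, so the whole task reduces to exhibiting, for the stated parameter regime, a $\beta\in(0,1)$ satisfying its two hypotheses. First I would substitute $T=n$ and $\delta=1/n^2$ (so $\log(1/\delta)=2\log n$) into the noise prescription of Algorithm~\ref{alg2}, obtaining $\sigma^2=\frac{14(M+LR^\alpha)^2\lambda}{\beta n\epsilon}$ with $\lambda=\frac{2\log n}{(1-\beta)\epsilon}+1$. The key simplification is that $M+LR^\alpha$ cancels out of both hypotheses: writing $v:=\frac{\sigma^2}{4(M+LR^\alpha)^2}=\frac{7\lambda}{2\beta n\epsilon}$, the two hypotheses of Theorem~\ref{thm:dpSGD-gradient-Holder} become the purely numerical conditions $v\ge 0.67$ (call it (C1)) and $\lambda-1\le\frac{2v}{3}\log\frac{n}{\lambda(1+v)}$ (call it (C2)), involving only the scalars $n,\epsilon,\beta$. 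Since $\beta\mapsto\lambda$ is an increasing bijection of $(0,1)$ onto $(\frac{2\log n}{\epsilon}+1,\infty)$ and $v$ is then determined by $\lambda$ and $\epsilon$, the problem is: for $n\ge18$, $\delta=n^{-2}$, $T=n$, and every $\epsilon\ge\epsilon^\ast$, produce $\lambda>\frac{2\log n}{\epsilon}+1$ (equivalently $\beta$) for which (C1) and (C2) hold.

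The approach is to choose $\lambda$ of order $n^{1/3}$, which is the correct scale: when $\beta$ is small one has $\lambda-1\approx\frac{2\log n}{\epsilon}$, and at $\epsilon=\epsilon^\ast=\Theta\!\big(\tfrac{\log n}{n^{1/3}}\big)$ this forces $\lambda-1\approx n^{1/3}$. Concretely I would impose a relation of the form $1+v=c\,n^{1/3}$ (which fixes $\beta=\frac{7\lambda}{2(cn^{1/3}-1)n\epsilon}$), so that combining this with $\lambda=\frac{2\log n}{(1-\beta)\epsilon}+1$ (equivalently $\beta+(1-\beta)=1$) collapses to a single scalar equation in $\lambda$; this equation is a quadratic whose right branch is spurious and whose left root $\lambda=\lambda(\epsilon,n)$ lies in $(1,n^{1/3}]$, is decreasing in $\epsilon$, and equals $n^{1/3}$ exactly when $\epsilon$ equals the stated $\epsilon^\ast=\frac{7(n^{1/3}-1)+4n\log n+7}{2n(n^{1/3}-1)}$ — this is how the algebraic form of the threshold arises. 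With such a choice (C1) reduces to a comparison of constants that holds once $n\ge18$, and checking $\beta\in(0,1)$ reduces to $7\lambda\le 7n^{1/3}<4n\log n\le 2(n^{1/3}-1)n\epsilon$, which also holds for $n\ge18$ and $\epsilon\ge\epsilon^\ast$; the content is then entirely in (C2).

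The verification of (C2) near $\epsilon=\epsilon^\ast$ is the main obstacle. It is a transcendental inequality whose relevant root $\lambda$ is largest (close to $n^{1/3}$) precisely at the endpoint $\epsilon=\epsilon^\ast$, so one cannot bound $\lambda$ away from its worst value and the naive estimate $\log\frac{n}{\lambda(1+v)}\ge\frac13\log n$ is not by itself enough; instead one must track how fast $\lambda(\epsilon,n)$ drops below $n^{1/3}$ as $\epsilon$ increases and trade that decrease against the logarithm — and it is exactly here that the constant $18$ and the exponent $\frac13$ are consumed, with the constant $c$ in $1+v=cn^{1/3}$ (or the precise binding of (C2) at the threshold) tuned accordingly. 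I would organize it by splitting $\epsilon\in[\epsilon^\ast,\infty)$ into a near-threshold regime $[\epsilon^\ast,c'\epsilon^\ast]$, handled by a monotonicity/first-order expansion of the gap in (C2) around the distinguished root together with the exact shape of $\epsilon^\ast$, and a large-$\epsilon$ regime $\epsilon\ge c'\epsilon^\ast$, handled cheaply: taking $\beta=\Theta\big(\tfrac{1}{n\epsilon}\big)$ drives $\lambda\to 1^+$, so the left side of (C2) is $o(1)$ while $v=\Theta(1)\ge0.67$ keeps its right side bounded below by a positive constant, making (C1) and (C2) automatic. The remaining pieces — solving the quadratic for $\lambda$, checking convexity and the location of its minimum, and the elementary numeric inequalities for $n\ge18$ — are routine and I would relegate them to direct computation.
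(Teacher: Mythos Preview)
Your reduction to the two numerical conditions (C1) and (C2) via $v=\tfrac{7\lambda}{2\beta n\epsilon}$ is fine, and the observation that $c=1$ in $1+v=n^{1/3}$ recovers exactly the stated threshold $\epsilon^\ast$ is correct. But the actual verification of (C2) does not go through as you describe, for two reasons.

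First, the large-$\epsilon$ regime is mishandled. With $\beta=\Theta(1/(n\epsilon))$ one has $\lambda-1=\tfrac{2\log n}{(1-\beta)\epsilon}\approx\tfrac{2\log n}{\epsilon}$, which is $o(1)$ only when $\epsilon\gg\log n$. At $\epsilon=c'\epsilon^\ast$ with a constant $c'$ this is $\Theta(n^{1/3}/c')$, not $o(1)$, and correspondingly $v=\tfrac{7\lambda}{2c_0}$ is $\Theta(n^{1/3})$, not $\Theta(1)$. So neither of your asymptotic claims holds on the regime you declare, and the split $[\epsilon^\ast,c'\epsilon^\ast]\cup[c'\epsilon^\ast,\infty)$ leaves the bulk of the range unhandled by the ``cheap'' argument.

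Second, the near-threshold piece fails for small $n$. At your extremal choice $\lambda=n^{1/3}$, $1+v=n^{1/3}$ (i.e.\ $c=1$, $\epsilon=\epsilon^\ast$), condition (C2) reads $n^{1/3}-1\le\tfrac{2}{9}(n^{1/3}-1)\log n$, i.e.\ $\log n\ge 4.5$, which requires $n\gtrsim 90$---contradicting the hypothesis $n\ge 18$. So the ``monotonicity/first-order expansion'' cannot salvage (C2) at the endpoint for $n\in[18,90)$, and the tuning of $c$ you allude to cannot help either, since $c=1$ is forced by the given form of $\epsilon^\ast$.

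The paper avoids both issues by a different device: rather than trying to satisfy (C2) directly, it splits the rearranged inequality $\tfrac{3\beta n\epsilon(\lambda-1)}{7\lambda}+\log\lambda+\log(1+v)\le\log n$ into three pieces and asks each to be $\le\tfrac13\log n$. These become three explicit elementary constraints on $\beta$ (one linear upper bound, one from $\lambda\le n^{1/3}$, one quadratic $g(\beta)\le 0$), together with the quadratic $f(\beta)\ge 0$ from (C1); the work is then purely to check that the feasible intervals intersect, which reduces to comparing the smaller root $\beta_3$ of $g$ against the other bounds. This additive decomposition is what absorbs the slack needed for small $n$ and makes the threshold $\epsilon^\ast$ emerge from an intersection condition rather than from an equality case of (C2).
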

 
\begin{figure*}[!htbp]
  \centering
  \includegraphics[scale=0.5]{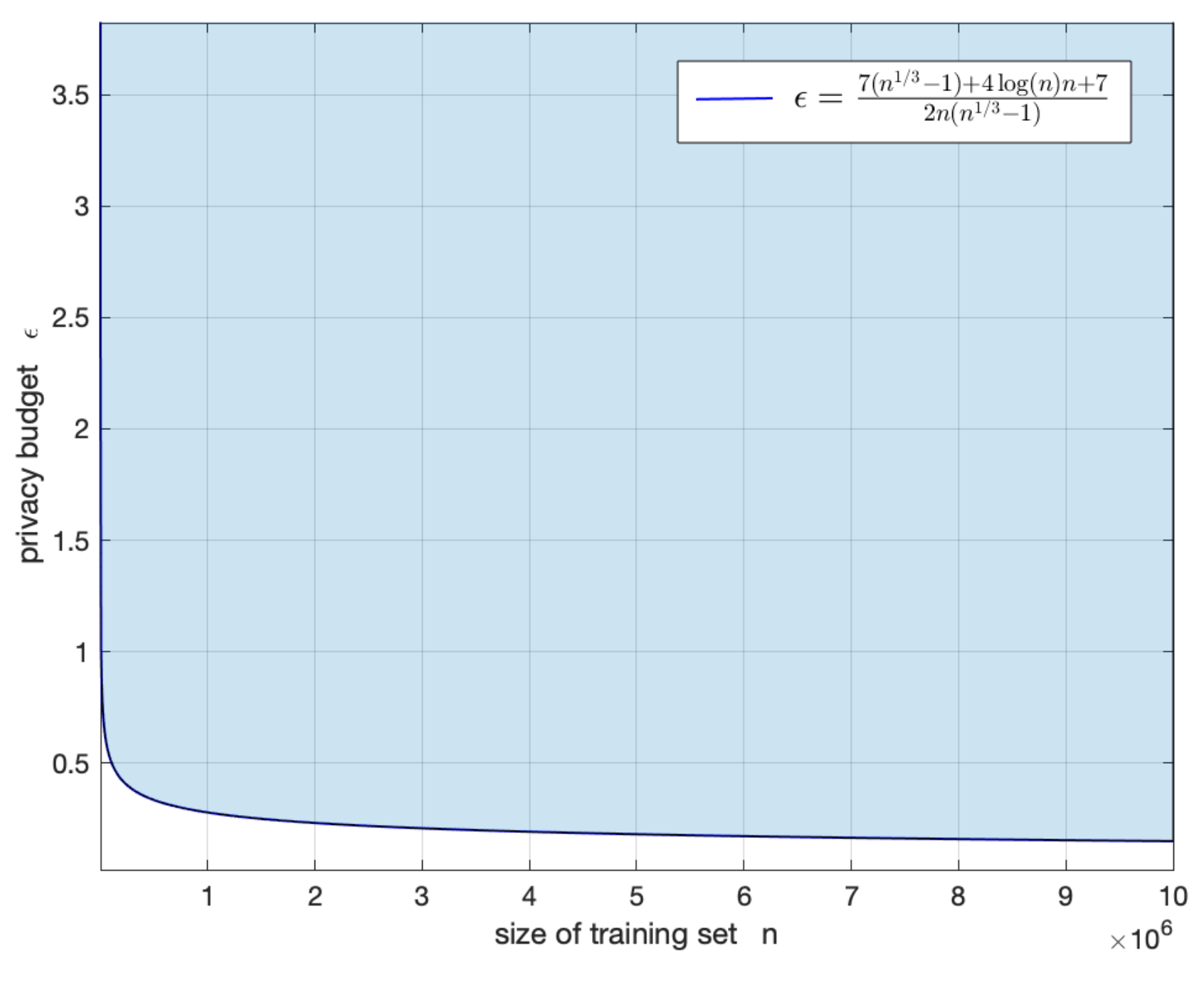} 
    \caption{The sufficient condition for the existence of $\beta$ in Lemma~13. The shaded area is the area where the sufficient condition in Lemma 13 holds true, i.e., $ \epsilon \ge   \big( 7(n^{\frac{1}{3}}-1) + 4\log(n) n +7 \big)/\big( 2n(n^{\frac{1}{3}}-1)  \big)$.  \label{fig:beta}} 
\end{figure*}

{
\begin{remark}
Privacy parameters $\epsilon$ and $\delta$ together quantify the privacy risk. 
$\epsilon$ is often called the privacy budget controlling the degree of privacy leakage. A larger value of $\epsilon$ implies higher privacy risk. Therefore, the value of $\epsilon$ depends on how much privacy the user needs to protect. Theoretically, the value of $\epsilon$ is less than 1. However, in practice,  to obtain the desired utility, a larger privacy budget, i.e., $\epsilon \ge 1$, is always acceptable  \cite{wu2017bolt,song2015learning}. For instance, Apple uses a privacy budget $\epsilon=8$ for Safari Auto-play intent detection, and $\epsilon=2$ for Health types\footnote{\url{https://www.apple.com/privacy/docs/Differential_Privacy_Overview.pdf}}. 
Parameter $\delta$ is the probability with which $e^\epsilon$ fails to bound the ratio between the two probabilities in the definition of differential privacy, i.e., the probability of privacy protection failure. For meaningful privacy guarantees, according to \cite{dwork2014algorithmic} the value of $\delta$ should be much smaller than $1/n$. In particular, we always choose $\delta=1/n^2$.  For \texttt{DP-SGD-Gradient} algorithm,  
another constant we should discuss is $\beta$ which depends on the choice of the number of iterations $T$, size of training data $n$, privacy parameters $\epsilon$ and $\delta$. 
The appearance of this parameter is due to the use of subsampling result for RDP (see Lemma~\ref{lem:uniform}).  The condition in Lemma~\ref{lem:condition-beta} ensures the existence of $\beta\in (0,1)$ such that Algorithm~\ref{alg2} satisfies DP. In practical applications, we search in $(0,1)$ for all $\beta$ that satisfy the RDP conditions in Theorem~\ref{thm:dpSGD-gradient-Holder}. Note that the closer the $\beta$ is to $1/2$, the smaller the variance of the noise added to the algorithm in each iteration. Therefore, we choose the value that is closest to $1/2$ of all $\beta$ that meets the RDP conditions as the value of $\beta$.
\end{remark}
}

We end this section with a final remark on the challenges of proving DP for Algorithm \ref{alg2} when $\W$ is   unbounded.

\begin{remark} To make Algorithm~\ref{alg2} satisfy DP when $\W= \R^d $, the variance $\sigma_t$ of the noise $\bb_t$ added in the $t$-th iteration should be proportional to the $\ell_2$-sensitivity $ \Delta_{t} =  \|  \partial \ell(\bw_{t },z_{i_{t }}) - \partial \ell(\bw_{t },z'_{i_{t }})\|_2$. 
The definition of H\"older smoothness implies that $\Delta_{t}\le 2(M+L\|\bw_{t}\|^{\alpha}_2)$. When $\alpha=0$, we have $\Delta_{t}\le 2(M+L)$ and the privacy guarantee can be established in a way similar to Theorem~\ref{thm:dpSGD-gradient-Holder}. When $\alpha\in (0,1]$, we have to establish an upper bound of $\|\bw_t\|_2$. Since $\bw_{t}=\bw_{t-1}-\eta ( \partial \ell( \bw_{t-1},z_{i_{t-1}} ) + \bb_{t-1})$ ($ \bb_{t-1} \sim \N(0, \sigma_{t-1}^2 \mathbf{I}_{d})$), we can only give a bound of $\|\bw_t\|_2$ with high probability. Thus, the  sensitivity $\Delta_t$ can not be uniformly bounded in this case. Therefore, the first challenge is how to analyze the privacy guarantee when the sensitivity changes at each iteration and all of them can not be uniformly bounded. Furthermore, by using the property of the Gaussian vector, we can prove that $\|\bw_{t }\|_2=\O( \sqrt{ t \eta} +  \eta \sum_{j=1}^{t-1} \sigma_j +  \eta \sqrt{ d \sum_{ j=1 }^{t-1}  \sigma_j^2})$ with high probability. However, as mentioned above, the variance  $\sigma_{t }$ should be proportional to $\Delta_{t }$ whose upper bound involves $\|\bw_{t }\|^\alpha_2$. Thus, $\sigma_t$ is  proportional to  $   (t\eta)^{\alpha/2} + \eta^\alpha (\sum_{j=1}^{t-1} \sigma_j)^\alpha + \eta^\alpha  (d\sum_{j=1}^{t-1} \sigma_j^2)^{\alpha/2}.$ For this reason, it seems difficult to give a clear expression for an upper bound of $\|\bw_t\|_2.$  \end{remark}

\section{Proofs of Main Results}\label{sec:proofs}
{ 
Before presenting the detailed proof, we first introduce some  useful lemmas on the concentration behavior of random variables. 
\begin{lemma}[Chernoff bound for Bernoulli variable \cite{wainwright2019high}]\label{lem:chernoff-Bernoulli}
 Let $X_1,\ldots,X_k$ be independent random variables taking values in $\{0,1\}$. Let $X=\sum_{i=1}^{k}X_i$ and $\mu=\EX[X]$. The following statements hold.
 \begin{enumerate}[label=({\alph*})]
\item For any $\Tilde{ \gamma}\in(0,1)$,  with probability at least $1-\exp\big(-\mu\Tilde{ \gamma}^2/3\big)$, there holds  $X\leq (1+\Tilde{ \gamma})\mu$.
\item For any $\Tilde{ \gamma}
\ge 1$, with probability at least $1-\exp\big(-\mu\Tilde{ \gamma}/3\big)$, there holds  $X\leq (1+\Tilde{ \gamma})\mu$.
 \end{enumerate}
\end{lemma}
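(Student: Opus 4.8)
The plan is to use the standard exponential-moment (Chernoff) method, and then to reduce each of the two tail bounds to an elementary scalar inequality.

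First I would introduce $p_i = \Prob(X_i = 1) = \EX[X_i]$, so that $\mu = \sum_{i=1}^k p_i$, and apply Markov's inequality to the nonnegative variable $e^{sX}$ for $s>0$:
\[
\Prob\big(X \ge (1+\tilde{\gamma})\mu\big) \le e^{-s(1+\tilde{\gamma})\mu}\,\EX[e^{sX}].
\]
Using independence, $\EX[e^{sX}] = \prod_{i=1}^k\big(1 + p_i(e^s-1)\big)$, and then $1+x\le e^x$ gives $\EX[e^{sX}] \le e^{\mu(e^s-1)}$. Minimizing the exponent $\mu(e^s-1) - s(1+\tilde{\gamma})\mu$ over $s>0$ is a one-line calculus exercise giving $s=\ln(1+\tilde{\gamma})$, and hence
\[
\Prob\big(X\ge(1+\tilde{\gamma})\mu\big)\le \exp\big(-\mu\,\psi(\tilde{\gamma})\big),\qquad \psi(t):=(1+t)\ln(1+t)-t\ge 0 .
\]

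It then suffices to prove the two elementary inequalities $\psi(t)\ge t^2/3$ for $t\in(0,1)$, which yields part (a) after passing to the complementary event, and $\psi(t)\ge t/3$ for $t\ge1$, which yields part (b). For the first I would let $f(t)=\psi(t)-t^2/3$, note that $f(0)=f'(0)=0$ with $f'(t)=\ln(1+t)-2t/3$, and observe that $f''(t)=(1+t)^{-1}-2/3$ changes sign at $t=1/2$; tracking $f'$ — it increases on $[0,1/2]$ and then decreases to $f'(1)=\ln 2-2/3>0$ — shows $f'>0$ on $(0,1]$, so $f\ge0$ there. For the second I would let $g(t)=\psi(t)-t/3$ and check $g(1)=2\ln 2-4/3>0$ together with $g'(t)=\ln(1+t)-1/3>0$ for $t\ge1$, giving $g\ge0$ on $[1,\infty)$.

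The only point requiring a little care is the behaviour of $f$ near $t=1$ in part (a): since $f''$ is negative on $(1/2,1]$, $f$ is not convex on all of $[0,1]$, so one cannot conclude $f\ge0$ from $f(0)=f'(0)=0$ alone, and must instead verify directly that $f'$ stays positive throughout the interval (which reduces to the numerical check $\ln 2>2/3$). All remaining steps are routine, and taking complements of the tail events $\{X\ge(1+\tilde\gamma)\mu\}$ completes both parts.
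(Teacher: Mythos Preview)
Your proof is correct and follows the standard Chernoff/exponential-moment argument. Note, however, that the paper does not supply its own proof of this lemma: it is stated with a citation to \cite{wainwright2019high} and used as a black box in the proof of Theorem~\ref{thm:difference-w}. So there is no ``paper's proof'' to compare against; your argument is exactly the textbook derivation one would find in the cited reference, including the reduction to the scalar inequalities $\psi(t)\ge t^2/3$ on $(0,1)$ and $\psi(t)\ge t/3$ on $[1,\infty)$, and your handling of the non-convexity of $f$ near $t=1$ via the sign of $f'$ is the right way to close that gap.
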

\begin{lemma}[Chernoff bound for the $\ell_2$-norm of Gaussian vector \cite{wainwright2019high}] \label{lem:chernoff-gaussian}
Let $X_1,\ldots,X_k$ be i.i.d. standard Gaussian random variables, and $\mathbf{X}=[X_1,\ldots,X_k]\in \R^k$. Then for any $t\in (0,1)$, with probability at least $1-\exp(-kt^2/8)$, there holds
$ \|\mathbf{X}\|_2^2\le  k(1+t).$
\end{lemma}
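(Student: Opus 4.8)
The plan is to observe that $\|\mathbf{X}\|_2^2=\sum_{i=1}^k X_i^2$ follows a chi-squared distribution with $k$ degrees of freedom, and to bound its upper tail by the standard Chernoff (exponential Markov) method. For any $s\in(0,1/2)$, applying Markov's inequality to $e^{s\|\mathbf{X}\|_2^2}$ and using the independence of the $X_i$ gives
\[
\Prob\big(\|\mathbf{X}\|_2^2\ge k(1+t)\big)\le e^{-sk(1+t)}\,\big(\E[e^{sX_1^2}]\big)^k.
\]
The only distributional input needed is the moment generating function of a squared standard Gaussian, namely $\E[e^{sX_1^2}]=(1-2s)^{-1/2}$ for $s<1/2$, which follows from a direct one-dimensional Gaussian integral by completing the square. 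Substituting this and taking logarithms reduces the problem to minimizing $\psi(s):=-s(1+t)-\tfrac12\log(1-2s)$ over $s\in(0,1/2)$.

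Next I would carry out this one-dimensional optimization. Setting $\psi'(s)=-(1+t)+(1-2s)^{-1}=0$ yields the minimizer $s^\star=\tfrac{t}{2(1+t)}$, which lies in $(0,1/2)$ for every $t>0$ (since $t<1+t$). Plugging $s^\star$ back in collapses the tail bound to
\[
\Prob\big(\|\mathbf{X}\|_2^2\ge k(1+t)\big)\le \exp\Big(\tfrac{k}{2}\big(\log(1+t)-t\big)\Big).
\]
Passing to the complement event then yields $\|\mathbf{X}\|_2^2\le k(1+t)$ with probability at least $1-\exp\big(\tfrac{k}{2}(\log(1+t)-t)\big)$, so it remains only to upgrade the exponent to the claimed $-kt^2/8$.

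The main, and essentially the only, nontrivial step is therefore the elementary inequality $\log(1+t)-t\le -t^2/4$ for $t\in(0,1)$, which is exactly what converts $\exp\big(\tfrac{k}{2}(\log(1+t)-t)\big)$ into the target $\exp(-kt^2/8)$. I would prove it by setting $h(t):=t-\tfrac{t^2}{4}-\log(1+t)$ and checking $h\ge 0$ on $[0,1]$ via calculus: one has $h(0)=0$, $h'(t)=1-\tfrac{t}{2}-(1+t)^{-1}$ with $h'(0)=h'(1)=0$, and $h''(t)=-\tfrac12+(1+t)^{-2}$, which is positive then negative on $[0,1]$ (vanishing at $t=\sqrt2-1$). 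Hence $h'$ rises from $0$ to a positive maximum and returns to $0$, so $h'\ge 0$ and $h$ is nondecreasing on $[0,1]$, giving $h\ge h(0)=0$. This establishes the inequality and completes the argument. I expect the moment generating function computation and the optimization to be routine algebra, with the calculus verification of this final inequality being the only place requiring genuine care.
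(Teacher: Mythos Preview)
Your proof is correct. The paper does not actually prove this lemma; it is quoted verbatim as a known concentration result with a citation to Wainwright's \emph{High-Dimensional Statistics} and used as a black box. Your argument is the standard Chernoff route for chi-squared tails (Markov on $e^{s\|\mathbf{X}\|_2^2}$, the MGF identity $\E[e^{sX_1^2}]=(1-2s)^{-1/2}$, optimization over $s$, then the elementary bound $\log(1+t)-t\le -t^2/4$ on $(0,1)$), and every step checks out, including the calculus verification of the final inequality.
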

\begin{lemma}[Hoeffding inequality \cite{hoeffding1994probability}]\label{lem:hoeffding}
 Let $X_1, \ldots, X_k$ be independent random variables such that $a_i \le X_i \le b_i$ with probability 1 for all $i\in[k]$. Let $ {X}=\frac{1}{k}\sum_{i=1}^k X_i$.  Then for any $t>0$, with probability at least $1-\exp(-2t^2/\sum_i (b_i-a_i)^2)$, there holds $  {X}-\EX[ {X}] \le t.  $
\end{lemma}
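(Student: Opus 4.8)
The statement is the classical one-sided Hoeffding inequality, and the natural route is the Cram\'er--Chernoff (exponential moment) method combined with Hoeffding's lemma. First I would reduce the tail probability to a moment generating function estimate: for any $s>0$, Markov's inequality applied to $e^{s(X-\EX[X])}$ gives
$$\Prob\big(X-\EX[X]\ge t\big)\le e^{-st}\,\EX\big[e^{s(X-\EX[X])}\big].$$
Since $X=\frac1k\sum_{i=1}^k X_i$ and the $X_i$ are independent, I would factorize the moment generating function as $\EX[e^{s(X-\EX[X])}]=\prod_{i=1}^k \EX[e^{(s/k)(X_i-\EX[X_i])}]$, so everything comes down to controlling the moment generating function of each centered, bounded summand $Y_i:=X_i-\EX[X_i]$, which takes values in an interval of length $b_i-a_i$.

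The key lemma to invoke (or prove in passing) is Hoeffding's lemma: a zero-mean random variable $Y$ taking values in $[c,d]$ satisfies $\EX[e^{\lambda Y}]\le \exp(\lambda^2(d-c)^2/8)$ for all $\lambda\in\R$. I would establish this by setting $\psi(\lambda)=\log\EX[e^{\lambda Y}]$ and verifying $\psi(0)=0$, $\psi'(0)=\EX[Y]=0$, and that $\psi''(\lambda)$ equals the variance of $Y$ under the exponentially tilted measure $d\Prob_\lambda\propto e^{\lambda Y}\,d\Prob$; since this tilted law is again supported in $[c,d]$, Popoviciu's inequality bounds its variance by $(d-c)^2/4$, whence $\psi''(\lambda)\le (d-c)^2/4$ and a second-order Taylor expansion in $\lambda$ yields the claim. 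Applying this to each $Y_i\in[a_i-\EX[X_i],\,b_i-\EX[X_i]]$ (an interval of length $b_i-a_i$) with $\lambda=s/k$ gives $\EX[e^{(s/k)Y_i}]\le \exp\big(\tfrac{s^2(b_i-a_i)^2}{8k^2}\big)$.

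Combining the factorized bound and optimizing the exponent over $s>0$ is the final step. Collecting terms,
$$\Prob\big(X-\EX[X]\ge t\big)\le \exp\Big(-st+\frac{s^2}{8k^2}\sum_{i=1}^k(b_i-a_i)^2\Big),$$
and choosing $s=4k^2t/\sum_{i=1}^k(b_i-a_i)^2$ minimizes the right-hand side, producing the bound $\exp\big(-2k^2t^2/\sum_{i=1}^k(b_i-a_i)^2\big)$. Since $k\ge1$, this is at most $\exp\big(-2t^2/\sum_{i=1}^k(b_i-a_i)^2\big)$, and passing to the complementary event gives exactly the stated high-probability guarantee. I expect the only genuine obstacle to be Hoeffding's lemma itself --- the moment generating function bound for bounded centered variables --- since the remaining pieces are the routine Chernoff machinery: the tensorization over independent coordinates and an elementary one-dimensional minimization are bookkeeping, and the relaxation from the sharp $k^2$ factor to the stated constant is immediate from $k\ge1$.
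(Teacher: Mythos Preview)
Your proof is correct and is the standard Cram\'er--Chernoff argument combined with Hoeffding's lemma. The paper does not actually prove this statement: it is quoted as a classical result with a citation, so there is no ``paper's own proof'' to compare against. Your observation that the optimized Chernoff bound yields the sharper exponent $-2k^2t^2/\sum_i(b_i-a_i)^2$, and that the paper's stated exponent $-2t^2/\sum_i(b_i-a_i)^2$ follows as a weakening via $k\ge1$, is accurate; the paper is simply recording a (looser) form of the inequality that suffices for its applications.
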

\begin{lemma}[Azuma-Hoeffding inequality \cite{hoeffding1994probability}]\label{lem:azuma}
 Let $X_1, \ldots, X_k$ be a sequence of random variables   where  $X_i$ may depend on the previous random variables $X_1,\ldots,X_{i-1}$ for all $i=1,\ldots,k$. Consider a sequence of functionals $\xi_i(X_1,\ldots,X_i)$, $i\in[k]$. If $| \xi_i -\EX_{X_i}[\xi_i] | \le b_i$ for each $i$. Then for all $t> 0$, with probability at least $1-\exp(-t^2/(2\sum_i b_i^2))$, there holds $ \sum_{i=1}^k \xi_i - \sum_{i=1}^k \EX_{X_i}[\xi_i] \le t $.  
\end{lemma}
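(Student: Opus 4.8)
The plan is to prove this as a standard bounded-difference martingale concentration result via the exponential-moment (Chernoff--Cram\'er) method. First I would introduce the natural filtration $\F_{i} = \sigma(X_1,\ldots,X_i)$ and read $\EX_{X_i}[\xi_i]$ as the conditional expectation $\EX[\xi_i \mid \F_{i-1}]$, i.e. the average of $\xi_i(X_1,\ldots,X_i)$ over $X_i$ with the past $X_1,\ldots,X_{i-1}$ held fixed. Then I would define the increments $D_i := \xi_i - \EX[\xi_i \mid \F_{i-1}]$. By construction $\EX[D_i \mid \F_{i-1}] = 0$, so $(D_i)_{i=1}^k$ is a martingale difference sequence adapted to $(\F_i)$, and the hypothesis $|\xi_i - \EX_{X_i}[\xi_i]| \le b_i$ states exactly that $|D_i| \le b_i$ almost surely, hence $D_i$ takes values in an interval of length at most $2b_i$. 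The quantity to be controlled, $\sum_{i=1}^k \xi_i - \sum_{i=1}^k \EX_{X_i}[\xi_i]$, is then precisely $\sum_{i=1}^k D_i$.

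The core step is to bound the moment generating function of $\sum_i D_i$. For any $\theta > 0$, Markov's inequality gives $\Prob(\sum_i D_i \ge t) \le e^{-\theta t}\, \EX[e^{\theta \sum_i D_i}]$. I would estimate the MGF by peeling off one increment at a time: conditioning on $\F_{k-1}$ and using the tower property, $\EX[e^{\theta \sum_{i\le k} D_i}] = \EX\big[e^{\theta \sum_{i\le k-1} D_i}\, \EX[e^{\theta D_k}\mid \F_{k-1}]\big]$. Here the main technical ingredient---and the step I expect to be the crux---is Hoeffding's lemma: since, conditionally on $\F_{k-1}$, the variable $D_k$ is mean-zero and supported in an interval of length $2b_k$, we have the deterministic sub-Gaussian bound $\EX[e^{\theta D_k}\mid \F_{k-1}] \le e^{\theta^2 b_k^2/2}$. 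Substituting this inside the outer expectation and iterating the conditioning down from $i=k$ to $i=1$ yields $\EX[e^{\theta\sum_i D_i}] \le \exp\!\big(\tfrac{\theta^2}{2}\sum_i b_i^2\big)$.

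Combining the two displays gives $\Prob(\sum_i D_i \ge t) \le \exp\!\big(-\theta t + \tfrac{\theta^2}{2}\sum_i b_i^2\big)$, and I would finish by optimizing the free parameter $\theta$: the quadratic exponent is minimized at $\theta = t/\sum_i b_i^2$, producing the bound $\exp\!\big(-t^2/(2\sum_i b_i^2)\big)$. Taking complements, with probability at least $1 - \exp\!\big(-t^2/(2\sum_i b_i^2)\big)$ the inequality $\sum_i \xi_i - \sum_i \EX_{X_i}[\xi_i] \le t$ holds, which is exactly the claim. The only genuine obstacle is Hoeffding's lemma itself (the sub-Gaussian MGF bound for a bounded centered random variable); this is classical, following from convexity of $x \mapsto e^{\theta x}$ on the support interval together with a second-order Taylor estimate of the logarithm of the resulting linear bound, so I would either cite it or insert the short standard argument. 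Everything else is a mechanical application of the tower property and optimization of a scalar quadratic.
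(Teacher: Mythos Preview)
Your proof is correct and is the standard Chernoff--Hoeffding martingale argument. The paper itself does not prove this lemma; it simply states it with a citation to \cite{hoeffding1994probability} as a known concentration tool, so there is no paper proof to compare against beyond noting that your derivation is the classical one.
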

\begin{lemma}[Tail bound of sub-Gaussian variable \cite{wainwright2019high}]\label{lem:tailbound-subG}
 Let $X$ be a sub-Gaussian random variable with mean $\mu$ and sub-Gaussian parameter $v^2$. Then,  for any $t\ge 0$, we have, with probability at least $1-\exp\big(-t^2/(2v^2)\big)$, that
 $X-\mu \le t$. 
\end{lemma}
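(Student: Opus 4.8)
The plan is to prove the one-sided tail bound via the standard Chernoff (exponential Markov) argument, optimizing over a free parameter. I assume the working definition of a sub-Gaussian variable with mean $\mu$ and parameter $v^2$, namely the moment-generating-function bound
\begin{equation*}
\EX\big[\exp(s(X-\mu))\big]\le \exp\big(s^2 v^2/2\big),\qquad \forall\, s\in\R.
\end{equation*}
If instead the paper's convention records sub-Gaussianity through a tail condition, the first step would be to recover this MGF bound, but typically the MGF form is taken as the definition, so I would proceed directly from it.

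First I would fix an arbitrary $s>0$ and convert the event $\{X-\mu\ge t\}$ into a statement about the exponentiated variable, writing $\{X-\mu\ge t\}=\{\exp(s(X-\mu))\ge \exp(st)\}$. Applying Markov's inequality to the nonnegative random variable $\exp(s(X-\mu))$ and then invoking the MGF bound gives
\begin{equation*}
\Prob(X-\mu\ge t)\le \frac{\EX[\exp(s(X-\mu))]}{\exp(st)}\le \exp\Big(\frac{s^2 v^2}{2}-st\Big).
\end{equation*}
This inequality holds simultaneously for every $s>0$, so the bound is only as good as the best choice of $s$.

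Next I would minimize the exponent $s^2 v^2/2-st$ over $s>0$. Since this is a convex quadratic in $s$, the minimizer is $s^\ast=t/v^2$ (which is indeed positive for $t>0$, and the case $t=0$ is trivial because the claimed probability bound becomes $1-1=0$, which holds vacuously). Substituting $s=s^\ast$ yields exponent $t^2/(2v^2)-t^2/v^2=-t^2/(2v^2)$, hence $\Prob(X-\mu\ge t)\le \exp\big(-t^2/(2v^2)\big)$. Taking complements gives $\Prob(X-\mu\le t)\ge 1-\exp\big(-t^2/(2v^2)\big)$, which is exactly the stated conclusion.

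I do not anticipate a genuine obstacle here, as this is a textbook Chernoff-bound computation; the only point requiring a little care is the dependence on the definition of sub-Gaussianity being used. If the ambient definition were a tail or Orlicz-norm condition rather than the MGF bound above, the one nontrivial step would be deriving the MGF estimate $\EX[\exp(s(X-\mu))]\le\exp(s^2v^2/2)$ first; with the MGF definition in hand, the remainder is the elementary optimization over $s$ carried out above. I would therefore present the proof in the order (i) MGF bound, (ii) exponential Markov inequality, (iii) optimization over $s$, and (iv) passage to the complementary event.
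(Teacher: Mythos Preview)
Your argument is correct: it is the standard Chernoff/exponential-Markov proof optimized over $s>0$, exactly as in Wainwright's textbook. The paper itself does not supply a proof of this lemma at all---it is simply quoted from \cite{wainwright2019high} as a preliminary concentration fact---so there is nothing to compare against, and your write-up would serve as a self-contained justification if one were desired.
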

}

\subsection{Proofs on UAS bound of SGD on Non-smooth Losses} \label{subsec:UAS}
Our stability analysis for unbounded domain requires the following lemma on the self-bounding property for H{\"o}lder smooth losses.
\begin{lemma}(\cite{lei2020fine,ying2017unregularized})\label{lem:self-bounding}
Suppose the loss function $\ell$ is  nonnegative, convex and $\alpha$-H\"older smooth with parameter $L$.   Then for $c_{\ga,1}$ defined as \eqref{alpha-1} we have
  \[
  \|\partial \ell(\bw,z)\|_2\leq c_{\ga,1}\ell^{\frac{\ga}{1+\ga}}(\bw,z),\quad\forall \bw\in\R^d,z\in\Z.
  \]
\end{lemma}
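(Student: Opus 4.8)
The claim is the self-bounding property $\|\partial\ell(\bw,z)\|_2 \le c_{\ga,1}\ell^{\frac{\ga}{1+\ga}}(\bw,z)$ for nonnegative convex $\alpha$-H\"older smooth losses. The plan is to exploit the fact that a nonnegative function whose gradient is $\alpha$-H\"older continuous cannot have a large gradient at a point where the function value is small, because convexity forces the function to decrease along the negative (sub)gradient direction, and H\"older continuity controls how fast the gradient itself can change along that direction. First I would handle the case $\ga\in(0,1]$. Fix $\bw$ and $z$, write $g=\partial\ell(\bw,z)$, and for $t\ge 0$ consider the point $\bw_t = \bw - t g$. By $\alpha$-H\"older smoothness, $\|\partial\ell(\bw_t,z) - g\|_2 \le L\|tg\|_2^\ga = L t^\ga \|g\|_2^\ga$, so the standard integral/fundamental-theorem argument (or the descent-lemma analogue for H\"older smooth functions) gives
\[
\ell(\bw_t,z) \le \ell(\bw,z) - t\|g\|_2^2 + \frac{L}{1+\ga} t^{1+\ga}\|g\|_2^{1+\ga}.
\]
Using $\ell(\bw_t,z)\ge 0$ yields $t\|g\|_2^2 \le \ell(\bw,z) + \frac{L}{1+\ga}t^{1+\ga}\|g\|_2^{1+\ga}$ for all $t\ge 0$.

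The main work is then to optimize over $t$ to extract the sharpest bound, which is exactly where the constant $c_{\ga,1} = (1+1/\ga)^{\frac{\ga}{1+\ga}} L^{\frac{1}{1+\ga}}$ comes from. Rearranging, $\ell(\bw,z) \ge t\|g\|_2^2 - \frac{L}{1+\ga}t^{1+\ga}\|g\|_2^{1+\ga}$; I would choose $t$ to maximize the right-hand side (a one-variable calculus exercise: the optimal $t$ scales like $\|g\|_2^{(1-\ga)/\ga}$ up to a constant depending on $L,\ga$), and substitute back. After simplification this gives $\ell(\bw,z) \ge c\, \|g\|_2^{\frac{1+\ga}{\ga}}$ for the appropriate constant $c$, i.e. $\|g\|_2 \le c^{-\frac{\ga}{1+\ga}}\ell(\bw,z)^{\frac{\ga}{1+\ga}}$, and one checks $c^{-\frac{\ga}{1+\ga}} = c_{\ga,1}$. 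The case $\ga = 0$ is separate and easier: $0$-H\"older smoothness means $\|\partial\ell(\bw,z) - \partial\ell(\bw',z)\|_2 \le L$ for all $\bw,\bw'$, so in particular $\|\partial\ell(\bw,z)\|_2 \le \|\partial\ell(0,z)\|_2 + L \le M + L = c_{0,1}$, and since $\ell^{0} = 1$ (interpreting $\ell^{\frac{0}{1}}=\ell^0$) the bound $\|\partial\ell(\bw,z)\|_2\le c_{0,1}\ell^0(\bw,z)$ holds trivially.

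\textbf{Main obstacle.} The only delicate point is the optimization step and bookkeeping of constants: one must verify that the subgradient descent-type inequality is valid with a \emph{subgradient} (not just a gradient) — this follows since convexity gives $\ell(\bw_t,z)\ge \ell(\bw,z) + \langle \partial\ell(\bw,z), -tg\rangle = \ell(\bw,z) - t\|g\|_2^2$ on one side, while the H\"older bound on the increment of the gradient along the segment gives the upper quadratic-type estimate; combining them correctly and then carrying out the exact maximization to land on the stated $c_{\ga,1}$ requires care but is routine. Since this lemma is cited from \cite{lei2020fine,ying2017unregularized}, I would present the short self-contained argument above and refer the reader there for the original treatment.
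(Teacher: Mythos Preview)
Your argument is correct and is the standard proof of this self-bounding property; the paper itself does not supply a proof but simply cites \cite{lei2020fine,ying2017unregularized}, where precisely this descent-lemma-plus-optimization argument appears. One small clarification on your ``obstacle'' paragraph: convexity is not actually needed to derive the upper bound $\ell(\bw_t,z) \le \ell(\bw,z) - t\|g\|_2^2 + \frac{L}{1+\ga}t^{1+\ga}\|g\|_2^{1+\ga}$, which follows directly from the fundamental theorem of calculus and the H\"older continuity of the gradient (and for $\ga>0$ the H\"older condition on subgradients already forces differentiability, so there is no subgradient subtlety); the lower bound from convexity that you mention plays no role, and the only ingredients used are nonnegativity and H\"older smoothness.
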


Based on Lemma~\ref{lem:self-bounding}, we develop the following bound on the iterates produced by the SGD update \eqref{eq:pro-SGD-updaterule} which is critical to analyze the privacy and utility guarantees in the case of unbounded domain. Recall that $M=\sup_{z\in \Z}\|\partial \ell(0,z)\|_2$.

\begin{lemma}\label{lem:boundedness}
Suppose the loss function $\ell$ is   nonnegative, convex and $\alpha$-H\"older smooth with parameter $L$.  Let $\{\bw_t\}_{t=1}^{T}$ be the sequence produced by SGD with $T$ iterations when $\W=\R^d$ and $\eta_t < \min\{1,  1/L\}$.  Then, for any $t\in[T]$, there holds
$$\|\bw_{t+1}\|_2^2\le  C_\ga \sum_{j=1}^t \eta_j, $$
where $C_\ga=\frac{1-\ga}{1+\ga}  c^{\frac{2(1+\ga)}{1-\ga}}_{\ga,1} \big( \frac{\ga}{1+\ga}\big)^{\frac{2\ga}{1-\ga}}+ 2\sup_{z\in \Z} \ell(0;z) $.
\end{lemma}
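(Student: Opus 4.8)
The plan is to track the squared norm $\|\bw_{t+1}\|_2^2$ along the unprojected SGD recursion $\bw_{t+1}=\bw_t-\eta_t\partial\ell(\bw_t,z_{i_t})$ and show it cannot grow faster than $C_\ga\sum_{j=1}^t\eta_j$ by induction on $t$. First I would expand
\[
\|\bw_{t+1}\|_2^2=\|\bw_t\|_2^2-2\eta_t\langle\partial\ell(\bw_t,z_{i_t}),\bw_t\rangle+\eta_t^2\|\partial\ell(\bw_t,z_{i_t})\|_2^2.
\]
For the inner-product (cross) term, convexity of $\ell(\cdot,z_{i_t})$ and nonnegativity give $\langle\partial\ell(\bw_t,z_{i_t}),\bw_t-\mathbf 0\rangle\ge \ell(\bw_t,z_{i_t})-\ell(\mathbf 0,z_{i_t})\ge -\ell(\mathbf 0,z_{i_t})\ge -\sup_{z}\ell(\mathbf 0;z)$, so $-2\eta_t\langle\partial\ell(\bw_t,z_{i_t}),\bw_t\rangle\le 2\eta_t\sup_z\ell(\mathbf 0;z)$. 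This already contributes the $2\sup_z\ell(\mathbf 0;z)$ piece of $C_\ga$ once summed over $j\le t$.

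Next I would handle the squared-gradient term $\eta_t^2\|\partial\ell(\bw_t,z_{i_t})\|_2^2$ using the self-bounding property (Lemma~\ref{lem:self-bounding}): $\|\partial\ell(\bw_t,z_{i_t})\|_2^2\le c_{\ga,1}^2\,\ell^{\frac{2\ga}{1+\ga}}(\bw_t,z_{i_t})$. The exponent $\frac{2\ga}{1+\ga}<1$ for $\ga<1$, so $\ell^{2\ga/(1+\ga)}$ is sublinear in $\ell$; I would apply Young's inequality $ab\le \tfrac1p a^p+\tfrac1q b^q$ with a suitably chosen split to absorb the loss value $\ell(\bw_t,z_{i_t})$ against a constant — more precisely, writing $\eta_t^2 c_{\ga,1}^2\ell^{2\ga/(1+\ga)}$ as (something)$\cdot\ell^{2\ga/(1+\ga)}$ and using Young with conjugate exponents $\frac{1+\ga}{2\ga}$ and $\frac{1+\ga}{1-\ga}$ to turn it into $C\eta_t\cdot\ell$ plus a pure constant term of order $\eta_t^2\cdot(\text{const})$, where the constant is designed so that the $\ell$-coefficient stays under control and the residual constant sums to the first term in $C_\ga$. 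The key point is that, because $\eta_t<\min\{1,1/L\}$, powers like $\eta_t^{2/(1-\ga)}$ are dominated by $\eta_t$, letting everything collapse into a multiple of $\eta_t$. Crucially the loss value $\ell(\bw_t,z_{i_t})$ produced by the cross term and the one produced here should cancel — that is the mechanism by which we avoid needing any a priori bound on $\ell(\bw_t,\cdot)$; I would arrange the Young split so the net coefficient of $\ell(\bw_t,z_{i_t})$ is nonpositive (or zero), leaving only $\|\bw_t\|_2^2$ plus constants $\times\eta_t$.

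Putting these together yields $\|\bw_{t+1}\|_2^2\le\|\bw_t\|_2^2+C_\ga\eta_t$ for every $t$ (after the cancellation), and then telescoping from $t=1$ with $\bw_1=\mathbf 0$ gives $\|\bw_{t+1}\|_2^2\le C_\ga\sum_{j=1}^t\eta_j$. The main obstacle I anticipate is the bookkeeping in the Young's-inequality step: choosing the constants in the split so that (i) the coefficient of the loss term exactly matches the $+2\eta_t\sup_z\ell(\mathbf 0;z)$ wait — rather, so the loss terms cancel — and (ii) the leftover pure constant equals precisely $\frac{1-\ga}{1+\ga}c_{\ga,1}^{2(1+\ga)/(1-\ga)}\big(\frac{\ga}{1+\ga}\big)^{2\ga/(1-\ga)}$, which is the signature of Young's inequality applied with those exponents. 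I would also need to treat the boundary case $\ga=1$ separately (where $c_{\ga,1}$'s definition and the self-bounding exponent degenerate): there $\|\partial\ell(\bw_t,z)\|_2^2\le c_{1,1}^2\ell(\bw_t,z)$ directly, and the same cancellation against the cross term goes through with $\eta_t<1/L$, while the case $\ga=0$ is vacuous or handled by the Lipschitz constant $M+L$. The projection step $\proj_\W$ only decreases distances to points in $\W$ (in particular to $\mathbf 0\in\W$), so the argument extends verbatim to projected SGD should it be needed, though the statement here is for $\W=\R^d$.
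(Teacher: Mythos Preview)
Your plan is essentially the paper's proof: expand $\|\bw_{t+1}\|_2^2$, invoke the self-bounding Lemma~\ref{lem:self-bounding}, apply Young's inequality with conjugate exponents $\frac{1+\ga}{2\ga}$ and $\frac{1+\ga}{1-\ga}$, cancel the loss term against the convexity bound on the cross term, use $\eta_t<1$ to absorb $\eta_t^{2/(1-\ga)}\le\eta_t$, and telescope. The boundary cases $\ga=0,1$ are handled separately in both your sketch and the paper.

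One presentational wrinkle to fix: your first pass on the cross term already throws away the $-2\eta_t\ell(\bw_t,z_{i_t})$ contribution (via nonnegativity), but you then need exactly that term to absorb the $+2\eta_t\ell(\bw_t,z_{i_t})$ produced by Young's inequality --- as you yourself notice midway through. So keep the sharper convexity bound $-2\eta_t\langle\partial\ell(\bw_t,z_{i_t}),\bw_t\rangle\le -2\eta_t\ell(\bw_t,z_{i_t})+2\eta_t\ell(\mathbf 0,z_{i_t})$ from the start. The paper orders things slightly differently (it substitutes $\ell(\bw_t,z_{i_t})\le\langle\bw_t,\partial\ell\rangle+\ell(\mathbf 0,z_{i_t})$ \emph{inside} the self-bounding estimate before Young's, so the cancellation happens at the level of $\langle\bw_t,\partial\ell\rangle$ rather than $\ell(\bw_t,z_{i_t})$), but this is purely cosmetic and yields the identical constant $C_\ga$.
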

\begin{proof}
The update rule $\bw_{t+1}=\bw_t-\eta_t\partial \ell(\bw_t,z_{i_t})$ implies that 
\begin{align}\label{eq:SGDupdate-1}
    \|\bw_{t+1}\|_2^2&=\|\bw_t-\eta_t\partial \ell(\bw_t,z_{i_t})\|_2^2 = \|\bw_t\|_2^2 + \eta_t^2\|\partial \ell(\bw_t,z_{i_t})\|_2^2 - 2\eta_t \langle \bw_t, \partial \ell(\bw_t,z_{i_t})\rangle.
\end{align}
First, we consider the case $\ga =0$. By the definition of H\"older smoothness, we know $\ell$ is $(M+L)$-Lipschitz continuous. Furthermore, by the convexity of $\ell$, we have
\begin{align*}
    \eta_t \|\partial \ell(\bw_t,z_{i_t})\|_2^2 -  2\langle \bw_t, \partial \ell(\bw_t,z_{i_t}) &\le \eta_t \|\partial \ell(\bw_t,z_{i_t})\|_2^2 + 2\big( \ell(0,z_{i_t}) - \ell(\bw_t,z_{i_t}) \big) \nonumber\\
    &\le (M+L)^2 + 2\sup_{z\in \Z} \ell(0,z),
\end{align*}
where in the last inequality we have used $\eta_t < 1$ and the nonnegativity of $\ell$. Now, putting the above inequality back into \eqref{eq:SGDupdate-1} and 
taking the summation gives
\begin{equation}\label{eq:SGD-boundness-1}
\|\bw_{t+1}\|_2^2 \le \big( (M+L)^2 + 2\sup_{z\in \Z} \ell(0;z) \big) \sum_{j=1}^t \eta_j.
\end{equation}

Then, we consider the case $\ga=1$. In this case,  Lemma~\ref{lem:self-bounding} implies $\|\partial \ell(\bw;z)\|_2^2\le 2L \ell(\bw;z)$.
Therefore,
\begin{align*}
    &\eta_t \|\partial \ell(\bw_t,z_{i_t})\|_2^2 - 2\langle \bw_t, \partial \ell(\bw_t,z_{i_t})\rangle \le  2\eta_t L \ell(\bw_t,z_{i_t}) + 2\ell(0,z_{i_t})-2\ell(\bw_t,z_{i_t})
    \le 2\ell(0,z_{i_t}),
\end{align*}
where we have used the convexity of $\ell$ and $\eta_t < 1/L$. Plugging the above inequality back into \eqref{eq:SGDupdate-1} and taking the summation yield that 
\begin{equation}\label{eq:SGD-boundness-2}
\|\bw_{t+1}\|_2^2 \le 2 \sup_{z\in \Z} \ell(0,z) \sum_{j=1}^t \eta_j.
\end{equation}

Finally, we consider the case $\ga \in (0,1)$.
According to the self-bounding property and the convexity, we know
$$ \|\partial \ell(\bw_t,z_{i_t})\|_2 \le c_{\ga,1}  \ell^{\frac{\ga}{1+\ga}}(\bw_t,z_{i_t})\le c_{\ga,1} \big(\langle \bw_t, \partial \ell(\bw_t,z_{i_t}) \rangle + \ell(0,z_{i_t})\big)^{\frac{\ga}{1+\ga}}.$$
Therefore, for $\ga \in (0,1)$ there holds
\begin{align*}
        \|\partial \ell(\bw_t,z_{i_t})\|_2^2 &\le c^2_{\ga,1} \big(\langle \bw_t, \partial \ell(\bw_t,z_{i_t}) \rangle + \ell(0,z_{i_t})\big)^{\frac{2\ga}{1+\ga}} \nonumber\\
    &= \Big(\frac{1+\ga}{\ga \eta_t} \big(\langle \bw_t, \partial \ell(\bw_t,z_{i_t}) \rangle + \ell(0,z_{i_t})\big) \Big)^{\frac{2\ga}{1+\ga}} \cdot \Big( c^2_{\ga,1} \big(\frac{1+\ga}{\ga \eta_t}\big)^{-\frac{2\ga}{1+\ga}}\Big)\nonumber\\
    &\le \frac{2\ga}{1+\ga} \Big( \frac{1+\ga}{\ga \eta_t} \big(\langle \bw_t, \partial \ell(\bw_t,z_{i_t}) \rangle + \ell(0,z_{i_t})\big) \Big) + \frac{1-\ga}{1+\ga} \Big( c^2_{\ga,1} \big(\frac{1+\ga}{\ga \eta_t}\big)^{-\frac{2\ga}{1+\ga}} \Big)^{\frac{1+\ga}{1-\ga}} \nonumber\\
    &= 2\eta_t^{-1} \big(\langle \bw_t, \partial \ell(\bw_t,z_{i_t}) \rangle + \ell(0,z_{i_t})\big) + \frac{1-\ga}{1+\ga}  c^{\frac{2(1+\ga)}{1-\ga}}_{\ga,1} \big( \frac{\ga}{1+\ga}\big)^{\frac{2\ga}{1-\ga}}  \eta_t^{\frac{2\ga}{1-\ga}},
\end{align*}
where the last inequality used Young's inequality $ab\le \frac{1}{p}a^p + \frac{1}{q}b^q$ with $\frac{1}{p}+\frac{1}{q}=1.$
Putting the above inequality into \eqref{eq:SGDupdate-1}, we have
$$\|\bw_{t+1}\|_2^2 \le  \|\bw_t\|_2^2  + \frac{1-\ga}{1+\ga}  c^{\frac{2(1+\ga)}{1-\ga}}_{\ga,1} \big( \frac{\ga}{1+\ga}\big)^{\frac{2\ga}{1-\ga}} \eta_t^{\frac{2}{1-\ga}} + 2\ell(0,z_{i_t})\eta_t, $$
If the step size $\eta_t < 1$, then
$$\|\bw_{t+1}\|_2^2  \le  \|\bw_t\|_2^2 + \bigg(\frac{1-\ga}{1+\ga}  c^{\frac{2(1+\ga)}{1-\ga}}_{\ga,1} \big( \frac{\ga}{1+\ga}\big)^{\frac{2\ga}{1-\ga}} + 2\sup_{z\in \Z} \ell(0;z) \bigg) \eta_t. $$
Taking a summation of the above inequality, we get
\begin{equation}\label{eq:SGD-boundness-3}
\|\bw_{t+1}\|_2^2 \le \Big(\frac{1-\ga}{1+\ga}  c^{\frac{2(1+\ga)}{1-\ga}}_{\ga,1} \big( \frac{\ga}{1+\ga}\big)^{\frac{2\ga}{1-\ga}}+ 2\sup_{z\in \Z} \ell(0;z) \Big) \sum_{j=1}^t \eta_j .
\end{equation}
The desired result follows directly from  \eqref{eq:SGD-boundness-1}, \eqref{eq:SGD-boundness-2} and \eqref{eq:SGD-boundness-3} for different values of $\alpha.$
\end{proof}

The following lemma shows the approximately non-expensive behavior of the gradient mapping $\bw\mapsto \bw - \eta \partial \ell(\bw, z)$. 
The case $\ga \in [0,1)$ can be found  in Lei and Ying~\cite{lei2020fine}, and the case $\ga=1$ can be found  in Hardt~\cite{hardt2016train}.
\begin{lemma}\label{lem:holder-expansive}
Suppose the loss function $\ell$ is convex  and $\alpha$-H\"older smooth with parameter $L$. Then for all $\bw, \bw'\in\R^d$ and {$\eta \le 2/L$} there holds
  \[
  \|\bw-\eta \partial \ell(\bw,z)-\bw'+\eta \partial \ell(\bw',z)\|_2^2 \leq \|\bw-\bw'\|_2^2+\frac{1-\ga}{1+\ga}(2^{-\ga}L)^\frac{2}{1-\ga}\eta^{\frac{2}{1-\ga}}.
  \]
\end{lemma}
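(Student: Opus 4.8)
The plan is to prove the claimed almost non-expansiveness of the gradient map directly by expanding the square and controlling the cross term using convexity together with the $\alpha$-H\"older smoothness of $\partial\ell$. Writing $g=\partial\ell(\bw,z)$ and $g'=\partial\ell(\bw',z)$, I would start from
\[
\|\bw-\eta g-\bw'+\eta g'\|_2^2 = \|\bw-\bw'\|_2^2 - 2\eta\langle \bw-\bw', g-g'\rangle + \eta^2\|g-g'\|_2^2,
\]
so that the whole task reduces to showing
\[
\eta^2\|g-g'\|_2^2 - 2\eta\langle \bw-\bw', g-g'\rangle \le \tfrac{1-\ga}{1+\ga}(2^{-\ga}L)^{\frac{2}{1-\ga}}\eta^{\frac{2}{1-\ga}}.
\]

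The key inequality to bring in is a ``co-coercivity''-type bound for H\"older smooth convex functions: by convexity of $\ell(\cdot,z)$ and the $\alpha$-H\"older continuity of its gradient one has $\langle \bw-\bw', g-g'\rangle \ge 0$ and, more precisely, a lower bound on $\langle \bw-\bw', g-g'\rangle$ in terms of $\|g-g'\|_2^{1+1/\ga}$. Concretely, one shows (by the standard argument of considering the function $h(\bw)=\ell(\bw,z)-\langle g',\bw\rangle$, which is convex with minimizer $\bw'$ and $\alpha$-H\"older smooth gradient, and bounding $h(\bw)-h(\bw')$ from above via integrating the gradient along the segment and from below by $0$) that
\[
\langle \bw-\bw', g-g'\rangle \ \ge\ \frac{\ga}{(1+\ga)}\Big(\frac{1}{L}\Big)^{1/\ga}\,\|g-g'\|_2^{\frac{1+\ga}{\ga}} \cdot c
\]
for an appropriate constant; the precise constant is what produces the $2^{-\ga}L$ factor in the statement. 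Plugging this into the expression above leaves, after using $\eta\le 2/L$, a one-variable estimate of the form $a\,u^2 - b\,u^{1+1/\ga}$ in the scalar $u=\|g-g'\|_2\ge 0$, which is maximized at an interior point; carrying out that maximization (Young's inequality, exactly as in the proof of Lemma~\ref{lem:boundedness}) yields the stated bound $\tfrac{1-\ga}{1+\ga}(2^{-\ga}L)^{\frac{2}{1-\ga}}\eta^{\frac{2}{1-\ga}}$. For the boundary case $\ga=1$ the quadratic term is dominated outright when $\eta\le 2/L$ (the bracket becomes $\le 0$), recovering Hardt et al.\ \cite{hardt2016train}; for $\ga=0$ the right-hand side is interpreted as $0$ in the limiting sense and the statement is the classical non-expansiveness of projected/gradient steps on convex functions plus a crude Lipschitz bound, so those extreme cases need only a brief remark.

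The main obstacle I anticipate is pinning down the sharp constant in the H\"older co-coercivity inequality so that it matches $2^{-\ga}L$ rather than some larger constant; the qualitative inequality $\langle \bw-\bw', g-g'\rangle \gtrsim \|g-g'\|_2^{(1+\ga)/\ga}$ is not hard, but getting the exponents and the numerical factor to line up with the claimed closed form requires care with the integral $\int_0^1 \big(\|\nabla h(\bw'+s(\bw-\bw'))-\nabla h(\bw')\|\big)$ estimate and the subsequent Young's inequality split $\frac1p+\frac1q=1$ with $p=\frac{1+\ga}{2\ga}$. Since essentially the same computation already appears in Lei and Ying \cite{lei2020fine} (cited for $\ga\in[0,1)$) and in \cite{hardt2016train} (for $\ga=1$), I would organize the write-up to invoke those references for the bulk of the algebra and only reproduce the short Young's-inequality step that converts the co-coercivity bound into the displayed form, keeping the proof self-contained but concise.
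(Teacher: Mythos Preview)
Your approach is correct and is precisely the argument of Lei and Ying~\cite{lei2020fine} that the paper cites; the paper itself gives no proof of this lemma, only the two references. Your co-coercivity step
\[
\langle \bw-\bw', g-g'\rangle \ \ge\ \frac{2\ga}{1+\ga}\,L^{-1/\ga}\,\|g-g'\|_2^{\frac{1+\ga}{\ga}}
\]
(obtained by applying the $h(\bw)=\ell(\bw,z)-\langle g',\bw\rangle$ argument in both directions and adding) combined with the scalar maximization of $\eta^2 u^2 - \frac{4\eta\ga}{1+\ga}L^{-1/\ga}u^{(1+\ga)/\ga}$ over $u\ge 0$ indeed yields exactly $\frac{1-\ga}{1+\ga}(2^{-\ga}L)^{2/(1-\ga)}\eta^{2/(1-\ga)}$, and the constraint $\eta\le 2/L$ is in fact only needed when $\ga=1$.

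One small slip: for $\ga=0$ the right-hand side of the lemma is $L^2\eta^2$, not zero. The co-coercivity route degenerates there (the exponent $(1+\ga)/\ga$ blows up), but the direct argument you allude to works: convexity gives $\langle \bw-\bw',g-g'\rangle\ge 0$, and $\alpha=0$ H\"older smoothness means $\|g-g'\|_2\le L$, so the bracket is at most $\eta^2 L^2$. Just correct that sentence and your write-up is fine.
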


With the above preparation,  we are now ready to prove Theorem \ref{thm:difference-w}.
\begin{proof}[Proof of Theorem~\ref{thm:difference-w}]
\noindent (a)
Assume that $S$ and $S'$ differ by the $i$-th datum, i.e., $z_{i} \ne z'_{i}.$ Let $\{\bw_t\}_{t=1}^T$ and $\{\bw'_t\}_{t=1}^T$ be the sequence produced by SGD update \eqref{eq:pro-SGD-updaterule} based on $S$ and $S'$, respectively.
For simplicity, let $ c^2_{\ga,2}=\frac{1-\ga}{1+\ga}(2^{-\ga}L)^\frac{2}{1-\ga}$. Note that when $\W=\R^d$, Eq. \eqref{eq:pro-SGD-updaterule} reduces to $\bw_{t+1}=\bw_t-\eta \partial \ell(\bw_t,z_{i_t}) $. For any $t\in[T]$, we consider the following two cases.

\noindent \textit{Case 1:} If $i_t\ne i$, Lemma~\ref{lem:holder-expansive} implies that
$$\|\bw_{t+1}-\bw'_{t+1}\|_2^2 = \|\bw_{t}-\eta_t\partial \ell(\bw_t,z_{i_t})-\bw'_{t}+\eta_t\partial \ell(\bw'_t,z_{i_t})\|_2^2 \le \|\bw_{t}-\bw'_{t}\|_2^2 +  c^2_{\ga,2}\eta_t^{\frac{2}{1-\ga}}.$$

\noindent \textit{Case 2:} If $i_t= i$, it follows from the elementary inequality $(a+b)^2 \le (1+p)a^2+(1+1/p)b^2$ that
\begin{align*}
    \|\bw_{t+1}-\bw'_{t+1}\|_2^2
    &= \|\bw_t-\eta_t \partial \ell(\bw_t,z_i)-\bw'_t + \eta_t \partial \ell(\bw'_t,z'_i)\|_2^2 \\
    &\le (1+p) \|\bw_{t}-\bw'_{t}\|_2^2 + (1+1/p) \eta_t^2 \|\partial \ell(\bw'_t,z'_i) -  \partial \ell(\bw_t,z_i)\|_2^2.
\end{align*}
According to the definition of H\"older smoothness  and Lemma~\ref{lem:boundedness}, we know
\begin{equation}\label{eq:subgradient-boundness}\|\partial \ell(\bw_{t},z)\|_2\le M + L \Big(C_\ga \sum_{j=1}^{t-1}\eta_j\Big)^{\frac{\ga}{2}} := c_{\ga,t}. \end{equation}
Combining the above two cases and \eqref{eq:subgradient-boundness} together, we have
\[
\|\bw_{t+1}-\bw'_{t+1}\|_2^2 \le (1+p)^{\mbI_{[i_t =  i ]}} \|\bw_{t}-\bw'_{t}\|_2^2 +  c^2_{\ga,2}\eta_t^{\frac{2}{1-\ga}} +4(1+1/p)\mbI_{[i_t =  i ]}c^2_{\ga,t} \eta_t^2, \]
where $\mbI_{[i_t =  i ]}$ is the indicator function, i.e., $\mbI_{[i_t =  i ]}=1$ if $i_t=i$ and $0$ otherwise. Applying the above inequality recursively, we get
\[
\|\bw_{t+1}-\bw'_{t+1}\|_2^2\leq \prod_{k=1}^t (1+p)^{\mbI_{[i_{k} =  i ]}} \|\bw_{1}-\bw'_{1}\|_2^2 + \Big(c^2_{\ga,2} \sum_{k=1}^t \eta_{k}^{\frac{2}{1-\ga}}
    + 4 \sum_{k=1}^t c_{\ga,k}^2 \eta_{k}^2  (1+1/p){\mbI_{[i_{k}=i]}}\Big)\prod_{j=k+1}^t  (1+p)^{\mbI_{[i_{j} =  i ]}}.
\]
Since $\bw_1=\bw_1'$ and  $\eta_t=\eta$, we further get
\begin{align}\label{eq:difference-w}
  \|\bw_{t+1}-\bw'_{t+1}\|_2^2 & \leq \prod_{j=2}^{t}(1+p)^{\mbI_{[i_j=i]}} \Big(c^2_{\alpha,2}t \eta^{\frac{2}{1-\alpha}}+4\eta^2\sum_{k=1}^{t}c^2_{\alpha,k}(1+1/p){\mbI_{[i_k=i]}}\Big) \notag\\
  & \leq  (1+p)^{\sum_{j=2}^{t}\mbI_{[i_j=i]}}\Big(c^2_{\alpha,2}t\eta^{\frac{2}{1-\alpha}}+4c^2_{\alpha,t}\eta^2(1+1/p)\sum_{k=1}^{t}\mbI_{[i_k=i]}\Big).
\end{align}
Applying Lemma~\ref{lem:chernoff-Bernoulli} with  $X_j=\mbI_{[i_{j} =  i ]}$ and  $X=\sum_{j=1}^t X_j$, for any $\exp(-t/3n)\le \gamma \le 1$, 
with probability at least $1-\frac{\gamma}{n}$, there holds
\[
\sum_{j=1}^{t}\mbI_{[i_j=i]}\leq \frac{t}{n}\Big(1+\frac{\sqrt{3\log(1/\gamma)}}{\sqrt{t/n}}\Big).
\]
For any $0<\gamma <\exp(-t/3n)$, with probability at least $1-\frac{\gamma}{n}$, there holds
\[
\sum_{j=1}^{t}\mbI_{[i_j=i]}\leq \frac{t}{n}\Big(1+\frac{3\log(1/\gamma)}{t/n}\Big).
\] 
Plug the above two inequalities back into \eqref{eq:difference-w}, and let $c_{\gamma,t}=\max\Big\{\sqrt{\frac{3\log(n/{\gamma})}{t/n}}, \frac{3\log(n/{\gamma})}{t/n} \Big\}$. Then, for any $\gamma \in (0,1)$,   with probability at least $1-\frac{\gamma}{n}$, we have 
\[
  \|\bw_{t+1}-\bw'_{t+1}\|_2^2 \leq (1+p)^{\frac{t}{n}(1+c_{\gamma,t})}\Big(c^2_{\alpha,2}t\eta^{\frac{2}{1-\alpha}}
  +4c^2_{\alpha,t}\eta^2(1+1/p)\frac{t}{n}(1+c_{\gamma,t})\Big).
\]
Let $p=\frac{1}{\frac{t}{n}(1+c_{\gamma,t})}$. Then we know
$
(1+p)^{\frac{t}{n} (1+c_{\gamma,t}) }\leq e 
$
and therefore
\begin{align}\label{eq:difference-w-2}
     \|\bw_{t+1}-\bw'_{t+1}\|_2^2 \leq e\Big(c^2_{\alpha,2}t\eta^{\frac{2}{1-\alpha}}+4c^2_{\alpha,t}\eta^2\Big(1+\frac{t}{n}(1+c_{\gamma,t})\Big)\frac{t}{n}(1+c_{\gamma,t})\Big).
\end{align}
This together with the inequality
$c^2_{\ga,t}\leq\big(M+L (C_\ga t \eta)^{\frac{\ga}{2}}\big)^2$ due to Lemma \ref{lem:boundedness}, we have, with probability at least $1-\frac{\gamma}{n}$, that 
\begin{equation*}
  \|\bw_{t+1}-\bw'_{t+1}\|_2^2 \leq e\Big(c^2_{\alpha,2}t\eta^{\frac{2}{1-\alpha}}+4\big(M+L (C_\ga t \eta)^{\frac{\ga}{2}}\big)^2\eta^2\Big(1+\frac{t}{n}(1+c_{\gamma,t})\Big)\frac{t}{n}(1+c_{\gamma,t})\Big). 
\end{equation*} By taking a union bound of probabilities over $i=1,\ldots,n$, with probability at least $1-\gamma$, there holds
\begin{equation*}
  \sup_{S\simeq S'} \|\bw_{t+1}-\bw'_{t+1}\|_2^2 \leq e\Big(c^2_{\alpha,2}t\eta^{\frac{2}{1-\alpha}}+4\big(M+L (C_\ga t \eta)^{\frac{\ga}{2}}\big)^2\eta^2\Big(1+\frac{t}{n}(1+c_{\gamma,t})\Big)\frac{t}{n}(1+c_{\gamma,t})\Big).
\end{equation*}
Let $\Delta_{SGD}(\gamma)=\Big(e\big(c^2_{\alpha,2}T\eta^{\frac{2}{1-\alpha}}+4\big(M+L (C_\ga T \eta)^{\frac{\ga}{2}}\big)^2\eta^2\Big(1+\frac{T}{n}(1+c_{\gamma,T})\Big)\frac{T}{n}(1+c_{\gamma,T})\big)\Big)^{1/2}$.  Recall that $\A$ is the SGD with $T$ iterations, and $\bar{\bw}=\frac{1}{T}\sum_{t=1}^T \bw_t$ is the output produced by $\A$. Hence, $ \sup_{S\simeq S'} \delta_{\A}(S,S')= \sup_{S\simeq S'} \|\bar{\bw}-\bar{\bw}'\|_2$. By the convexity of the $\ell_2$-norm, with probability at least $1-\gamma$, we have
$$ \sup_{S\simeq S'} \delta_{\A}(S,S')\le \frac{1}{T}\sum_{t=1}^T\sup_{S\simeq S'}\|\bw_t-\bw_t'\|_2 \le \Delta_{SGD}(\gamma).$$
This completes the proof of part (a).

\bigskip

\noindent (b) For the case $\W\subseteq \mathcal{B}(0,R)$, the analysis is similar to the case $\W= \R^d$ except using a different estimate for the term  $\|\partial \ell(\bw_t,z)\|_2$.
Indeed, in this case we have $\|\bw_t\|_2\le R$, which together with the H\"older smoothness, implies  $\|\partial \ell(\bw_t,z)\|_2\le M+LR^\alpha$ for any $t\in[T]$ and $z\in \Z$. Now, replacing  $c_{\alpha,t}=M+LR^\alpha$ in \eqref{eq:subgradient-boundness} and putting $c_{\alpha,t}$ back into \eqref{eq:difference-w-2}, with probability at least $1-\frac{\gamma}{n}$, we obtain 
\begin{equation*}
  \sup_{S\simeq S'} \| \bw_{t+1}- \bw'_{t+1}\|_2^2 \leq e\Big(c^2_{\alpha,2}t\eta^{\frac{2}{1-\alpha}}+4\big(M+LR^\alpha \big)^2\eta^2\Big(1+\frac{t}{n}(1+c_{\gamma,t})\Big)\frac{t}{n}(1+c_{\gamma,t})\Big).
\end{equation*}
Now, let $\tilde{\Delta}_{SGD}(\gamma)=\Big(e\big(c^2_{\alpha,2}T\eta^{\frac{2}{1-\alpha}}+4\big(M+LR^\alpha \big)^2\eta^2\Big(1+\frac{T}{n}(1+c_{\gamma,T})\Big)\frac{T}{n}(1+c_{\gamma,T})\big)\Big)^{1/2}.$ The convexity of a norm implies, with probability at least $1-\gamma$, that
$$ \sup_{S\simeq S'} \delta_{\A}(S,S')\le \frac{1}{T}\sum_{t=1}^T\sup_{S\simeq S'}\|\bw_t-\bw_t'\|_2  \le \tilde{\Delta}_{SGD}(\gamma).$$
The proof of the theorem is completed.
\end{proof}


\bigskip

\subsection{Proofs on Differentially Private SGD with Output Perturbation}\label{subsec:output}

In this subsection, we prove the privacy and utility guarantees for output perturbation (i.e. Algorithm~\ref{alg1}). We consider both the unbounded domain $\W=\R^d$   and bounded domain $\W \subseteq \mathcal{B}(0, R)$. 

\medskip 
We first prove  Theorem~\ref{thm:dpSGD-output-Gaussian} on the privacy guarantee of Algorithm~\ref{alg1}.  
\begin{proof}[Proof of Theorem~\ref{thm:dpSGD-output-Gaussian}]

Let $\A$ be the SGD with  $T$  iterations, $\bar{\bw}=\frac{1}{T}\sum_{t=1}^T \bw_t$ be the output of $\A$.   
First, consider the unbounded domain case, i.e., $\W=\R^d$.  Let $ I = \{i_1,\ldots, i_{T} \}$ be the sequence of sampling after $T$ iterations in $\A$. 
Define $$\mathcal{B}=\big\{I: \sup_{S\simeq S'}\delta_{\A}(S,S')\le \Delta_\sgd(\delta/2) \big\}.$$
Part (a) in Theorem \ref{thm:difference-w} implies that $\mathbb{P}(I\in\mathcal{B})\ge 1-\delta/2$. Further, according to the definitions, we know the $\ell_2$-sensitivity of $\A$ is identical to the UAS of $\A$. Thus,  
if $ I\in \mathcal{B} $, then Lemma \ref{lem:gaussian-noise} with $\delta'=\delta/2$ implies Algorithm \ref{alg1} satisfies $(\epsilon,\delta/2)$-DP. For any neighboring datasets $S$ and $S'$, let $\bw_{\priv}$ and $\bw'_{\priv}$ be the output produced by Algorithm \ref{alg1} based on $S$ and $S'$, respectively. Hence, for any  $E\subseteq \R^d$ we have
\begin{align*}
    \mathbb{P}(\bw_\priv \in E)&= \mathbb{P}(\bw_\priv \in E \cap I\in \mathcal{B})+\mathbb{P}(\bw_\priv \in E \cap I\in \mathcal{B}^c) \\
    & \le \mathbb{P}(\bw_\priv \in E | I\in \mathcal{B})\mathbb{P} (I\in\mathcal{B}) + \frac{\delta}{2} 
    \le \Big(e^\epsilon \mathbb{P}(\bw_\priv'\in E|I\in \mathcal{B}) + \frac{\delta}{2}\Big)\mathbb{P}(I\in \mathcal{B}) +\frac{\delta}{2}\\
    &\le e^\epsilon \mathbb{P}(\bw_\priv'\in E\cap I\in \mathcal{B}) + \delta
    \le e^\epsilon \mathbb{P}(\bw_\priv'\in E ) + \delta, 
\end{align*} 
where in the second inequality we have used the definition of DP. Therefore, Algorithm~\ref{alg1} satisfies $(\epsilon,\delta)$-DP when $\W=\R^d$. 
The bounded domain case can be proved in a similar way by using part (b) of Theorem \ref{thm:difference-w}. 	The proof is completed. 
\end{proof}


\bigskip

Now, we turn to the utility guarantees of  Algorithm~\ref{alg1}. 
Recall that the excess population risk $\cR(\bw_{\priv})-\cR(\bw^{*})$ can be decomposed as follows ($\Bar{\bw}=\frac{1}{T}\sum_{t=1}^T\bw_t$)
\begin{align}\label{eq:error-decom1}
    \cR(\bw_{\priv})-\cR(\bw^{*})&=[\cR(\bw_{\priv})-\cR(\Bar{\bw})]+[\cR(\Bar{\bw})-\cR_S(\Bar{\bw})]
    +[\cR_S(\Bar{\bw})-\cR_S(\bw^{*})] +[\cR_S(\bw^{*})-\cR(\bw^{*})].
\end{align}

We now introduce three lemmas to control the  first three terms on the right hand side of  \eqref{eq:error-decom1}. The following lemma controls the error resulting from the added noise.

\begin{lemma}\label{lem:output-utility-1}
  Suppose the loss function $\ell$ is  nonnegative, convex and $\alpha$-H\"older smooth with parameter $L$. Let $\bw_{\priv}$ be the output produced by Algorithm~\ref{alg1} based on the dataset $S=\{z_1,\cdots,z_n \}$  
  with $\eta_t=\eta< \min \{1,1/L\}$.  Then for any $\gamma \in (4\exp(-d/8), 1)$,  the following statements hold true.  
  \begin{enumerate}[label=({\alph*})]
\item If $\W=\R^d$, then, with probability at least $1-\frac{\gamma}{4}$, there holds
   \begin{align*}
    \cR(\bw_{\priv})-\cR(\Bar{\bw})=\O\Big( ( T\eta)^{\frac{\alpha}{2}}\sigma\sqrt{d}(\log(1/\gamma))^{\frac{1}{4}} +\sigma^{1+\alpha}d^{\frac{1+\alpha}{2}}(\log(1/\gamma))^{\frac{1+\alpha}{4}}\Big). 
\end{align*}
\item If $\W \subseteq \mathcal{B}(0,R)$ with $R> 0$, then, with probability at least $1-\frac{\gamma}{4}$, we have
   \begin{align*}
    \cR(\bw_{\priv})-\cR(\Bar{\bw})=\O\Big(  \sigma\sqrt{d}(\log(1/\gamma))^{\frac{1}{4}} +\sigma^{1+\alpha}d^{\frac{1+\alpha}{2}}(\log(1/\gamma))^{\frac{1+\alpha}{4}}\Big). 
\end{align*}
\end{enumerate}
\end{lemma}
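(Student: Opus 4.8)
The plan is to linearize the gap $\cR(\bw_{\priv})-\cR(\bar{\bw})$ using convexity, and then separately control the two scalar quantities that appear: the magnitude of a subgradient of $\ell$ at $\bw_{\priv}$, and the Euclidean length of the perturbation $\bb=\bw_{\priv}-\bar{\bw}\sim\mathcal{N}(0,\sigma^2\mathbf{I}_d)$. The first is handled through $\alpha$-H\"older smoothness together with the boundedness of the SGD iterates, and the second through a single Gaussian tail bound.

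First I would fix a realization of all the randomness and note that, for every $z\in\Z$, convexity of $\ell(\cdot,z)$ gives
\[
\ell(\bw_{\priv},z)-\ell(\bar{\bw},z)\le \langle \partial\ell(\bw_{\priv},z),\,\bw_{\priv}-\bar{\bw}\rangle=\langle \partial\ell(\bw_{\priv},z),\,\bb\rangle\le \|\partial\ell(\bw_{\priv},z)\|_2\,\|\bb\|_2.
\]
By $\alpha$-H\"older smoothness and $M=\sup_{z}\|\partial\ell(0,z)\|_2$ we have $\|\partial\ell(\bw_{\priv},z)\|_2\le M+L\|\bw_{\priv}\|_2^\alpha$, and since $t\mapsto t^\alpha$ is subadditive on $[0,\infty)$ for $\alpha\in[0,1]$, $\|\bw_{\priv}\|_2^\alpha\le \|\bar{\bw}\|_2^\alpha+\|\bb\|_2^\alpha$. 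The right-hand side above does not depend on $z$, so taking $\EX_z[\cdot]$ yields
\[
\cR(\bw_{\priv})-\cR(\bar{\bw})\le \big(M+L\|\bar{\bw}\|_2^\alpha\big)\|\bb\|_2+L\|\bb\|_2^{1+\alpha}.
\]
The two parts of the lemma differ only in the bound on $\|\bar{\bw}\|_2$. In part (a), $\W=\R^d$, convexity of $\|\cdot\|_2$ gives $\|\bar{\bw}\|_2\le \frac1T\sum_{t=1}^T\|\bw_t\|_2\le \max_t\|\bw_t\|_2$, and Lemma~\ref{lem:boundedness} (via the self-bounding property of H\"older smooth losses) gives $\|\bw_{t+1}\|_2^2\le C_\alpha\sum_{j\le t}\eta_j\le C_\alpha T\eta$, hence $\|\bar{\bw}\|_2^\alpha\le (C_\alpha T\eta)^{\alpha/2}$. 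In part (b), $\W\subseteq\mathcal{B}(0,R)$, so trivially $\|\bar{\bw}\|_2\le R$ and $M+L\|\bar{\bw}\|_2^\alpha\le M+LR^\alpha=\O(1)$.

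To finish, I would control $\|\bb\|_2$ by writing $\bb=\sigma\mathbf{X}$ with $\mathbf{X}$ standard Gaussian and invoking Lemma~\ref{lem:chernoff-gaussian} with deviation parameter $t_\gamma\asymp\sqrt{\log(1/\gamma)/d}$: this is legitimate precisely because the hypothesis $\gamma>4\exp(-d/8)$ forces $t_\gamma\in(0,1)$, and it yields $\|\bb\|_2=\O\big(\sigma\sqrt{d}\,(\log(1/\gamma))^{1/4}\big)$ with probability at least $1-\gamma/4$. Substituting this into the displayed bound, and using $T\eta>1$ (which follows from $\eta>1/T$) to absorb the $M\|\bb\|_2$ contribution into the $(T\eta)^{\alpha/2}$ term in case (a), produces exactly the claimed estimates in (a) and (b). I do not expect a serious obstacle here: the argument is a short chain of convexity, the H\"older subgradient-growth bound, the iterate bound of Lemma~\ref{lem:boundedness}, and one Gaussian tail bound. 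The only points requiring care are (i) matching the admissible range of $\gamma$ to the $(0,1)$-validity range of the concentration inequality in Lemma~\ref{lem:chernoff-gaussian}, and (ii) bookkeeping of which of the several terms dominates, so the final bound can be stated in the compact two-term form of the statement.
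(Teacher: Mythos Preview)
Your proposal is correct and follows essentially the same route as the paper: linearize via convexity and Cauchy--Schwarz to reach $(M+L\|\bw_{\priv}\|_2^\alpha)\|\bb\|_2$, split $\|\bw_{\priv}\|_2^\alpha\le\|\bar{\bw}\|_2^\alpha+\|\bb\|_2^\alpha$, bound $\|\bar{\bw}\|_2$ by Lemma~\ref{lem:boundedness} (or by $R$), and bound $\|\bb\|_2$ by the Gaussian tail Lemma~\ref{lem:chernoff-gaussian}. You are in fact a bit more explicit than the paper in two places: you spell out the subadditivity of $t\mapsto t^\alpha$ that underlies the split (the paper just says ``uses $\bw_{\priv}=\bar{\bw}+\bb$''), and you correctly identify that the lower bound $\gamma>4\exp(-d/8)$ is exactly what forces the deviation parameter in Lemma~\ref{lem:chernoff-gaussian} into $(0,1)$.
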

\begin{proof}
\noindent (a) 
First, we consider the case $\W=\R^d$. 
Note that
\begin{align}\label{eq:decom-difference-1}
    \cR(\bw_{\priv})-\cR(\Bar{\bw})&=\EX_z[\ell(\bw_{\priv},z)-\ell(\Bar{\bw},z)]\le \EX_z[\langle \partial \ell(\bw_{\priv},z), \bw_{\priv}-\Bar{\bw} \rangle]\nonumber\\
    &\le \EX_z[\|\partial \ell(\bw_{\priv},z)\|_2\|\bb\|_2]\le (M+L\|\bw_{\priv}\|_2^\alpha)\|\bb\|_2\nonumber\\
    &\le (M+L\|\Bar{\bw}\|^\alpha_2)\|\bb\|_2+L\|\bb\|_2^{1+\alpha},
\end{align}
where the first inequality is due to the convexity of $\ell$, the second inequality follows from the Cauchy-Schwartz inequality, the third inequality is due to the definition of H\"older smoothness, and the last inequality uses $\bw_{\priv}=\Bar{\bw}+\bb$.  Hence, to estimate $\cR(\bw_{\priv})-\cR(\Bar{\bw})$, it suffices to bound $\|\bb\|_2$ and $\|\bar{\bw}\|_2$.  
Since $\bb\sim \N(0,\sigma^2 \mathbf{I})$, then for any $\gamma \in(\exp(-d/8),1)$, Lemma~\ref{lem:chernoff-gaussian} implies,  with probability at least $1-\frac{\gamma}{4}$, that
\begin{equation}\label{eq:b-norm-bound}
    \|\bb\|_2\le \sigma \sqrt{d} \Big(1 + \Big( \frac{8}{d}\log\big(4/\gamma\big) \Big)^{\frac{1}{4}}\Big).
\end{equation}
Further, by the convexity of a norm and Lemma \ref{lem:boundedness}, we know
\begin{equation}\label{eq:wbar-unbounded}
\|\Bar{\bw}\|_2\le \frac{1}{T}\sum_{t=1}^T \|\bw_t\|_2\le \big(C_\alpha T\eta\big)^{\frac{1}{2}}.
\end{equation}
Putting the above inequality and \eqref{eq:b-norm-bound} back into \eqref{eq:decom-difference-1} yields
\begin{align*}
    \cR(\bw_{\priv})-\cR(\Bar{\bw})
    &\le \big(M+L(C_\alpha T\eta)^{\frac{\alpha}{2}}\big) \sigma \sqrt{d} 
    \Big(1 + \Big( \frac{8}{d}\log\big(4/\gamma\big) \Big)^{\frac{1}{4}}\Big)+L\sigma^{1+\alpha}d^{\frac{1+\alpha}{2}}\Big(1 + \Big( \frac{8}{d}\log\big(4/\gamma\big) \Big)^{\frac{1}{4}}\Big)^{1+\alpha  }\nonumber\\
    &=\O\Big( (T\eta)^{\frac{\alpha}{2}} \sigma \sqrt{d}
    \big( \log(1/\gamma) \big)^{\frac{1}{4}} + \sigma^{1+\alpha}d^\frac{1+\alpha}{2} 
    \big( \log(1/\gamma) \big)^{\frac{1+\alpha }{4}} \Big).
\end{align*}
This completes the proof of part (a).

\ 

\noindent (b) The proof for the unbounded domain case is similar to that of the bounded domain.  
Since $\|\bw_t\|_2\le R$ for $t\in [T]$ in this case, then 
\begin{equation}\label{eq:wbar-bounded}
    \|\Bar{\bw}\|_2\le \frac{1}{T}\sum_{t=1}^T \|\bw_t\|_2\le R.
\end{equation}
Plugging \eqref{eq:wbar-bounded} and \eqref{eq:b-norm-bound} back into \eqref{eq:decom-difference-1} yield the result in  part (b). 
\end{proof}

In the following lemma, we use the stability of SGD to control the generalization error $\cR(\Bar{\bw})-\cR_S(\Bar{\bw})$.
\begin{lemma}\label{lem:output-utility-2}
  Suppose the loss function $\ell$ is  nonnegative, convex, and $\alpha$-H\"older smooth with parameter $L$. Let $\A$ be the SGD 
  with $T$ iterations and  $\eta_t=\eta<\min \{1,1/L\}$ based on the dataset $S=\{z_1,\cdots,z_n\}$, and   $\Bar{\bw}=\frac{1}{T}\sum_{t=1}^T\bw_t$ be the output produced by $\A$.  Then for any $\gamma \in (4\delta, 1)$, the following statements hold true.  
  \begin{enumerate}[label=({\alph*})]
\item If $\W=\R^d$, then,  
  with probability at least $1-\frac{\gamma}{4}$, there holds
  \begin{align*}
    \cR(\Bar{\bw})-\cR_S(\Bar{\bw})
    =\O\Big((T\eta)^{\frac{\alpha}{2}} \Delta_\sgd(\delta/2) \log(n)\log(1/\gamma)+ (T\eta)^{\frac{1+\alpha}{2}}  \sqrt{n^{-\frac{1}{2}}\log(1/\gamma)}\Big).
\end{align*}
\item If $\W \subseteq \mathcal{B}(0,R)$ with $R> 0$, then, with probability at least $1-\frac{\gamma}{4}$, we have
  \begin{align*}
    \cR(\Bar{\bw})-\cR_S(\Bar{\bw})
    =\O\Big( \tilde{\Delta}_\sgd(\delta/2) \log(n)\log(1/\gamma)+   \sqrt{n^{-\frac{1}{2}}\log(1/\gamma)}\Big).
\end{align*}
\end{enumerate}
\end{lemma}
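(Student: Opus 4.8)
The plan is to reduce the lemma, in both cases, to a single invocation of Lemma~\ref{lem:generror-high-probability}, which converts a high-probability uniform argument stability bound for $\A$ into a high-probability bound on the generalization gap $|\cR(\bar{\bw})-\cR_S(\bar{\bw})|$. That lemma needs two inputs: a (deterministic) bound $G$ on $\|\bar{\bw}\|_2$ and a high-probability UAS bound for $\A$, both already at hand. First I would fix $G$. In case (a), $\W=\R^d$, Lemma~\ref{lem:boundedness} gives $\|\bw_t\|_2^2\le C_\alpha\sum_{j<t}\eta_j\le C_\alpha T\eta$ for all $t$, so by convexity of $\|\cdot\|_2$ one has $\|\bar{\bw}\|_2\le (C_\alpha T\eta)^{1/2}=:G$, which is exactly \eqref{eq:wbar-unbounded}. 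In case (b), projection onto $\mathcal{B}(0,R)$ gives $\|\bw_t\|_2\le R$, hence $\|\bar{\bw}\|_2\le R=:G$, which is \eqref{eq:wbar-bounded}.

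Next I would invoke Theorem~\ref{thm:difference-w} with confidence parameter $\delta/2$: part (a) yields $\mathbb{P}_\A\big(\sup_{S\simeq S'}\delta_\A(S,S')\ge \Delta_\sgd(\delta/2)\big)\le \delta/2$, and part (b) the analogue with $\tilde{\Delta}_\sgd(\delta/2)$. Thus $\A$ has $\Delta_\sgd(\delta/2)$-UAS (resp. $\tilde{\Delta}_\sgd(\delta/2)$-UAS) with failure probability $\gamma_0=\delta/2$ --- the same quantity that calibrates the noise in Algorithm~\ref{alg1}. Then I would apply Lemma~\ref{lem:generror-high-probability} with $\Delta_\A=\Delta_\sgd(\delta/2)$ (resp. $\tilde{\Delta}_\sgd(\delta/2)$), the above $G$, $\gamma_0=\delta/2$, and the lemma's own confidence parameter set to $\gamma':=\gamma/4-\delta/2$. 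The hypothesis $\gamma\in(4\delta,1)$ forces $\gamma'\ge \gamma/8>0$, so $\log(1/\gamma')=\O(\log(1/\gamma))$ and the total failure probability is $\gamma_0+\gamma'=\gamma/4$.

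This yields, with probability at least $1-\gamma/4$,
\[
|\cR(\bar{\bw})-\cR_S(\bar{\bw})|\le c\Big((M+LG^\alpha)\Delta_\A\log(n)\log(1/\gamma')+\big(M_0+(M+LG^\alpha)G\big)\sqrt{n^{-1}\log(1/\gamma')}\Big).
\]
To finish I would simplify the constants. Since the standing assumption $\eta>1/T$ gives $T\eta>1$, in case (a) $M+LG^\alpha=\O\big(1+(T\eta)^{\alpha/2}\big)=\O\big((T\eta)^{\alpha/2}\big)$ and $M_0+(M+LG^\alpha)G=\O\big((T\eta)^{(1+\alpha)/2}\big)$, while in case (b) both factors are $\O(1)$. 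Combining these with $\log(1/\gamma')=\O(\log(1/\gamma))$ and the crude bound $\sqrt{n^{-1}}\le\sqrt{n^{-1/2}}$ (used only to match the exponent appearing in the stated inequalities) reproduces the two displayed bounds.

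The only genuinely delicate point is the probability bookkeeping: because Algorithm~\ref{alg1} calibrates its Gaussian noise using $\Delta_\sgd(\delta/2)$, the UAS event must be controlled at level $\delta/2$, which consumes part of the failure budget; the hypothesis $\gamma>4\delta$ is exactly what leaves a strictly positive residual confidence $\gamma'$ for Lemma~\ref{lem:generror-high-probability} while still closing at probability $1-\gamma/4$. Everything else is routine substitution of the quantities $G$, $\Delta_\sgd(\delta/2)$, and $\tilde{\Delta}_\sgd(\delta/2)$ into Lemma~\ref{lem:generror-high-probability}.
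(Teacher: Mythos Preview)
Your proposal is correct and follows essentially the same route as the paper: invoke Theorem~\ref{thm:difference-w} at confidence level $\delta/2$ to obtain the UAS bound, plug the resulting $\Delta_\A$ and the deterministic bound $G$ on $\|\bar{\bw}\|_2$ (from Lemma~\ref{lem:boundedness} in case (a), from $\W\subseteq\mathcal{B}(0,R)$ in case (b)) into Lemma~\ref{lem:generror-high-probability}, and combine the two failure probabilities using $\gamma>4\delta$. The only cosmetic difference is the probability bookkeeping: the paper first crudely bounds $\delta/2\le\gamma/8$ and then splits $\gamma/4=\gamma/8+\gamma/8$, whereas you set $\gamma'=\gamma/4-\delta/2\ge\gamma/8$ directly; both yield the same $1-\gamma/4$ conclusion and the same $\log(1/\gamma)$ dependence.
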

\begin{proof}
 \noindent (a) Consider the unbounded domain case. 
Part (a) in Theorem \ref{thm:difference-w}  implies, with probability at least $1-\frac{\delta}{2}$, that
\begin{equation}\label{eq:stability-w-bar}
    \sup_{S\simeq S'}\delta_{\A}(S,S') \le  \Delta_\sgd(\delta/2).  
\end{equation}
Since  $\gamma \ge 4\delta$, then we know \eqref{eq:stability-w-bar} holds with probability at least $1-\frac{\gamma}{8}$. 
According to the result $\|\Bar{\bw}\|_2\le \sqrt{C_{\alpha}T\eta}$ by \eqref{eq:wbar-unbounded} and  Lemma~\ref{lem:generror-high-probability} with $G=\sqrt{C_{\alpha}T\eta}$ together, we derive the following inequality with probability at least $1-\frac{\gamma}{8}-\frac{\gamma}{8}=1-\frac{\gamma}{4}$
\begin{align*}
    \cR(\Bar{\bw})-\cR_S(\Bar{\bw})
    &\le c \bigg( (M+L(C_{\alpha}T\eta)^{\frac{\alpha}{2}}) \Delta_\sgd(\delta/2) \log(n)\log({8}/{\gamma}) +  \big( \sup_{z\in \Z} \ell(0,z)+ (M+L(T\eta)^{\frac{\alpha}{2}})\sqrt{T\eta}\big) \sqrt{\frac{\log({8}/{\gamma})}{n}}  \bigg) \\
    &=\O\bigg((T\eta)^{\frac{\alpha}{2}} \Delta_\sgd(\delta/2) \log(n)\log(1/\gamma)+ (T\eta)^{\frac{1+\alpha}{2}}  \sqrt{\frac{ \log(1/\gamma)}{n}} \bigg),
\end{align*}
where $c>0$ is a constant. The proof of part (a) is completed. 

\ 

\noindent (b) For the case $ \W \subseteq \mathcal{B}(0,R)$, the proof follows a similar argument as part (a).  Indeed, part (b) in Theorem \ref{thm:difference-w} implies, with probability at least $1-\frac{\gamma}{8}$, that
\begin{equation}\label{eq:stability-wbar-bounded}
    \sup_{S\simeq S'}\delta_{\A}(S,S') \le \tilde{\Delta}_\sgd(\delta/2).
\end{equation}
Note that $\|\bar{\bw}\|_2\le R$ in this case, then combining   \eqref{eq:stability-wbar-bounded} and Lemma \ref{lem:generror-high-probability} with $G=R$ together, with probability at least $1-\frac{\gamma}{4}$, we have
\begin{align*}
    \cR(\Bar{\bw})-\cR_S(\Bar{\bw})
    &\le c \bigg( (M+LR^\alpha) \tilde{\Delta}_\sgd(\delta/2) \log(n)\log({8}/{\gamma}) +  \big( \sup_{z\in \Z} \ell(0,z) + (M+LR^\alpha)R\big) \sqrt{\frac{\log({8}/{\gamma})}{n}}  \bigg) \\
    &=\O\bigg( \tilde{ \Delta}_\sgd(\delta/2) \log(n)\log(1/\gamma)+  \sqrt{\frac{  \log(1/\gamma)}{n}} \bigg),
\end{align*}
where $c>0$ is a constant. This completes the proof of part (b). 
\end{proof}

In the following lemma, we use techniques in optimization theory to control the optimization error $\cR_S(\Bar{\bw})-\cR_S(\bw^{*})$.
\begin{lemma}\label{lem:output-utility-3}
Suppose the loss function $\ell$ is  nonnegative, convex and $\alpha$-H\"older smooth with parameter $L$. Let $\A$ be the SGD 
  with $T$ iterations and  $\eta_t=\eta<\min \{1,1/L\}$ based on the dataset $S=\{z_1,\cdots,z_n\}$, and   $\Bar{\bw}=\frac{1}{T}\sum_{t=1}^T\bw_t$ be the output produced by $\A$.  Then,  for any $\gamma \in (0, 1)$, the following statements hold true. 
\begin{enumerate}[label=({\alph*})]
\item If $\W=\R^d$, then, with probability at least $1-\frac{\gamma}{4}$, there holds
$$\cR_S(\Bar{\bw})-\cR_S(\bw^{*})=\O\bigg(\eta^{\frac{1+\alpha}{2}}T^{\frac{\alpha}{2}} \sqrt{\log(1/\gamma)} + \|\bw^{*}\|_2^{1+\alpha}  \sqrt{\frac{\log(1/\gamma)}{T}}+ \frac{\|\bw^{*}\|_2^2}{\eta T} +  \|\bw^{*}\|_2^{1+\alpha} \eta\bigg).$$
\item If $\W \subseteq \mathcal{B}(0,R)$ with $R> 0$, then, with probability at least $1-\frac{\gamma}{4}$, we have
$$\cR_S(\Bar{\bw})-\cR_S(\bw^{*})=\O\bigg(  \|\bw^{*}\|_2^{1+\alpha} \sqrt{\frac{\log(1/\gamma)}{T}}+ \frac{\|\bw^{*}\|_2^2}{\eta T} +  \|\bw^{*}\|_2^{1+\alpha} \eta\bigg).$$
\end{enumerate}
\end{lemma}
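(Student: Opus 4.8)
The plan is to control the optimization error $\cR_S(\bar{\bw})-\cR_S(\bw^*)$ (the third bracket in \eqref{eq:error-decom1}) directly, by inserting the sampled losses $\ell(\bw_t,z_{i_t})$, $\ell(\bw^*,z_{i_t})$ and the empirical risks at the iterates:
\begin{align*}
\cR_S(\bar{\bw})-\cR_S(\bw^*)={}& \Big(\cR_S(\bar{\bw})-\tfrac{1}{T}\sum_{t=1}^{T}\cR_S(\bw_t)\Big)+\tfrac{1}{T}\sum_{t=1}^{T}\big(\cR_S(\bw_t)-\ell(\bw_t,z_{i_t})\big)\\
&+\tfrac{1}{T}\sum_{t=1}^{T}\big(\ell(\bw_t,z_{i_t})-\ell(\bw^*,z_{i_t})\big)+\tfrac{1}{T}\sum_{t=1}^{T}\big(\ell(\bw^*,z_{i_t})-\cR_S(\bw^*)\big),
\end{align*}
and I will call these four pieces $\mathrm{(I)},\mathrm{(II)},\mathrm{(III)},\mathrm{(IV)}$ in this order. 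Piece $\mathrm{(I)}$ is nonpositive by Jensen's inequality, since $\cR_S$ is convex and $\bar{\bw}=\tfrac{1}{T}\sum_{t}\bw_t$. Before handling the rest I would record two consequences of $\alpha$-H\"older smoothness together with convexity: $\ell(\bw,z)\le\ell(0,z)+M\|\bw\|_2+\tfrac{L}{1+\alpha}\|\bw\|_2^{1+\alpha}$ and $\|\partial\ell(\bw,z)\|_2\le M+L\|\bw\|_2^{\alpha}$. Combined with Lemma~\ref{lem:boundedness} (which gives $\|\bw_t\|_2\le\sqrt{C_\alpha T\eta}$ for $\W=\R^d$), or with $\|\bw_t\|_2\le R$ for $\W\subseteq\mathcal{B}(0,R)$, and with the standing assumption $T\eta\ge1$, these yield the uniform bounds $\sup_z\ell(\bw_t,z)=\O\big((T\eta)^{\frac{1+\alpha}{2}}\big)$ (resp. $\O(1)$), $\sup_z\|\partial\ell(\bw_t,z)\|_2=\O\big((T\eta)^{\alpha/2}\big)$ (resp. $\O(1)$), and $\sup_z\ell(\bw^*,z)=\O(\|\bw^*\|_2^{1+\alpha})$ using $\|\bw^*\|_2\ge1$.

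Pieces $\mathrm{(II)}$ and $\mathrm{(IV)}$ are pure concentration. Conditioning on $i_1,\dots,i_{t-1}$ the iterate $\bw_t$ is frozen and $\EX_{i_t}[\ell(\bw_t,z_{i_t})]=\cR_S(\bw_t)$, so $\{\cR_S(\bw_t)-\ell(\bw_t,z_{i_t})\}_{t}$ is a martingale-difference sequence with increments bounded by $2\sup_z\ell(\bw_t,z)$; Azuma--Hoeffding (Lemma~\ref{lem:azuma}) then gives, with probability at least $1-\gamma/8$, $\mathrm{(II)}=\O\big((T\eta)^{\frac{1+\alpha}{2}}\sqrt{T^{-1}\log(1/\gamma)}\big)=\O\big(\eta^{\frac{1+\alpha}{2}}T^{\alpha/2}\sqrt{\log(1/\gamma)}\big)$ for $\W=\R^d$, and $\O\big(\sqrt{T^{-1}\log(1/\gamma)}\big)$ for $\W\subseteq\mathcal{B}(0,R)$. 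The summands of $\mathrm{(IV)}$ are i.i.d., mean-zero over the sampling, and of range $\O(\|\bw^*\|_2^{1+\alpha})$, so Hoeffding's inequality (Lemma~\ref{lem:hoeffding}) gives $\mathrm{(IV)}=\O\big(\|\bw^*\|_2^{1+\alpha}\sqrt{T^{-1}\log(1/\gamma)}\big)$ with probability at least $1-\gamma/8$; a union bound yields the overall $1-\gamma/4$.

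The crux is the deterministic bound on $\mathrm{(III)}$, the SGD regret. From $\|\bw_{t+1}-\bw^*\|_2^2\le\|\bw_t-\eta\,\partial\ell(\bw_t,z_{i_t})-\bw^*\|_2^2$ (non-expansiveness of $\proj_\W$, vacuous when $\W=\R^d$), expanding the square and using convexity of $\ell(\cdot,z_{i_t})$, I get $2\eta\big(\ell(\bw_t,z_{i_t})-\ell(\bw^*,z_{i_t})\big)\le\|\bw_t-\bw^*\|_2^2-\|\bw_{t+1}-\bw^*\|_2^2+\eta^2\|\partial\ell(\bw_t,z_{i_t})\|_2^2$; telescoping over $t=1,\dots,T$ and using $\bw_1=\0$ gives $\mathrm{(III)}\le\frac{\|\bw^*\|_2^2}{2\eta T}+\frac{\eta}{2T}\sum_{t=1}^{T}\|\partial\ell(\bw_t,z_{i_t})\|_2^2$, which already produces the term $\O(\|\bw^*\|_2^2/(\eta T))$. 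For $\W\subseteq\mathcal{B}(0,R)$ it now suffices to use $\|\partial\ell(\bw_t,z_{i_t})\|_2\le M+LR^\alpha$, leaving a residual $\O(\eta)=\O(\eta\|\bw^*\|_2^{1+\alpha})$; combined with $\mathrm{(I)},\mathrm{(II)},\mathrm{(IV)}$ above and $\|\bw^*\|_2\ge1$ this proves part (b). The hard part is $\W=\R^d$: here $\|\partial\ell(\bw_t,z_{i_t})\|_2$ is only controlled by $\sqrt{T\eta}$, so the crude substitution would leave a residual of order $\eta(T\eta)^\alpha$, which is too large. Instead I would use the self-bounding property (Lemma~\ref{lem:self-bounding}), $\|\partial\ell(\bw_t,z_{i_t})\|_2^2\le c_{\alpha,1}^2\,\ell^{\frac{2\alpha}{1+\alpha}}(\bw_t,z_{i_t})$, and reabsorb a small multiple of $\ell(\bw_t,z_{i_t})$ into the left-hand side via Young's inequality with exponents $\big(\tfrac{1+\alpha}{2\alpha},\tfrac{1+\alpha}{1-\alpha}\big)$ and a free multiplier chosen of order $\eta$; this turns $\eta^2\|\partial\ell(\bw_t,z_{i_t})\|_2^2$ into $\O(\eta^2)\ell(\bw_t,z_{i_t})$ plus a residual of order $\eta^2$, so after telescoping and renormalising, the coefficient of $\frac1T\sum_t\ell(\bw^*,z_{i_t})$ acquires a factor $1+\O(\eta)$ while the remaining terms are $\O(\|\bw^*\|_2^2/(\eta T)+\eta)$. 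Since $\frac1T\sum_t\ell(\bw^*,z_{i_t})\le\sup_z\ell(\bw^*,z)=\O(\|\bw^*\|_2^{1+\alpha})$, that extra factor contributes exactly $\O(\eta\|\bw^*\|_2^{1+\alpha})$, the last term of part (a). The endpoints $\alpha=0$ (use Lipschitzness, $\|\partial\ell\|_2\le M+L$) and $\alpha=1$ (use $\|\partial\ell\|_2^2\le2L\ell$ and reabsorb with multiplier $2L\eta$) are treated separately, mirroring the case split in the proof of Lemma~\ref{lem:boundedness}, and collecting $\mathrm{(I)}$--$\mathrm{(IV)}$ with $\|\bw^*\|_2\ge1$ merges the leftover $\O(\eta)$ and $\O(\sqrt{T^{-1}\log(1/\gamma)})$ terms into the stated bound.
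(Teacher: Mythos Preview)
Your proposal is correct and follows essentially the same route as the paper: the same four-term decomposition (the paper absorbs your piece $\mathrm{(I)}$ via convexity at the outset), the same Azuma/Hoeffding treatment of the martingale pieces $\mathrm{(II)}$ and $\mathrm{(IV)}$, and the same telescoping plus self-bounding reabsorption for the regret term $\mathrm{(III)}$. The only minor difference is that, in place of your Young-inequality step with a tuned multiplier, the paper uses the elementary bound $\ell^{2\alpha/(1+\alpha)}\le\max\{\ell,1\}\le\ell+1$ (valid since $2\alpha/(1+\alpha)\in[0,1]$) to get $\|\partial\ell(\bw_t,z_{i_t})\|_2^2\le c_{\alpha,1}^2\big(\ell(\bw_t,z_{i_t})+1\big)$ directly, which handles all $\alpha\in[0,1]$ uniformly and makes the separate endpoint cases you mention unnecessary.
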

\begin{proof}\noindent (a) We first consider the case $\W=\R^d$. 
From the convexity of $\ell$, we have
\begin{align}\label{eq:opt-decomposition}
    \cR_S(\Bar{\bw})-\cR_S(\bw^{*})
    &\le \frac{1}{T}\sum_{t=1}^T\cR_S(\bw_t)-\cR_S(\bw^{*})\nonumber\\
    &= \frac{1}{T}\sum_{t=1}^T[\cR_S(\bw_t) - \ell(\bw_t,z_{i_t})]+ \frac{1}{T}\sum_{t=1}^T[\ell(\bw^{*},z_{i_t})-\cR_S(\bw^{*})] +\frac{1}{T}\sum_{t=1}^T[ \ell(\bw_t,z_{i_t})-  \ell(\bw^{*},z_{i_t})].
\end{align}
First, we consider the upper bound of $\frac{1}{T}\sum_{t=1}^T[\cR_S(\bw_t) - \ell(\bw_t;z_{i_t})]$.
Since $\{z_{i_t}\} $ is uniformly sampled from the dataset $S$, then for all  $t=1,\ldots,T$ we obtain 
$$\EX_{z_{i_t}}[\ell(\bw_t,z_{i_t})| \bw_1,...,\bw_{t-1}]=\cR_S(\bw_t).$$
By the convexity of $\ell$, the definition of H\"older smoothness and Lemma \ref{lem:boundedness},  for any $z\in \Z$ and all $t\in[T]$, there holds
\begin{align}\label{eq:ell-wt-bound}
    \ell(\bw_t,z) &\le \sup_z \ell(0,z)+\langle \partial \ell(\bw_t,z), \bw_t\rangle \le   \sup_z \ell(0,z)+\| \partial \ell(\bw_t,z)\|_2 \|\bw_t\|_2 \nonumber\\
    &\le \sup_z \ell(0,z)+ (M+L\|\bw_t\|_2^\alpha)\|\bw_t\|_2\le  \sup_z \ell(0,z)+M(C_{\alpha}T\eta)^{\frac{1}{2}}+L(C_{\alpha}T\eta)^{\frac{1+\alpha}{2}}.
\end{align}
Similarly, for any $z\in \Z$, we have
\begin{align}
    \label{eq:ell-w*-bound}
    \ell(\bw^{*},z) \le \sup_z \ell(0;z)+M\|\bw^{*}\|_2+L\|\bw^{*}\|_2^{1+\alpha}.
\end{align}
Now, combining Lemma \ref{lem:azuma} with \eqref{eq:ell-wt-bound} and noting $\eta>1/T$, we get the following inequality with
probability at least $1-\frac{\gamma}{8}$
\begin{align}\label{eq:opt-1}
 \frac{1}{T}\sum_{t=1}^T[\cR_S(\bw_t) - \ell(\bw_t,z_{i_t})]&\le \big(\sup_z \ell(0,z)+M(C_{\alpha}T\eta)^{\frac{1}{2}}+L(C_{\alpha}T\eta)^{\frac{1+\alpha}{2}}\big)\sqrt{\frac{2\log(\frac{8}{\gamma})}{T} } =\O\Big(\eta^{\frac{1+\alpha}{2}}T^{\frac{\alpha}{2}} \sqrt{\log(1/\gamma)} \Big).
\end{align}
According to Lemma \ref{lem:hoeffding}, 
 with probability at least $1-\frac{\gamma}{8}$, there holds
\begin{align}\label{eq:opt-2}
  \frac{1}{T}\sum_{t=1}^T[\ell(\bw^{*};z_{i_t})-\cR_S(\bw^{*})]&\le \big(\sup_z \ell(0,z)+M\|\bw^{*}\|_2+L\|\bw^{*}\|_2^{1+\alpha}\big)\sqrt{\frac{\log({8}/{\gamma})}{2T}} =\O\Big(\|\bw^{*}\|_2^{1+\alpha} \sqrt{\frac{\log(1/\gamma)}{T}} \Big).
\end{align}
Finally, we consider the term $\frac{1}{T}\sum_{t=1}^T [\ell(\bw_t,z_{i_t})-  \ell(\bw^{*},z_t)]$.
The update rule implies $\bw_{t+1}-\bw^{*}=\big(\bw_{t}-\bw^{*}\big)-\eta \partial \ell(\bw_t,z_{i_t})$, from which we know
\begin{align*}
    \|\bw_{t+1}-\bw^{*}\|_2^2&=\|\big(\bw_{t}-\bw^{*}\big)-\eta \partial \ell(\bw_t,z_{i_t})\|_2^2\\
    &=\|\bw_{t}-\bw^{*}\|_2^2 +\eta^2\|\partial \ell(\bw_t,z_{i_t})\|_2^2 -2\eta \langle \partial \ell(\bw_t,z_{i_t}), \bw_t-\bw^{*} \rangle.
\end{align*}
It then follows that
$$\langle \partial \ell(\bw_t,z_{i_t}), \bw_t-\bw^{*}\rangle= \frac{1}{2\eta}\big(\|\bw_t-\bw^{*}\|_2^2- \|\bw_{t+1}-\bw^{*}\|_2^2\big)+\frac{\eta}{2}\| \partial \ell(\bw_t,z_{i_t})\|_2^2.$$
Combining the above inequality and the convexity of $\ell$ together, we derive
\begin{align}\label{eq:opt-3}
     \frac{1}{T}\sum_{t=1}^T [\ell(\bw_t,z_{i_t})-\ell(\bw^{*},z_{i_t})] & \le \frac{1}{T}\sum_{t=1}^T \Bigl[ \frac{1}{2\eta}\big(\|\bw_t-\bw^{*}\|_2^2-\|\bw_{t+1}-\bw^{*}\|_2^2 \big)+\frac{\eta}{2} \| \partial \ell(\bw_t,z_{i_t})\|_2^2\Bigr] \nonumber\\
     &\le \frac{1}{2T\eta}\|\bw_1-\bw^{*}\|_2^2 + \frac{\eta }{2T} \sum_{t=1}^T \| \partial \ell(\bw_t,z_{i_t})\|_2^2.
\end{align}
Since $0\le \frac{2\alpha}{1+\alpha} \le 1$, Lemma~\ref{lem:self-bounding} implies the following inequality for any $t=1,\ldots,T$ 
\[\|\partial \ell(\bw_t;z_{i_t})\|_2^2\le c_{\alpha, 1} \ell^{\frac{2\alpha}{1+\alpha}}(\bw_t;z_{i_t}) \le  c_{\alpha, 1} \max\{\ell(\bw_t;z_{i_t}) , 1 \}\le c_{\alpha, 1}  \ell(\bw_t;z_{i_t})  + c_{\alpha, 1}.\] 
Putting $\|\partial \ell(\bw_t;z_{i_t})\|^2_2\le c_{\alpha, 1}  \ell(\bw_t;z_{i_t})  + c_{\alpha, 1}$ back into \eqref{eq:opt-3} and noting $\|\bw_1\|_2=0$, we have
\begin{align*}
     \frac{1}{T}\sum_{t=1}^T [\ell(\bw_t,z_{i_t})-\ell(\bw^{*},z_{i_t})] & \le \frac{\|\bw^{*}\|_2^2}{2\eta T}+ \frac{c_{\alpha, 1}  \eta }{2T} \sum_{t=1}^T  \ell(\bw_t,z_{i_t})  + \frac{c_{\alpha, 1} \eta}{2}.
\end{align*}
Rearranging the above inequality and using \eqref{eq:ell-w*-bound}, we derive
\begin{align}\label{eq:opt-4}
     \frac{1}{T}\sum_{t=1}^T [\ell(\bw_t,z_{i_t})-\ell(\bw^{*},z_{i_t})] & \le \frac{1}{ 1-\frac{c_{\alpha,1}\eta }{2} } \Bigl(\frac{\|\bw^{*}\|_2^2}{2\eta T}+ \frac{c_{\alpha, 1}  \eta }{2T} \sum_{t=1}^T  \ell(\bw^{*},z_{i_t})  + \frac{c_{\alpha, 1} \eta}{2}\Bigr)\nonumber\\
     &=\O\Big( \frac{\|\bw^{*}\|_2^2}{  \eta T}+ \|\bw^{*}\|_2^{1+\alpha} \eta \Big).
\end{align}

Now, plugging \eqref{eq:opt-1}, \eqref{eq:opt-2} and \eqref{eq:opt-4} back into \eqref{eq:opt-decomposition}, we derive
$$ \cR_S(\Bar{\bw})-\cR_S(\bw^{*})=\O\Big(\eta^{\frac{1+\alpha}{2}}T^{\frac{\alpha}{2}} \sqrt{\log(1/\gamma)} + \|\bw^{*}\|_2^{1+\alpha} \sqrt{\frac{\log(1/\gamma)}{T}}+ \frac{\|\bw^{*}\|_2^2}{\eta T} +  \|\bw^{*}\|_2^{1+\alpha} \eta \Big) $$
with probability at least $1-\frac{\gamma}{4}$, which  completes the proof of part (a).

\ 
 
\noindent (b)  Consider the bounded domain case. Since $\|\bw_t\|_2\le R$ for any $t\in[T]$, then by  the convexity of $\ell$ and the definition of H\"older smoothness,  for any $z\in \Z$, there holds $\ell(\bw_t,z) \le  \sup_z \ell(0,z)+(M+LR^\alpha)R.
$ Combining the above inequality and Lemma \ref{lem:azuma} together, with probability at least $1-\frac{\gamma}{8}$, we obtain
\begin{align}\label{eq:opt-1-bounded}
 \frac{1}{T}\sum_{t=1}^T[\cR_S(\bw_t) - \ell(\bw_t,z_{i_t})]&\le \big(\sup_z \ell(0,z)+(M+LR^\alpha)R\big)\sqrt{\frac{2\log(\frac{8}{\gamma})}{T} } =\O\Big(  \sqrt{ \frac{\log(1/\gamma)}{T}} \Big).  
\end{align}
Since $\|\bw_{t+1}-\bw^{*}\|^2_2=\|\proj_\W\big(\bw_t-\eta \partial \ell(\bw_t,z_{i_t}) \big) -\bw^{*}\|_2^2\le \| (\bw_t -\bw^{*})-\eta \partial \ell(\bw_t,z_{i_t}) \|_2^2$, then \eqref{eq:opt-4} also holds true in this case. 
Putting \eqref{eq:opt-1-bounded}, \eqref{eq:opt-2} and \eqref{eq:opt-4} back into \eqref{eq:opt-decomposition}, with probability at least $1-\frac{\gamma}{4}$, we have  
$$\cR_S(\Bar{\bw})-\cR_S(\bw^{*})=\O\bigg(  \|\bw^{*}\|_2^{1+\alpha} \sqrt{\frac{\log(1/\gamma)}{T}}+ \frac{\|\bw^{*}\|_2^2}{\eta T} +  \|\bw^{*}\|_2^{1+\alpha} \eta\bigg).$$
The proof is completed. 
\end{proof}

Now, we are in a position to prove the utility guarantee for \texttt{DP-SGD-Output} algorithm. First, we give the proof for the unbounded domain case (i.e. Theorem \ref{thm:output-utility}). 

\begin{proof}[Proof of Theorem~\ref{thm:output-utility}]
Note that
$\cR_S(\bw^{*})-\cR(\bw^{*})=\cR_S(\bw^{*})-\EX_S[\cR_S(\bw^{*})]$.
By Hoeffding  inequality and \eqref{eq:ell-w*-bound}, with probability at least $1-\frac{\gamma}{4}$, there holds
\begin{align}\label{eq:decom-difference-5}
    \cR_S(\bw^{*})-\cR(\bw^{*}) \le \Big(\sup_{z\in\Z}\ell(0,z)+M\|\bw^{*}\|_2+ L\|\bw^{*}\|_2^{1+\alpha}\Big)\sqrt{\frac{\log({4}/{\gamma})}{2n}}=\O\Big(\|\bw^{*}\|_2^{1+\alpha}\sqrt{\frac{\log(1/\gamma)}{n}} \Big).
\end{align}

Combining  part (a) in Lemmas~\ref{lem:output-utility-1},  \ref{lem:output-utility-2},  \ref{lem:output-utility-3} and \eqref{eq:decom-difference-5} together, with probability at least $1-\gamma$, the population excess risk can be bounded as follows
\begin{align}\label{eq:output-decom-unbounded}
    &\cR(\bw_{\priv})-\cR(\bw^{*})\nonumber \\
    &=[\cR(\bw_{\priv})-\cR(\Bar{\bw})]+[\cR(\Bar{\bw})-\cR_S(\Bar{\bw})]
    +[\cR_S(\Bar{\bw})-\cR_S(\bw^{*})] +[\cR_S(\bw^{*})-\cR(\bw^{*})]\nonumber\\
    &=\O\bigg( (T\eta)^{\frac{\alpha}{2}} \sigma \sqrt{d}
    \big( \log(1/\gamma) \big)^{\frac{1}{4}} + \sigma^{1+\alpha} d^{\frac{1+\alpha}{2}}    ( \log(1/\gamma) )^{\frac{1+\alpha}{4}}  + (T\eta)^{\frac{\alpha}{2}} \Delta_\sgd(\delta/2) \log(n)\log(1/\gamma)+ \eta^{\frac{1+\alpha}{2}} \Big( T^{\frac{1+\alpha}{2}}\sqrt{\frac{ \log(1/\gamma)}{n}}\nonumber\\
    &\qquad  + T^{\frac{\alpha}{2}}\sqrt{ \log(1/\gamma)} \Big) + \frac{\|\bw^{*}\|_2^2}{\eta T} + \|\bw^{*}\|_2^{1+\alpha}\eta  + \|\bw^{*}\|_2^{1+\alpha}\sqrt{\frac{\log(1/\gamma)}{n}} \bigg).
\end{align}
Plugging $\Delta_\sgd(\delta/2)=\O\Big( \sqrt{T}\eta^{\frac{1}{1-\alpha}}+\frac{(T\eta)^{1+\frac{\alpha}{2}}\log(n/\delta)}{n} \Big)$ and $\sigma=\O(\frac{\sqrt{\log(1/\delta)} \Delta_\sgd(\delta/2) }{\epsilon})$ back into \eqref{eq:output-decom-unbounded}, we have 
 \begin{align}\label{eq:output-decom-unbounded-3}
    &\cR(\bw_{\priv})-\cR(\bw^{*})\nonumber\\
    &= \O\bigg(  T^{\frac{1+\alpha}{2}}\sqrt{\frac{ \log(1/\gamma)}{n}}    \eta^{\frac{1+\alpha}{2}}  + \frac{T^{1+\alpha} \sqrt{d\log(1/\delta)} \big( \log(1/\gamma) \big)^{\frac{1}{4}}  \log(n/\delta) }{ n\epsilon } \eta^{1+\alpha}   +  \frac{ d^{\frac{1+\alpha}{2}}    ( \log(1/\gamma) )^{\frac{1+\alpha}{4}} \big(T \log(\frac{1}{\delta} )\big)^{\frac{1+\alpha}{2}} }{ \epsilon^{1+\alpha} } \eta^{\frac{1+\alpha}{1-\alpha}}   \nonumber\\
    &\qquad +  \frac{\big(d\log(1/\delta)\big)^{\frac{1+\alpha}{2}}    ( \log(1/\gamma) )^{\frac{1+\alpha}{4}} T^{(1+\frac{\alpha}{2})(1+\alpha)} \big( \log(n/\delta )\big)^{ 1+\alpha }}{(n\epsilon)^{1+\alpha}} \eta^{(1+\frac{\alpha}{2})(1+\alpha)} + \frac{T^{\frac{1+\alpha}{2}} \sqrt{d \log(\frac{1}{\delta} )}
    \big( \log(1/\gamma) \big)^{\frac{1}{4}} }{\epsilon} \eta^{\frac{2+\alpha-\alpha^2}{2(1-\alpha)}} \nonumber\\
    &\qquad  +   T^{\frac{1+\alpha}{2}}  \log(n)\log(1/\gamma)  \eta^{\frac{2+\alpha-\alpha^2}{2(1-\alpha)}} + \frac{ T^{1+ \alpha } \log(n/\delta) \log(n)\log(1/\gamma)  }{n}\eta^{1+ \alpha }+  \frac{1}{\eta T} +  \eta +  \sqrt{\frac{\log(1/\gamma)}{n}} \bigg) \cdot \|\bw^{*}\|_2^2.
\end{align}
Taking the derivative of $\frac{1}{T\eta} + T^{\frac{1+\alpha}{2}}\sqrt{\frac{ \log(1/\gamma)}{n}}    \eta^{\frac{1+\alpha}{2}} $ w.r.t $\eta$ and setting it to $0$, then we have  $\eta=  n^{\frac{1}{3+\alpha}}/\big(T(\log(1/\gamma))^{\frac{1}{3+\alpha}}\big) $. Putting this $\eta$ back into \eqref{eq:output-decom-unbounded-3}, we obtain
 \begin{align}\label{eq:output-decom-unbounded-6}
    &\cR(\bw_{\priv})-\cR(\bw^{*})\nonumber
    \\  
    &=  \O\bigg(\frac{ n^{ \frac{(2-\alpha)(1+\alpha)}{2(1-\alpha)(3+\alpha)}} \sqrt{d \log(1/\delta) }}{T^{\frac{1+\alpha}{2(1-\alpha)}} \epsilon \big(
    \log(1/\gamma)\big)^{\frac{ 1+ 4\alpha- \alpha^2 }{4(1-\alpha)(3+\alpha)}} } + \frac{ \sqrt{d\log(1/\delta)}   \log(n/\delta) }{ n^{\frac{2}{3+\alpha}} \epsilon  \big(
    \log(1/\gamma)\big)^{\frac{ 1+ \alpha}{4 (3+\alpha)}}} + \Big( \frac{ \sqrt{d\log(1/\delta)}   \log(n/\delta) }{ n^{\frac{4+\alpha}{2(3+\alpha)}} \epsilon  \big(
    \log(1/\gamma)\big)^{\frac{ 1+\alpha}{4 (3+\alpha)}}}  \Big)^{1+\alpha} \nonumber\\
    &\quad + \Big( \frac{ n^{ \frac{1}{(1-\alpha)(3+\alpha)}} \sqrt{d \log(1/\delta) }}{T^{\frac{1+\alpha}{2(1-\alpha)}} \epsilon \big(
    \log(1/\gamma)\big)^{\frac{ (1+\alpha)^2 }{4(1-\alpha)(3+\alpha)}} }  \Big)^{1+\alpha}    + \log(n) \log(n/\delta)\big(
    \log(1/\gamma)\big)^{\frac{ 2}{ 3+\alpha }} \Big( \frac{n^{\frac{2+\alpha-\alpha^2}{2(3+\alpha)(1-\alpha)}}}{T^{\frac{1+\alpha}{2(1-\alpha)}}} + \frac{1}{n^{  \frac{2}{ 3+\alpha }} } + \frac{ 1 }{ n^{
    \frac{1}{3+\alpha}} } + \frac{ n^{\frac{1}{3+\alpha}} }{T} \Big) \bigg)  \cdot \|\bw^{*}\|_2^2.
\end{align}
To achieve the best rate with a minimal  computational cost, we choose the smallest $T$ such that $  \frac{ n^{ \frac{(2-\alpha)(1+\alpha)}{2(1-\alpha)(3+\alpha)}} }{ T^{\frac{1+\alpha}{2(1-\alpha)}} }  =\O(   \frac{1}{ n^{\frac{2}{3+\alpha}}})$, $  \frac{n^{ \frac{1+\alpha}{(1-\alpha)(3+\alpha)}} }{ T^{\frac{(1+\alpha)^2}{2(1-\alpha)}} }  = \O(   \frac{1}{ n^{\frac{(4+\alpha)(1+\alpha)}{2(3+\alpha)}} }  ) $ and $  \frac{n^{\frac{2+\alpha-\alpha^2}{2(3+\alpha)(1-\alpha)}}}{T^{\frac{1+\alpha}{2(1-\alpha)}}} + \frac{1}{n^{  \frac{2}{ 3+\alpha }} }   + \frac{ n^{\frac{1}{3+\alpha}} }{T}  = \O( \frac{ 1 }{ n^{ \frac{1}{3+\alpha}} } )  $. Hence, we set  $T\asymp  n^{\frac{-\alpha^2-3\alpha+6}{(1+\alpha)(3+\alpha)}} $ if $0\le\alpha\le \frac{\sqrt{73}-7}{4}$, and $T\asymp  n$ else. Now, putting the choice of $T$ back into \eqref{eq:output-decom-unbounded-6}, we derive
\begin{align*}
    \cR(\bw_{\priv})-\cR(\bw^{*})
    =&  \O\bigg( \frac{\sqrt{d\log(1/\delta)} \log(n/\delta)  }{ (\log(1/\gamma))^{\frac{1+ \alpha}{4(3+\alpha)}} n^{\frac{2}{3+\alpha}} \epsilon} +   \Bigl( \frac{\sqrt{d\log(1/\delta)} \log(n/\delta)  }{ (\log(1/\gamma))^{\frac{1+ \alpha}{4(3+\alpha)}} n^{\frac{4+\alpha}{2(3+\alpha)}} \epsilon}  \Bigr)^{1+\alpha}\\&\quad  + \frac{\log(n)\big(\log(1/\gamma)\big)^{\frac{2}{ 3+\alpha }}\log(n/\delta)}{n^{\frac{1}{3+\alpha}}}\bigg) \cdot  \|\bw^{*}\|_2^2.
\end{align*}
Without loss of generality, we assume the first term   of the above utility bound is less than 1. Therefore, with probability at least $1-\gamma$, there holds
\begin{align*}
    \cR(\bw_{\priv})-\cR(\bw^{*})
    =  \|\bw^{*}\|_2^2\cdot  \O\bigg( \frac{\sqrt{d\log(1/\delta)} \log(n/\delta)  }{ (\log(1/\gamma))^{\frac{1+ \alpha}{4(3+\alpha)}} n^{\frac{2}{3+\alpha}} \epsilon} + \frac{\log(n)\big(\log(1/\gamma)\big)^{\frac{2}{ 3+\alpha }}\log(n/\delta)}{n^{\frac{1}{3+\alpha}}}\bigg).
\end{align*}
The proof is completed. 
\end{proof}

\bigskip

Finally, we provide the proof of utility guarantee  for the  \texttt{DP-SGD-Output} algorithm  when $\W\subseteq \mathcal{B}(0, R)$ (i.e. Theorem \ref{thm:output-utility-proj}).

\begin{proof}[Proof of Theorem~\ref{thm:output-utility-proj}] 
 The proof is similar to that of Theorem~\ref{thm:output-utility}.  
Indeed, plugging  part (b) in Lemmas~\ref{lem:output-utility-1},  \ref{lem:output-utility-2},  \ref{lem:output-utility-3} and  \eqref{eq:decom-difference-5} back into  \eqref{eq:error-decom1}, with probability at least  $1-\gamma$, the population excess risk can be bounded as follows 
\begin{align*}
    \cR(\bw_{\priv})-\cR(\bw^{*})
     &= \O\Big(   \sigma\sqrt{d} (\log(1/\gamma))^{\frac{1}{4}} + \sigma^{1+\alpha}d^{\frac{1+\alpha}{2}}(\log(1/\gamma))^{\frac{1+\alpha}{4}} + \tilde{\Delta}_{\sgd}(\delta/2) \log(n)\log(1/\gamma) + \sqrt{ \frac{\log(1/\gamma)}{n} } \nonumber\\  
     &\qquad  +  \frac{\|\bw^{*}\|_2^2}{T\eta}   + \|\bw^{*}\|_2^{1+\alpha}  \eta + \|\bw^{*}\|_2^{1+\alpha} \sqrt{\frac{\log(1/\gamma)}{n}} \Big).
\end{align*}
Note that $\tilde{\Delta}_{\sgd}(\delta/2)=\O( \sqrt{T}\eta^{\frac{1}{1-\alpha}} + \frac{ T \eta  \log(n/\delta)}{ n } )$ and $\sigma=\frac{\sqrt{2\log(2.5/\delta)} \tilde{\Delta}_{\sgd}(\delta/2)}{\epsilon}$. Then we have
\begin{align}\label{eq:decom-psgd-output}
  & \cR(\bw_{\priv})-\cR(\bw^{*}) \nonumber \\   &
    =   \O\biggl(  \big( \frac{ T \log(n/\delta) ) \log(n) \log(1/\gamma)}{ n} +  \frac{  T\sqrt{d\log(1/\delta)}  \log(n/\delta)  (\log(1/\gamma))^{\frac{1}{4}}}{ n\epsilon}  \big) \eta   \nonumber \\
    & \qquad   +  \big( \frac{ \sqrt{ \log(1/\delta) T d } (\log(1/\gamma))^{\frac{1}{4}} }{ \epsilon } + \sqrt{T} \log(n)\log(1/\gamma)  \big) \eta^{\frac{1}{1-\alpha}} + \frac{(Td \log(1/\delta)  )^{\frac{1+\alpha}{2}} (\log(1/\gamma))^{\frac{1+\alpha}{4}} }{\epsilon^{1+\alpha}} \eta^{\frac{1+\alpha}{1-\alpha}}  \nonumber \\
    &\qquad  + \big( \frac{  T \sqrt{d\log(1/\delta)}  \log(n/\delta) (\log(1/\gamma))^{\frac{1}{4}} }{n\epsilon} \big)^{1+\alpha} \eta^{1+\alpha}     + \frac{1}{T\eta} + \sqrt{\frac{\log(1/\gamma)}{n}} \biggr)  \cdot \|\bw^{*}\|_2^2.
\end{align}
Consider the tradeoff between  $1/\eta$ and $\eta $. Taking the derivative of $\big( \frac{  T \log(n/\delta) \log(n) \log(1/\gamma)}{ n}+ \frac{ T\sqrt{d\log(1/\delta)}  \log(n/\delta)  (\log(1/\gamma))^{1/4}}{ n\epsilon} \big) \eta  $ $+\frac{1}{T\eta}$ w.r.t $\eta$ and setting it to $0$, we have 
$\eta=1 / \Big(T \max\Big\{ \frac{  \sqrt{\log(n/\delta) \log(n) \log(1/\gamma)} }{\sqrt{n}}, \frac{   \big(d\log(1/\delta)\big)^{1/4} \sqrt{\log(n/\delta)} (\log(1/\gamma))^{1/8}}{\sqrt{n\epsilon}}  \Big\} \Big)$.  
Then putting the value of $\eta$ back into \eqref{eq:decom-psgd-output}, we obtain
\begin{align*}
     &\cR(\bw_{\priv})-\cR(\bw^{*}) \\
      & =    \O\bigg(  \frac{\big(d\log(1/\delta)\big)^{\frac{1}{4}} (\log(1/\gamma))^{\frac{1 }{8}} \sqrt{\log(n/\delta)} }{ \sqrt{n\epsilon} }  +  \Bigl(\frac{ \big(d\log(1/\delta)\big)^{\frac{1}{4}} (\log(1/\gamma))^{\frac{1 }{8}} \sqrt{\log(n/\delta)}}{ \sqrt{n\epsilon} }\Bigr)^{1+\alpha}\\ 
     & \quad  + \frac{\big(d\log(1/\delta) \big)^{\frac{1-2\alpha}{4(1-\alpha)}} (\log(1/\gamma))^{\frac{1-2\alpha}{8(1-\alpha)}} n^{\frac{1}{2(1-\alpha)}} \epsilon^{\frac{2\alpha-1}{2(1-\alpha)}}}{ T^{\frac{1+\alpha}{2(1-\alpha)}} (\log(n/\delta))^{\frac{ 1 }{2(1-\alpha)}}  }  + \Big(\frac{\big(d\log(1/\delta) \big)^{\frac{1-2\alpha}{4(1-\alpha)}} (\log(1/\gamma))^{\frac{1-2\alpha}{8(1-\alpha)}} n^{\frac{1}{2(1-\alpha)}} \epsilon^{\frac{2\alpha-1}{2(1-\alpha)}}}{ T^{\frac{1+\alpha}{2(1-\alpha)}} (\log(n/\delta))^{\frac{ 1 }{2(1-\alpha)}}  }\Big)^{1+\alpha} \\ 
     &  \quad +  \sqrt{\log(n)\log(1/\gamma)\log(1/\delta)} \big( \frac{1}{\sqrt{n}} + \frac{n^{\frac{1}{2(1-\alpha)}}}{ T^{\frac{1+\alpha}{2(1-\alpha)}}} \big)  \bigg)  \cdot  \|\bw^{*}\|_2^2. 
\end{align*} 
Similarly, we choose the smallest $T$ such that $ \frac{   n^{\frac{1}{2(1-\alpha)}} }{ T^{\frac{1+\alpha}{ 2(1-\alpha)}} } = \O(\frac{1}{\sqrt{n}} )$. Hence, we set  $T\asymp  n^{\frac{2-\alpha}{1+\alpha}} $ if $\alpha< \frac{1}{2}$, and $T\asymp n$ else.  Since $\frac{1}{4} \ge \frac{1-2\alpha}{2(1-\alpha)}$, we have
\begin{align*}
    \cR(\bw_{\priv})-\cR(\bw^{*})=     \O\bigg(&  \frac{\big(d\log(1/\delta)\big)^{\frac{1}{4}} (\log(1/\gamma))^{\frac{1 }{8}} \sqrt{\log(n/\delta) } }{ \sqrt{n\epsilon} }  +  \bigl( \frac{\big(d\log(1/\delta)\big)^{\frac{1}{4}} (\log(1/\gamma))^{\frac{1 }{8}} \sqrt{\log(n/\delta) } }{ \sqrt{n\epsilon} }\bigr)^{1+\alpha} \\ 
     &   + \frac{\sqrt{\log(n)\log(1/\gamma)\log(n/\delta)} }{\sqrt{n}}   \bigg)  \cdot \|\bw^{*}\|_2^2. 
\end{align*}
It is reasonable to assume the first term is less than $1$ here. Therefore, with probability at least $1-\gamma$, there holds 
\begin{align*}
    \cR(\bw_{\priv})-\cR(\bw^{*})=    \|\bw^{*}\|_2^2\cdot    \O\Big(  \frac{\big(d\log(1/\delta)\big)^{\frac{1}{4}} (\log(1/\gamma))^{\frac{1 }{8}} \sqrt{\log(n/\delta) } }{ \sqrt{n\epsilon}  }   + \frac{\sqrt{\log(n)\log(1/\gamma)\log(n/\delta)} }{\sqrt{n}}   \Big).
\end{align*}
The proof is completed.  
\end{proof}

\subsection{Proofs on Differential Privacy of SGD with Gradient Perturbation}\label{subsec:gradient}

We now turn to the analysis for \texttt{DP-SGD-Gradient} algorithm (i.e. Algorithm \ref{alg2}) and provide the proofs for Theorems \ref{thm:dpSGD-gradient-Holder} and \ref{thm:gradient-utility-proj}. We start with the proof of  Theorem \ref{thm:dpSGD-gradient-Holder} on the privacy guarantee for Algorithm \ref{alg2}.

\begin{proof}[Proof of Theorem~\ref{thm:dpSGD-gradient-Holder}]

Consider the mechanism $\mathcal{G}_t=\mathcal{M}_t+\bb_t$, where $\mathcal{M}_t=\partial \ell(\bw_t,z_{i_t})$. For any $\bw_{t } \in \W$ and any $z_{i_{t }}, z'_{i_{t }} \in \Z$, the definition of $\alpha$-H\"older smoothness implies that
$$ \|\partial\ell(\bw_{t },z_{i_{t }})-\partial\ell(\bw_{t },z'_{i_{t }})\|_2\le 2\big(M+L\|\bw_t\|^\alpha_2\big)\le 2(M+L R^\alpha).$$
Therefore, the $\ell_2$-sensitivity of $\mathcal{M}_t$ is $2(M+L R^\alpha)$. 
Let $$\sigma^2=\frac{14 (M+LR^\alpha)^2 T }{\beta n^2 \epsilon} \Big( \frac{\log(1/\delta)}{(1-\beta)\epsilon} +1\Big).$$
Lemma~\ref{lem:uniform} with $p=\frac{1}{n}$ implies that $\mathcal{G}_t$ satisfies $\Big(\lambda, \frac{\lambda \beta \epsilon}{T \big(\frac{\log(1/\delta)}{(1-\beta)\epsilon}+1\big)}\Big)$-RDP if the following conditions hold
\begin{align}\label{RDPcondition1}
    \frac{\sigma^2}{4(M+LR^\alpha)^2}\ge 0.67
\end{align}
and 
\begin{align}\label{RDPcondition2}
\lambda -1 \le \frac{\sigma^2}{6(M+LR^\alpha)^2} \log\Big(\frac{n}{\lambda(1+ \frac{\sigma^2}{4(M+LR^\alpha)^2} )}\Big).
\end{align}
Let  $\lambda=\frac{\log(1/\delta)}{(1-\beta)\epsilon}+1$. We obtain that $\mathcal{G}_t$ satisfies $(\frac{\log(1/\delta)}{(1-\beta)\epsilon}+1, \frac{\beta \epsilon}{T})$-RDP. Then by the post-processing property of DP (see Lemma \ref{lemma:post-processing}), we know $\bw_{t+1}$ also satisfies $(\frac{\log(1/\delta)}{(1-\beta)\epsilon}+1, \frac{\beta \epsilon}{T})$-RDP for any $t=0,...,T-1$. Furthermore, according to the adaptive composition theorem of RDP (see Lemma~\ref{lem:composition_RDP}), Algorithm~\ref{alg2} satisfies $(\frac{\log(1/\delta)}{(1-\beta)\epsilon}+1, \beta \epsilon)$-RDP. Finally, by Lemma~\ref{lemma:RDP_to_DP}, the output of Algorithm~~\ref{alg2} satisfies $(\epsilon,\delta)$-DP as long as \eqref{RDPcondition1} and \eqref{RDPcondition2} hold.  
\end{proof}

Now, we turn to the generalization analysis of Algorithm~\ref{alg2}. 
First, we estimate the generalization error $\cR(\bw_{\priv})-\cR_S(\bw_{\priv})$ in \eqref{eq:Gradient-error-decom2}. 

\begin{lemma}\label{lem:gradient-utility-1}
 Suppose the loss function $\ell$ is nonnegative, convex  and $\alpha$-H\"older smooth with parameter $L$. Let $\bw_\priv$ be the output produced by Algorithm~\ref{alg2} based on  $S=\{z_1,\cdots,z_n \}$ with $\eta_t=\eta< \min \{1,1/L\}$. Then for any $\gamma \in (0, 1)$, with probability at least $1-\frac{\gamma}{3}$, there holds
 \begin{align*}
    \cR(\bw_{\priv})-\cR_S(\bw_{\priv}) 
    &=\O\Big(  \tilde{ \Delta}_\sgd(\gamma/6) \log(n)\log(1/\gamma) +   \sqrt{\frac{\log(1/\gamma)}{n}} \Big).
\end{align*}
\end{lemma}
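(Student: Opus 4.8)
The plan is to apply Lemma~\ref{lem:generror-high-probability} with the randomized algorithm $\A$ being \texttt{DP-SGD-Gradient} (Algorithm~\ref{alg2}), which outputs $\bw_{\priv}=\frac{1}{T}\sum_{t=1}^T\bw_t$. Since $\W\subseteq\mathcal{B}(0,R)$, the output of $\A$ is bounded by $G=R$, so the lemma will yield a generalization bound in terms of the UAS parameter $\Delta_\A$ of Algorithm~\ref{alg2}. Thus the main task is to establish a high-probability UAS bound for \emph{gradient-perturbed} SGD on neighboring datasets, and then feed that $\Delta_\A$ together with $G=R$ into Lemma~\ref{lem:generror-high-probability}, converting the constants $(M+LG^\alpha)$ and $(M_0+(M+LG^\alpha)G)$ into $\O(1)$ factors (by the bounds $\|\partial\ell(\bw,z)\|_2\le M+LR^\alpha$ and $\ell(\bw,z)\le\ell(0,z)+MR+LR^{1+\alpha}$ valid for $\bw\in\W$).

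The key step is the UAS estimate for Algorithm~\ref{alg2}. Let $\{\bw_t\}$ and $\{\bw_t'\}$ be the iterates run on neighboring datasets $S\simeq S'$ differing in the $i$-th datum, \emph{using the same realization of the noise vectors $\bb_t$ and the same sampling indices $i_t$} (a coupling argument, as is standard). Then the noise terms cancel in the difference: if $i_t\ne i$, the update map $\bw\mapsto\proj_\W(\bw-\eta(\partial\ell(\bw,z_{i_t})+\bb_t))$ applied to both trajectories has the same additive perturbation $-\eta\bb_t$, and since projection is nonexpansive, $\|\bw_{t+1}-\bw_{t+1}'\|_2^2\le\|\bw_t-\eta\partial\ell(\bw_t,z_{i_t})-\bw_t'+\eta\partial\ell(\bw_t',z_{i_t})\|_2^2$, which is controlled by Lemma~\ref{lem:holder-expansive} exactly as in the proof of Theorem~\ref{thm:difference-w}(b); if $i_t=i$, the elementary inequality $(a+b)^2\le(1+p)a^2+(1+1/p)b^2$ together with $\|\partial\ell(\bw_t,z)\|_2\le M+LR^\alpha$ gives the extra $4(1+1/p)(M+LR^\alpha)^2\eta^2$ term. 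This is identical, term by term, to the computation in Theorem~\ref{thm:difference-w}(b), so the Chernoff bound on $\sum_t\mathbb{I}_{[i_t=i]}$, the choice $p=(T(1+c_{\gamma,T})/n)^{-1}$, the union bound over $i$, and convexity of the norm all go through verbatim, yielding
$$\mathbb{P}_\A\Big(\sup_{S\simeq S'}\delta_\A(S,S')\ge\tilde\Delta_\sgd(\gamma/6)\Big)\le\gamma/6,$$
with the same $\tilde\Delta_\sgd$ as in Theorem~\ref{thm:difference-w}(b). In other words, gradient perturbation with common noise does not change the UAS bound.

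Having this, I apply Lemma~\ref{lem:generror-high-probability} with $\gamma_0=\gamma/6$ and its free parameter set to $\gamma/6$, so that with probability at least $1-\gamma/6-\gamma/6=1-\gamma/3$ over the sample and the algorithm,
$$|\cR(\bw_{\priv})-\cR_S(\bw_{\priv})|\le c\Big((M+LR^\alpha)\tilde\Delta_\sgd(\gamma/6)\log(n)\log(6/\gamma)+\big(M_0+(M+LR^\alpha)R\big)\sqrt{6n^{-1}\log(6/\gamma)}\Big),$$
which is $\O\big(\tilde\Delta_\sgd(\gamma/6)\log(n)\log(1/\gamma)+\sqrt{n^{-1}\log(1/\gamma)}\big)$, the claimed bound. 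The main obstacle is ensuring the coupling is legitimate — i.e., that conditioning on the shared internal randomness (sampling indices and Gaussian noise) is the right way to define $\delta_\A(S,S')$ as a random variable over $\A$'s randomness so that Lemma~\ref{lem:generror-high-probability} applies — and verifying that the projection step genuinely preserves the per-step recursion when the additive noise is common to both runs; once that is settled, every subsequent estimate is a rerun of the Theorem~\ref{thm:difference-w}(b) argument.
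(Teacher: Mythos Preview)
Your proposal is correct and takes essentially the same approach as the paper: the paper's proof also argues that, because the Gaussian noise $\bb_t$ is shared across the two neighboring runs, it cancels in the difference and the UAS bound of noisy SGD coincides with that of plain SGD from Theorem~\ref{thm:difference-w}(b), and then applies Lemma~\ref{lem:generror-high-probability} with $G=R$, $\gamma_0=\gamma/6$, and confidence parameter $\gamma/6$. You are simply more explicit about the coupling and the projection step than the paper, which dispatches the noise-cancellation point in a single sentence.
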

\begin{proof}
Part (b) in Theorem~\ref{thm:difference-w} implies that $\tilde{ \Delta}_\sgd(\gamma/6)=\O\Big( \sqrt{T}\eta^{\frac{1}{1-\alpha}}+\frac{  T \eta  \log( n /\gamma)}{n} \Big)$ with probability at least $1-\frac{\gamma}{6}$.
Since the noise  added to the gradient in each iteration is the same for the neighboring datasets $S$ and $S'$, the noise addition does not impact the stability analysis. Therefore, the UAS bound of the noisy SGD is equivalent to the SGD.  
According to Lemma~\ref{lem:generror-high-probability} and $\|\bw_\priv\|_2\le R$, we derive the following inequality with probability at least $1-(\frac{\gamma}{6}+\frac{\gamma}{6})$
\begin{align*}
    \cR(\bw_{\priv})-\cR_S(\bw_{\priv}) 
    &\le c\Big((M+LR^\alpha) \tilde{ \Delta}_\sgd(\gamma/6) \log(n)\log(6/{\gamma}) + \big(M_0 + (M+ LR^\alpha\big)R \sqrt{\frac{\log(6/{\gamma})}{n}} \Big)\nonumber\\
    &=\O\Big(  \tilde{ \Delta}_\sgd(\gamma/6) \log(n)\log(1/\gamma) +  \sqrt{\frac{\log(1/\gamma)}{n}} \Big),
\end{align*}
where $c>0$ is a constant. The proof is completed.
\end{proof}

The following lemma gives an upper bound for the second term $\cR_S(\bw_{\priv})-\cR_S(\bw^{*})$ in \eqref{eq:Gradient-error-decom2}. 
\begin{lemma}\label{lem:gradient-utility-2}
 Suppose the loss function $\ell$ is nonnegative, convex  and $\alpha$-H\"older smooth with parameter $L$. Let $\bw_\priv$ be the output produced by Algorithm~\ref{alg2} based on  $S=\{z_1,\cdots,z_n \}$ with $\eta_t=\eta< \min \{1,1/L\}$. Then, for any $\gamma \in (18\exp(-dT/8), 1)$, with probability at least $1-\frac{\gamma}{3}$, there holds
 \begin{align*}
    \cR_S(\bw_\priv)-\cR_S(\bw^{*})=\O\Big( &  \|\bw^{*}\|_2^{1+\alpha} \sqrt{\frac{\log(1/\gamma)}{T}}  +  \frac{\|\bw^{*}\|_2^2}{T\eta} +  \eta  +  \frac{ \sqrt{\log(1/\delta)\log(1/\gamma)}( \|\bw^{*}\|_2+ \eta)}{n \epsilon}\\&+ \frac{ \eta T d \log(\frac{1}{
\delta})\sqrt{\log(\frac{1}{
\gamma})} }{n^2\epsilon^2 }  \Big).
\end{align*}
\end{lemma}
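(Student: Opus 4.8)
**Proof plan for Lemma (bounding $\cR_S(\bw_\priv)-\cR_S(\bw^*)$ for DP-SGD-Gradient).**

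The plan is to run the standard online-to-batch / convexity argument for the noisy SGD iterates, keeping track of the extra Gaussian perturbation $\bb_t$ at each step. Writing $\Bar\bw=\frac1T\sum_{t=1}^T\bw_t$, convexity of $\ell$ gives $\cR_S(\bw_\priv)-\cR_S(\bw^*)\le \frac1T\sum_{t=1}^T\big(\cR_S(\bw_t)-\cR_S(\bw^*)\big)$, and I would split this (exactly as in the proof of Lemma~\ref{lem:output-utility-3}, Eq.~\eqref{eq:opt-decomposition}) into $\frac1T\sum_t[\cR_S(\bw_t)-\ell(\bw_t,z_{i_t})]$, $\frac1T\sum_t[\ell(\bw^*,z_{i_t})-\cR_S(\bw^*)]$, and $\frac1T\sum_t[\ell(\bw_t,z_{i_t})-\ell(\bw^*,z_{i_t})]$. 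The first two terms are controlled by the Azuma–Hoeffding inequality (Lemma~\ref{lem:azuma}) and the Hoeffding inequality (Lemma~\ref{lem:hoeffding}) respectively, using the bound $\ell(\bw,z)\le \sup_z\ell(0,z)+(M+LR^\alpha)R$ valid on $\W\subseteq\mathcal B(0,R)$; each contributes $\O\big(\sqrt{\log(1/\gamma)/T}\big)$, which I would absorb into the $\|\bw^*\|_2^{1+\alpha}\sqrt{\log(1/\gamma)/T}$ term.

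The crux is the third (optimization) term. Using $\bw_{t+1}=\proj_\W(\bw_t-\eta(\partial\ell(\bw_t,z_{i_t})+\bb_t))$, non-expansiveness of the projection, and expanding $\|\bw_{t+1}-\bw^*\|_2^2\le\|(\bw_t-\bw^*)-\eta(\partial\ell(\bw_t,z_{i_t})+\bb_t)\|_2^2$, I get
\[
\langle \partial\ell(\bw_t,z_{i_t}),\bw_t-\bw^*\rangle \le \frac{1}{2\eta}\big(\|\bw_t-\bw^*\|_2^2-\|\bw_{t+1}-\bw^*\|_2^2\big)+\frac{\eta}{2}\|\partial\ell(\bw_t,z_{i_t})+\bb_t\|_2^2 - \langle \bb_t,\bw_t-\bw^*\rangle.
\]
Convexity turns the left side into $\ell(\bw_t,z_{i_t})-\ell(\bw^*,z_{i_t})$, and telescoping leaves three pieces to bound: (i) $\frac{1}{2T\eta}\|\bw_1-\bw^*\|_2^2=\frac{\|\bw^*\|_2^2}{2T\eta}$; (ii) $\frac{\eta}{2T}\sum_t\|\partial\ell(\bw_t,z_{i_t})+\bb_t\|_2^2\le \frac{\eta}{T}\sum_t\|\partial\ell(\bw_t,z_{i_t})\|_2^2+\frac{\eta}{T}\sum_t\|\bb_t\|_2^2$, where $\|\partial\ell(\bw_t,z_{i_t})\|_2\le M+LR^\alpha$ gives an $\O(\eta)$ contribution and the noise term is controlled by the Chernoff bound for $\|\bb_t\|_2^2$ (Lemma~\ref{lem:chernoff-gaussian}) applied to each of the $T$ independent Gaussians — a union bound over $t$ costs the restriction $\gamma\ge 18\exp(-dT/8)$ and yields $\O\big(\eta\sigma^2 d\big)$; (iii) the cross term $-\frac1T\sum_t\langle\bb_t,\bw_t-\bw^*\rangle$, which is a martingale difference sequence (each $\bb_t$ has mean $0$ and is independent of $\bw_t$), so Azuma–Hoeffding (Lemma~\ref{lem:azuma}) — after a high-probability bound on $\|\bw_t-\bw^*\|_2\le R+\|\bw^*\|_2$ — gives $\O\big(\frac{\sigma(R+\|\bw^*\|_2)\sqrt{\log(1/\gamma)}}{\sqrt{T}}\big)$.

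Finally I would substitute $\sigma^2=\O\!\big(\frac{(M+LR^\alpha)^2 T}{n^2\epsilon}\big(\frac{\log(1/\delta)}{\epsilon}+1\big)\big)=\O\!\big(\frac{T\log(1/\delta)}{n^2\epsilon^2}+\frac{T}{n^2\epsilon}\big)$ from Algorithm~\ref{alg2}: the term $\eta\sigma^2 d$ becomes $\O\big(\frac{\eta T d\log(1/\delta)}{n^2\epsilon^2}\big)$ (matching the stated bound, the lower-order $1/\epsilon$ piece being absorbed), and $\frac{\sigma\sqrt{\log(1/\gamma)}(\|\bw^*\|_2+\eta)}{\sqrt{T}}$ becomes $\O\big(\frac{\sqrt{\log(1/\delta)\log(1/\gamma)}(\|\bw^*\|_2+\eta)}{n\epsilon}\big)$ (again dropping the subdominant $\sqrt{1/\epsilon}$ term). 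Collecting all pieces and taking a union bound over the $\O(1)$ many events gives the claim with probability at least $1-\gamma/3$. I expect the main obstacle to be item (iii): one must argue the iterates $\bw_t$ stay bounded (trivial here since $\W\subseteq\mathcal B(0,R)$, so this is actually easy) and, more importantly, carefully justify the martingale structure of $\langle\bb_t,\bw_t-\bw^*\rangle$ and get the per-coordinate sub-Gaussian constant right so that the Azuma–Hoeffding increments $b_t=\O(\sigma R)$ are correct; a naive bound would lose a factor of $\sqrt d$ and break the stated rate.
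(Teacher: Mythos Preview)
Your decomposition and overall plan coincide with the paper's: same three-term split as in \eqref{eq:opt-decomposition}, same one-step expansion of $\|\bw_{t+1}-\bw^*\|_2^2$ after the projection, same telescoping. Two of the technical steps, however, will not work as you describe them.

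\textbf{(ii) The noise-energy term $\frac{\eta}{T}\sum_t\|\bb_t\|_2^2$.} The restriction $\gamma>18\exp(-dT/8)$ in the statement does \emph{not} arise from a union bound over $T$ separate applications of Lemma~\ref{lem:chernoff-gaussian}; a union bound would give $\gamma>cT\exp(-d/8)$, which is far larger and does not match. The paper instead stacks all the noises into a single vector $(\bb_{11},\dots,\bb_{Td})\in\R^{Td}$ and applies Lemma~\ref{lem:chernoff-gaussian} once with $k=Td$, which is exactly what produces the $\exp(-dT/8)$ threshold and the bound $\frac{\eta}{2T}\sum_t\|\bb_t\|_2^2=\O\!\big(\eta d\sigma^2\sqrt{\log(1/\gamma)}\big)$.

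\textbf{(iii) The cross term $-\frac{1}{T}\sum_t\langle\bb_t,\bw_t-\bw^*\rangle$.} Lemma~\ref{lem:azuma} (Azuma--Hoeffding) requires almost-surely bounded increments $|\xi_t-\EX[\xi_t]|\le b_t$, but conditionally on the past $\langle\bb_t,\bw_t-\bw^*\rangle$ is Gaussian, hence unbounded, so Azuma--Hoeffding as stated cannot be invoked. The paper handles this with the sub-Gaussian tail bound (Lemma~\ref{lem:tailbound-subG}): since $\|\bw_t-\bw^*-\eta\partial\ell(\bw_t,z_{i_t})\|_2\le \|\bw^*\|_2+R+\eta(M+LR^\alpha)$ deterministically, each $\frac{1}{T}\langle\bb_t,\bw_t-\bw^*-\eta\partial\ell(\bw_t,z_{i_t})\rangle$ is conditionally sub-Gaussian with a \emph{deterministic} parameter $\frac{\sigma^2}{T^2}(\|\bw^*\|_2+R+\eta(M+LR^\alpha))^2$, so the sum is sub-Gaussian with the summed parameter and Lemma~\ref{lem:tailbound-subG} gives the $\O\!\big(\frac{\sqrt{\log(1/\delta)\log(1/\gamma)}(\|\bw^*\|_2+\eta)}{n\epsilon}\big)$ bound you want. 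Your flagged concern about a $\sqrt{d}$ loss is a red herring; the actual obstacle is the unbounded increments, and the fix is to switch from Lemma~\ref{lem:azuma} to Lemma~\ref{lem:tailbound-subG}.
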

\begin{proof}
To estimate the term $\cR_S(\bw_{\priv})-\cR_S(\bw^{*})$,  we decompose it as
\begin{align}\label{eq:utility-gradient-3}
    \cR_S(\bw_\priv)-\cR_S(\bw^{*})&\le  \frac{1}{T}\sum_{t=1}^T[\cR_S(\bw_t) - \ell(\bw_t,z_{i_t})]+ \frac{1}{T}\sum_{t=1}^T[\ell(\bw^{*},z_{i_t})-\cR_S(\bw^{*})]+\frac{1}{T}\sum_{t=1}^T[ \ell(\bw_t,z_{i_t})-  \ell(\bw^{*},z_{i_t})].   
\end{align}
Similar to the analysis in \eqref{eq:ell-wt-bound} and \eqref{eq:ell-w*-bound},  we have $\ell(\bw^{*},z)=\O(\|\bw^{*}\|_2^{1+\alpha})$ for all $z\in \Z$ and  $\ell(\bw_t,z)=\O(R + R^{1+\alpha})$ for all $t=1,\ldots,T$ and $z\in \Z$.
Therefore, Azuma-Hoeffding inequality (see Lemma \ref{lem:azuma}) yields, with probability at least $1-\frac{\gamma}{9}$, that
\begin{align}\label{eq:utility-gradient-3-1}
    &\frac{1}{T}\sum_{t=1}^T[\cR_S(\bw_t) - \ell(\bw_t,z_t)] \le \big( \sup_{z\in{ \Z} }\ell(0,z) + \sup_{t=1,\ldots,T;  z\in \Z} \ell(\bw_{t},z) \big) \sqrt{\frac{\log({9}/{\gamma})}{2T}} = \O\Big((R+ R^{ 1+ \alpha}) \sqrt{\frac{\log(1/\gamma)}{T}}
    \Big).
\end{align}
In addition, Hoeffding  inequality (see Lemma \ref{lem:hoeffding}) implies, with probability at least $1-\frac{\gamma}{9}$, that 
\begin{align}\label{eq:utility-gradient-3-2}
    \frac{1}{T}\sum_{t=1}^T[\ell(\bw^{*},z_{i_t})-\cR_S(\bw^{*})]\le (\sup_{z\in{ \Z}} \ell(0,z)+\sup_{z\in{ \Z}} \ell(\bw^{*},z))\sqrt{\frac{\log({9}/{\gamma})}{2T}}= \O\Big(  \|\bw^{*}\|_2^{1+\alpha}  \sqrt{\frac{\log(1/\gamma)}{T}} \Big). 
\end{align} 
Finally, we try to bound $\frac{1}{T}\sum_{t=1}^T [\ell(\bw_t,z_{i_t})-  \ell(\bw^{*},z_{i_t})]$. The SGD update rule implies that $\|\bw_{t+1}-\bw^{*}\|_2^2=\|\proj_\W \big(\bw_{t }-\eta(\partial \ell(\bw_t,z_{i_t}) +\bb_t )\big) -\bw^{*}\|_2^2 \le \|(\bw_{t }-\bw^{*})-\eta(\partial \ell(\bw_t,z_{i_t}) +\bb_t )\|_2^2$, then we have $\langle \bw_t -\bw^{*}, \partial\ell(\bw_t,z_{i_t}) \rangle\le  \frac{1}{2\eta} \big( \|\bw_{t }-\bw^{*}\|_2^2-\|\bw_{t+1}-\bw^{*}\|_2^2  \big)+\frac{\eta}{2}\big(\|\partial\ell(\bw_t,z_{i_t})\|_2^2 +\|\bb_t\|_2^2   \big) - \langle \bb_t, \bw_t-\bw^{*}-\eta \partial \ell(\bw_t,z_{i_t}) \rangle$. Further,  noting $\|\bw_1\|_2=0$, then by the convexity of $\ell$ we have
\begin{align*}
    &\frac{1}{T}\sum_{t=1}^T [\ell(\bw_t,z_{i_t})-  \ell(\bw^{*},z_{i_t})]\le\frac{\|\bw^{*}\|_2^2}{2T\eta}+\frac{\eta}{2T}\sum_{t=1}^T\|\partial \ell(\bw_t,z_{i_t})  \|_2^2- \frac{1}{T}\sum_{t=1}^T \langle \bb_t, \bw_t-\bw^{*}-\eta \partial \ell(\bw_t,z_{i_t})  \rangle +\frac{\eta}{2T}\sum_{t=1}^T\|\bb_t\|_2^2.
\end{align*}
The definition of $\alpha$-H{\"o}lder smoothness implies that $\|\partial \ell(\bw_t,z_{i_t}) \|_2\le M+ L\|\bw_t\|^{\alpha}_2\le M+LR^\alpha$ for any $t$. Then, there hold
\begin{align*}
    \frac{\eta}{2T}\sum_{t=1}^T\|\partial \ell(\bw_t,z_{i_t})  \|_2^2 &\le \frac{\eta}{2T}\sum_{t=1}^T(M+L\|\bw_t \|^{\alpha}_2)^2 =\O(\eta ),
\end{align*}
and
$$\|\bw_t-\bw^{*}-\eta \partial \ell(\bw_t,z_{i_t})\|_2\le \|\bw^{*}\|_2+R+ \eta(M+L R^\alpha).$$
Since $\bb_t$ is an $\sigma^2$-sub-Gaussian random vector, $\frac{1}{T}\langle \bb_t, \bw_t-\bw^{*}-\eta \partial \ell(\bw_t,z_{i_t}) \rangle$ is an $\frac{\sigma^2}{T^2} \big(\|\bw^{*}\|_2+R+ \eta(M+L R^\alpha)\big)^2$-sub-Gaussian random vector. Note that the sub-Gaussian parameter $\frac{\sigma^2}{T^2} \big(\|\bw^{*}\|_2+R+ \eta(M+L R^\alpha)\big)^2$ is   independent of $\bw_{t-1}$ and $\bb_{t-1}$.   Hence, $\frac{1}{T}\sum_{t=1}^T \langle \bb_t, \bw_t-\bw^{*}-\eta \partial \ell(\bw_t,z_{i_t})  \rangle$ is an $\frac{  \sigma^2\sum_{t=1}^T (\|\bw^{*}\|_2+R+ \eta(M+L R^\alpha) )^2 }{T^2}$-sub-Gaussian random vector. Since $\sigma^2=\O(\frac{ T \log(1/\delta)}{n^2 \epsilon^2})$, the tail bound of Sub-Gaussian variables (see Lemma~\ref{lem:tailbound-subG}) implies, with probability at least $1-\frac{\gamma}{18}$, that
\begin{align*}
     \frac{1}{T}\sum_{t=1}^T \langle \bb_t, \bw_t-\bw^{*}-\eta \partial \ell(\bw_t,z_{i_t})  \rangle  
     &\le \frac{\Big( \sigma^2 \big( \|\bw^{*}\|_2 + R+ \eta(M+L R^\alpha) \big)^2\Big)^{\frac{1}{2}} }{\sqrt{T}} \sqrt{2 \log({18}/{\gamma})}\nonumber\\
     &=\O\Big( \sigma (\|\bw^{*}\|_2 + \eta  )\sqrt{\frac{\log(1/\gamma)}{T}} \Big)=\O\Big( \frac{ \sqrt{\log(1/\delta)\log(1/\gamma)}( \|\bw^{*}\|_2+ \eta)}{n \epsilon} \Big).
\end{align*}
According to the Chernoff bound for the $\ell_2$-norm of Gaussian vector with $\mathbf{X}=[\bb_{11},...,\bb_{1d},\bb_{21}...,\bb_{Td}]\in \R^{Td}$(see Lemma~\ref{lem:chernoff-gaussian}), for any $\gamma \in (18\exp(-dT/8), 1)$, with probability at least $1-\frac{\gamma}{18}$, there holds
$$ \frac{\eta}{2T}\sum_{t=1}^T\|\bb_t\|_2^2\le \frac{\eta d}{2 T} \Big(1+(\frac{1}{d}\log({18}/{\gamma}))^{\frac{1}{2}}\Big) T \sigma^2=\O\Big( \frac{  \eta T d \log(\frac{1}{
\delta})\sqrt{\log(\frac{1}{
\gamma})} }{n^2\epsilon^2 } \Big).$$
Therefore, with probability at least $1-\frac{\gamma}{9}$, there holds
\begin{align}\label{eq:utility-gradient-3-3}
    \frac{1}{T}\sum_{t=1}^T[ \ell(\bw_t,z_{i_t})-  \ell(\bw^{*},z_{i_t})] \le  \O\Big(  \frac{\|\bw^{*}\|_2^2}{T\eta} +  \eta  +  \frac{ \sqrt{\log(1/\delta)\log(1/\gamma)}( \|\bw^{*}\|_2+ \eta)}{n \epsilon}  + \frac{ \eta T d \log(1/\delta)\sqrt{\log(1/\gamma)} }{n^2\epsilon^2 }  \Big).
\end{align}
Putting \eqref{eq:utility-gradient-3-1},  \eqref{eq:utility-gradient-3-2} and \eqref{eq:utility-gradient-3-3} back into \eqref{eq:utility-gradient-3}, we obtain, with probability at least $1-\frac{\gamma}{3}$, that
\begin{align*}
   & \cR_S(\bw_\priv)-\cR_S(\bw^{*})\\&
    =\O\Big(     \|\bw^{*}\|_2^{1+\alpha}  \sqrt{\frac{\log(1/\gamma)}{T}}  +  \frac{\|\bw^{*}\|_2^2}{T\eta} +  \eta  +  \frac{ \sqrt{\log({1}/{\delta})\log(1/\gamma)}( \|\bw^{*}\|_2+ \eta)}{n \epsilon} + \frac{ \eta T d \log({1}/{
\delta})\sqrt{\log({1}/{
\gamma})} }{n^2\epsilon^2 }  \Big).
\end{align*}
The proof is completed.
\end{proof}

Now, we are ready to prove the utility theorem for \textit{DP-SGD-Gradient} algorithm.

\begin{proof}[Proof of Theorem~\ref{thm:gradient-utility-proj}]

The Hoeffding inequality implies, with probability at least $1-\frac{\gamma}{3}$, that
\begin{align*}
    \cR_S(\bw^{*})-\cR(\bw^{*}) \le \big(\sup_{z\in \Z} \ell(0,z)+\sup_{z\in {\Z}} \ell(\bw^{*},z) \big) \sqrt{\frac{\log({3}/{\gamma})}{2n}}= \O\Big( \|\bw^{*}\|_2^{1+\alpha}\sqrt{\frac{\log(1/\gamma)}{n}} \Big).
\end{align*}
Combining Lemma~\ref{lem:gradient-utility-1}, Lemma~\ref{lem:gradient-utility-2} and the above inequality together, with probability at least $1-\gamma$, we obtain
\begin{align*}
     \cR(\bw_{\priv})-\cR(\bw^{*})= \O\Big( & \tilde{ \Delta}_\sgd(\gamma/6) \log(n)\log(1/\gamma)  +  \frac{\|\bw^{*}\|_2^2}{T\eta} +  \eta  +  \frac{ \sqrt{\log(1/\delta)\log(1/\gamma)}( \|\bw^{*}\|_2+ \eta  )}{n \epsilon} \nonumber\\
& + \frac{ \eta T d \log(\frac{1}{
\delta})\sqrt{\log(\frac{1}{
\gamma})} }{n^2\epsilon^2 }  + \|\bw^{*}\|_2^{1+\alpha} \sqrt{\frac{\log(1/\gamma)}{n}} \Big). 
\end{align*}
Now, putting $\tilde{ \Delta}_\sgd(\gamma/6) =\O( \sqrt{T}\eta^{\frac{1}{1-\alpha}} + \frac{ T \eta  \log(n/\gamma) }{ n } )$ back into the above estimate, we have 
\begin{align}\label{eq:utility-gradient-5}
    &\cR(\bw_{\priv})-\cR(\bw^{*})\nonumber\\
   & =  \O\Big(   \sqrt{T} \log(n)\log(1/\gamma)\eta^{\frac{1}{1-\alpha}}  +  \frac{\|\bw^{*}\|_2^2}{T\eta}  +  \eta \Big( \frac{ T d \log(1/\delta) \sqrt{\log(1/\gamma)} }{ n^2\epsilon^2 } + \frac{   T \log(n)  \log(n/\gamma)\log(1/\gamma) }{ n }  \Big)\nonumber \\
    &\qquad  +  \|\bw^{*}\|_2^{1+\alpha} \sqrt{\frac{\log(1/\gamma)}{n}}  + \frac{\|\bw^{*}\|_2\sqrt{\log(1/\delta)\log(1/\gamma)}}{n\epsilon}\Big).
\end{align}
To choose a suitable $\eta$ and $T$ such that the algorithm achieves the optimal rate, we consider  the trade-off between $1/\eta$ and $\eta$. We take the derivative of $\frac{1}{T\eta}+\eta \big( \frac{ T d \log(1/\delta) \sqrt{\log( 1/\gamma )} }{ n^2\epsilon^2 } + \frac{ T \log(n) \log(n/\gamma)\log(1/\gamma )  }{ n }   \big)$ w.r.t $\eta$ and set it to $0$, then we have 
$\eta=1/ T \cdot \max\big\{ \frac{\sqrt{ \log(n)\log(n/\gamma) \log(1/\gamma) } }{\sqrt{n}}, \frac{  \sqrt{d \log(1/\delta)} (\log(1/\gamma))^{\frac{1}{4}}}{n\epsilon} \big\} $. 
Putting the value of $\eta$ back into \eqref{eq:utility-gradient-5}, we obtain
\begin{align*}
    &\cR(\bw_{\priv})-\cR(\bw^{*})\\
    &=  \O\bigg(  \frac{ \big(\log(n)\log(1/\gamma)\big)^{\frac{1-2\alpha}{2(1-\alpha)}}n^{\frac{1}{2(1-\alpha)}}}{ T^{\frac{1+\alpha}{2(1-\alpha)}} (\log(n/\gamma))^{\frac{1}{ 2(1-\alpha) }} } +  \frac{  \sqrt{d\log(1/\delta)\log(1/\gamma)}  }{n\epsilon} + \frac{ \sqrt{\log(n)\log(n/\gamma) \log(1/\gamma) }    }{ \sqrt{n} }  \bigg)   \cdot   \|\bw^{*}\|_2^{2 }. 
\end{align*}
In addition, if   $n = \O ( T^{\frac{1+\alpha}{2-\alpha}} )$, then there holds
\begin{align*}
    \cR(\bw_{\priv})-\cR(\bw^{*})= & \|\bw^{*}\|_2^{2 } \cdot\O\Big(   \frac{ \sqrt{d\log(1/\delta) \log(1/\gamma) }}{n\epsilon} + \frac{\sqrt{\log(n)\log(n/\gamma) \log(1/\gamma) }    }{ \sqrt{n} } \Big).
\end{align*}
The above bound matches the optimal rate $\O\big( \frac{\sqrt{d \log(1/\delta)}}{n\epsilon} + \frac{1}{\sqrt{n}} \big)$. 
Furthermore, we want the algorithm to achieve the optimal rate with a low computational cost. Therefore, we set $T \asymp  n^{
\frac{2-\alpha}{1+\alpha}} $ if $\alpha < \frac{1}{2}$, and $T \asymp n $ else.
The proof is completed.

\end{proof}

Finally, we give the proof of Lemma~\ref{lem:condition-beta} on the existence of $\beta$ for Algorithm \ref{alg2} to be $(\epsilon,\delta)$-DP. 

\begin{proof}[Proof of Lemma~\ref{lem:condition-beta}]
We give sufficient conditions for the existence of $\beta\in(0,1)$ such that RDP conditions  \eqref{RDPcondition1} and \eqref{RDPcondition2} hold  with $\sigma^2=\frac{14 ( M+LR^\alpha )^2   \lambda}{\beta n  \epsilon}$ and $\lambda=\frac{2\log(n)}{(1-\beta)\epsilon}+1$ in Theorem~\ref{thm:dpSGD-gradient-Holder}. 
Condition \eqref{RDPcondition1} with $T=n$ and $\delta=\frac{1}{n^2}$ is equivalent to
\begin{align}\label{eq:condition-rdp-1}
     f(\beta):=\beta^2 -\Big(1+\frac{7}{1.34 n \epsilon}\Big)\beta + \frac{7 (2\log(n)+\epsilon)}{1.34n \epsilon^2}\ge 0.
\end{align}
If  $\big( 1+\frac{7 }{1.34 n  \epsilon} \big)^2 < \frac{28 (2\log(n)+\epsilon)}{1.34n \epsilon^2}$, then $f(\beta)\ge 0$ for all $\beta$. Then  \eqref{RDPcondition1} holds for any $\beta \in (0,1)$. 
If   $\big( 1+\frac{7 }{1.34 n  \epsilon} \big)^2 \ge \frac{28 (2\log(n)+\epsilon)}{1.34n \epsilon^2}$,  then $\beta \in (0,\beta_1] \cup [\beta_2, + \infty)$ such that the above condition holds, where  $\beta_{1,2}=\frac{1}{2}\Big(\big( 1+\frac{7 }{1.34n \epsilon} \big)\mp\sqrt{\big( 1+\frac{7 }{1.34n  \epsilon} \big)^2-\frac{28 (2\log(n)+\epsilon)}{1.34n  \epsilon^2}}  \Big)$ are two roots of $f(\beta)=0$.

Now, we consider the second RDP condition. Plugging $\sigma^2=\frac{14 ( M+LR^\alpha )^2   \lambda}{\beta n  \epsilon}$ back into  \eqref{RDPcondition2}, we derive 
\begin{align}\label{eq:condition-lambd-1}
    \frac{3\beta n  \epsilon (\lambda-1)}{7 \lambda }+\log(\lambda)+\log(1+\frac{7 \lambda}{2\beta n  \epsilon})\le \log(n).
\end{align}
To guarantee \eqref{eq:condition-lambd-1}, it suffices that the following three inequalities hold
\begin{equation}
    \label{eq:condition-lambda-2}
    \frac{3\beta n  \epsilon (\lambda-1)}{7 \lambda }\le \frac{ \log(n)}{3},
\end{equation}
\begin{equation}\label{eq:condition-lambda-3}
   \log(\lambda)\le \frac{ \log(n)}{3} ,
\end{equation}
\begin{equation}\label{eq:condition-lambda-4}
     \log \big(1+\frac{7 \lambda}{2\beta n \epsilon} \big)\le  \frac{ \log(n)}{3}.
\end{equation}
We set $\lambda=\frac{2\log(n)}{(1-\beta)\epsilon}+1$ in the above three inequalities. Since $\lambda> 1$, then \eqref{eq:condition-lambda-2} holds if $ \beta \le 7  \log(n)/9n  \epsilon$. 
Eq. \eqref{eq:condition-lambda-3}  reduces to $\beta \le 1- \frac{2\log(n)}{(n^{1/3}-1)\epsilon} $. Moreover,  \eqref{eq:condition-lambda-4}  is equivalent to the following inequality
\begin{align}\label{eq:condition-rdp-2}
    g(\beta):=\beta^2-(1+\frac{7 }{2n (n^{\frac{1}{3}}-1)\epsilon})\beta +\frac{7 (2\log(n)+\epsilon)}{2n (n^{\frac{1}{3}}-1)\epsilon^2}\le 0.
\end{align}
There exists at least one $\beta$ such that 
$g(\beta)\le 0$  if $(1+\frac{7 }{2n (n^{1/3}-1)\epsilon})^2-\frac{14 (2\log(n)+\epsilon)}{ n (n^{1/3}-1)\epsilon^2}\ge 0$, which can be ensured by the condition $\epsilon \ge \frac{7}{2n(n^{1/3}-1)} + 2\sqrt{\frac{7\log(n)}{n(n^{1/3}-1)}}$.  Furthermore,
$g(\beta )\le 0$ for all
$\beta  \in [\beta_3,\beta_4]$, where $\beta_{3,4}=\frac{1}{2}\Big(\big( 1+\frac{7 }{2n (n^{1/3}-1)\epsilon} \big)\mp \sqrt{\big( 1+\frac{7 }{2n (n^{1/3}-1)\epsilon} \big)^2-\frac{14 (2\log(n)+\epsilon)}{n \big(n^{1/3}-1\big)\epsilon^2}}  \Big)$ are two roots of $g(\beta )= 0$. 
Finally, note that 
$$ \max \bigg\{ \frac{7}{ 2n(n^{\frac{1}{3}}-1) } + 2\sqrt{ \frac{7\log(n)}{n(n^{\frac{1}{3}}-1)} }, \frac{\log(n) \big( 14\log(n) (n^{\frac{1}{3}}-1) + 162n -63 \big) }{ 9n \big( 2\log(n) (n^{\frac{1}{3}}-1) -9 \big) }  \bigg\} \le \frac{ 7(n^{\frac{1}{3}}-1) + 4\log(n) n +7 }{ 2n(n^{\frac{1}{3}}-1) }. $$
Then if   $n\ge 18$  and 
$$ \epsilon \ge   \frac{ 7(n^{\frac{1}{3}}-1) + 4\log(n) n +7 }{ 2n(n^{\frac{1}{3}}-1) }, $$
  there hold
\begin{align}\label{eq:condition-beta-1}
     \beta_3 \le \min\bigg\{  \frac{7   \log(n)}{9n \epsilon} , 1-\frac{2\log(n)}{(n^{\frac{1}{3}}-1)\epsilon} \bigg\} 
\end{align}
and 
\begin{align}\label{eq:condition-beta-2}
    \beta_3\le \beta_1 \;\text{\ if\ } \big( 1+\frac{7 }{1.34 n  \epsilon^2} \big)^2 \ge \frac{28 (2\log(n)+\epsilon)}{1.34n \epsilon^2}.
\end{align}
Conditions \eqref{eq:condition-beta-1} and \eqref{eq:condition-beta-2} ensure the existence of at least one consistent $\beta\in (0,1)$ such that \eqref{eq:condition-rdp-1},  
\eqref{eq:condition-lambda-2}, \eqref{eq:condition-lambda-3}, \eqref{eq:condition-lambda-4} and \eqref{eq:condition-rdp-2} hold, which imply that \eqref{RDPcondition1} and \eqref{RDPcondition2} hold. The proof is completed.
\end{proof}

\section{Conclusion}\label{sec:conclusion}
In this paper, we are concerned with differentially private SGD algorithms with non-smooth losses in the setting of stochastic convex optimization.  In particular, we assume that the loss function is $\ga$-H\"{o}lder smooth (i.e., the gradient is $\ga$-H\"{o}lder continuous). We systematically studied the output and gradient perturbations for  SGD and  established their privacy as well as utility guarantees.  For the output perturbation, we proved that our private SGD with $\ga$-H\"{o}lder smooth losses in a bounded $\W$ can achieve $(\gep , \gd)$-DP with the excess risk rate $   \O\Big(\frac{(d\log(1/\delta))^{1/4}\sqrt{\log(n/\delta)}}{ \sqrt{n\epsilon}}\Big)$, up to some logarithmic terms, and gradient complexity $T = \O(n^{2-\ga\over 1+\ga}+n)$, which extends the results of \cite{wu2017bolt} in the strongly-smooth case. We also established similar results for SGD algorithms with output perturbation in an unbounded domain $\W= \R^d$ with excess risk $\O\Big(\frac{\sqrt{d\log(1/\delta)}\log(n/\delta)}{ n^{\frac{2}{3+\alpha}}\gep}+ \frac{\log(n/\delta)}{n^{\frac{1}{3+\alpha}}}\Big)$, up to some logarithmic terms, which are the first-ever known results of this kind for unbounded domains. For the gradient perturbation, we show that private SGD with $\ga$-H\"{o}lder smooth losses in a bounded domain $\W$ can achieve optimal excess risk $\O\Big(\frac{\sqrt{d\log(1/\delta)}}{n\gep}+\frac{1}{\sqrt{n}}\Big)$ with gradient complexity $T = \O(n^{2-\ga\over 1+\ga}+n).$  Whether one can derive privacy and utility guarantees for gradient perturbation in an unbounded domain still remains a challenging open question to us.

\medskip 
\noindent{\bf Acknowledgement.}  This work was done while Puyu Wang was a visiting student at SUNY Albany.  The corresponding author is Yiming Ying, whose work is supported by NSF grants IIS-1816227 and IIS-2008532. The work of Hai Zhang is supported by NSFC grant U1811461.

\bibliographystyle{plain}
\bibliography{main.bib}

\bigskip

\noindent{\bf Appendix: Proof of Lemma~\ref{lem:generror-high-probability}}
 
\medskip 

In the appendix, we present the proof of  Lemma~\ref{lem:generror-high-probability}. To this aim, we introduce the following lemma.
\begin{lemma}\label{lem:highpro-gener-1}
	Suppose $\ell$ is nonnegative, convex and $\alpha$-H\"older smooth.  Let $\A$ be a randomized algorithm with $ \sup_{S\simeq S'}\delta_\A(S,S')\le \Delta_\A$. Suppose the output of $\A$ is bounded by $G>0$ and let $M_0=\sup_{z\in \Z} \ell(0,z)$, $M=\sup_{z\in\Z} \|\partial \ell(0, z)\|_2 $. Then for any $\gamma \in (0,1)$, there holds 
		$$  \mathbb{P}_{\S \sim \D^n, \A}\left[|\cR(\mathcal{A(S)})-\cR_S(\mathcal{A(S)})|\ge c\bigg((M+ LG^\alpha  ) \Delta_\A \log(n)\log(1/{\gamma})+ \big( M_0+ (M+LG^\alpha)G   \big)\sqrt{n^{-1}\log(1/\gamma)} \bigg)\right] \le  \gamma.  $$
 \end{lemma}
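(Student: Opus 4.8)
The statement to prove is Lemma~\ref{lem:highpro-gener-1}, a high-probability generalization bound for a uniformly-argument-stable algorithm on H\"older-smooth losses. The natural route is to invoke the sharp high-probability generalization result of Bousquet--Elisseeff--Klochkov--Zhivotovskiy (the ``Corollary 8'' cited in the excerpt), which says roughly: if a learning algorithm $\A$ has uniform argument stability $\Delta_\A$ \emph{and} the composed loss $z\mapsto \ell(\A(S),z)$ is $\rho$-Lipschitz in the argument (so that uniform argument stability $\Delta_\A$ upgrades to uniform \emph{loss} stability $\rho\Delta_\A$), and the loss values are bounded by some $B$, then with probability at least $1-\gamma$,
\[
|\cR(\A(S))-\cR_S(\A(S))|\le c\Bigl(\rho\,\Delta_\A\log(n)\log(1/\gamma)+B\sqrt{n^{-1}\log(1/\gamma)}\Bigr).
\]
So the whole task reduces to identifying the correct $\rho$ and $B$ in our setting from the H\"older-smoothness hypothesis and the boundedness $\|\A(S)\|_2\le G$.

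\textbf{Step 1: bound the gradient magnitude.} From $\alpha$-H\"older smoothness with parameter $L$ and the definition $M=\sup_{z}\|\partial\ell(0,z)\|_2$, for any $\bw$ with $\|\bw\|_2\le G$ and any $z\in\Z$ we get
\[
\|\partial\ell(\bw,z)\|_2\le \|\partial\ell(0,z)\|_2 + L\|\bw\|_2^\alpha \le M+LG^\alpha.
\]
Hence on the ball $\mathcal B(0,G)$ the map $\bw\mapsto\ell(\bw,z)$ is $(M+LG^\alpha)$-Lipschitz, which is exactly the constant $\rho=M+LG^\alpha$ appearing in the claimed bound. Consequently $|\ell(\A(S),z)-\ell(\A(S'),z)|\le (M+LG^\alpha)\|\A(S)-\A(S')\|_2\le (M+LG^\alpha)\Delta_\A$, i.e.\ uniform loss stability.

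\textbf{Step 2: bound the loss values.} By convexity, $\ell(\bw,z)\le \ell(0,z)+\langle\partial\ell(\bw,z),\bw\rangle\le M_0+\|\partial\ell(\bw,z)\|_2\|\bw\|_2\le M_0+(M+LG^\alpha)G$ for $\|\bw\|_2\le G$. So $B=M_0+(M+LG^\alpha)G$, matching the second term in the stated bound.

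\textbf{Step 3: apply the sharp generalization theorem and finish.} Plug $\rho=M+LG^\alpha$ and $B=M_0+(M+LG^\alpha)G$ into the high-probability stability-to-generalization bound; this yields exactly the displayed inequality with an absolute constant $c$. One mild technical point to be careful about: ``Corollary 8'' is usually stated for symmetric learning algorithms (e.g.\ those invariant to permutations of $S$) or requires the loss-stability to hold for all pairs of neighboring datasets with respect to all coordinates; since our hypothesis is the \emph{uniform} bound $\sup_{S\simeq S'}\delta_\A(S,S')\le\Delta_\A$ and the loss is bounded, this coordinate-wise uniformity is available and the cited corollary applies directly. I would state this alignment explicitly rather than gloss it.

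\textbf{Main obstacle.} There is no deep difficulty here — the proof is essentially bookkeeping that translates the H\"older-smoothness and $\|\A(S)\|_2\le G$ hypotheses into the Lipschitz constant and value-bound required by the black-box concentration result. The only place needing genuine care is making sure the invocation of the sharp generalization bound is legitimate: checking that ``uniform argument stability + bounded, Lipschitz-on-the-relevant-ball loss'' matches the hypotheses of Corollary~8 of \cite{bousquet2019sharper} (in particular that the Lipschitz/stability estimates hold uniformly over \emph{all} neighboring pairs, which they do by assumption), and that the constant $c$ can be taken independent of $n,\gamma,\D$. Once that is pinned down, Lemma~\ref{lem:generror-high-probability} itself follows by simply noting that the UAS-in-probability hypothesis $\mathbb P_\A(\sup_{S\simeq S'}\delta_\A(S,S')\ge\Delta_\A)\le\gamma_0$ lets us condition on the event $\{\sup_{S\simeq S'}\delta_\A(S,S')\le\Delta_\A\}$, apply Lemma~\ref{lem:highpro-gener-1} on that event, and pay an extra $\gamma_0$ in the failure probability via a union bound.
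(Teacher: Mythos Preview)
Your proposal is correct and follows essentially the same approach as the paper: derive the Lipschitz constant $M+LG^\alpha$ from H\"older smoothness on the ball of radius $G$, bound the loss values by $M_0+(M+LG^\alpha)G$ via convexity, and plug both into Corollary~8 of \cite{bousquet2019sharper}. Your remarks on the conditioning-plus-union-bound derivation of Lemma~\ref{lem:generror-high-probability} also match the paper's argument exactly.
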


\begin{proof}
	By the convexity of $\ell$ and the definition of $\alpha$-H\"older smoothness, we have for any $S$ and $S'$,
	\begin{align}\label{eq:bound-ell}
		\ell(\A(S),z)&\le \sup_{z\in \Z} \ell(0,z)+ 	\langle \partial \ell(\A(S), z) , \A(S)   \rangle \le M_0+ \|\partial \ell(\A(S), z)\|_2\|\A(S)\|_2\nonumber\\
		&\le M_0+ (M+ L\|\A(S)\|^\alpha_2 )\|\A(S)\|_2\le M_0+ (M+LG^\alpha)G  
		\end{align}
	and
	\begin{align*}
		\sup_{z\in\Z} | \ell(\A(S),z)-\ell(\A(S'),z) |&\le  \max\big\{\| \partial \ell(\A(S), z)  \|_2,  \|\partial \ell(\A(S'), z)\|_2  \big\} \|\A(S)-\A(S')\|_2\nonumber\\
		&\le (M+ LG^\alpha  ) \|\A(S)-\A(S')\|_2.  
	\end{align*} 
	Note  $ \sup_{S\simeq S'}\delta_\A(S,S')\le \Delta_\A$ and $\delta_{\A}(S,S')=\|\A(S)-\A(S')\|_2$. Then for any neighboring datasets $S\simeq S'$, we have
	\begin{align}\label{eq:bound-ell-difference}
		\sup_{z\in\Z} | \ell(\A(S),z)-\ell(\A(S'),z) | 
		&\le (M+ LG^\alpha ) \Delta_\A  .
	\end{align}  
 Combining Eq. \eqref{eq:bound-ell}, Eq. \eqref{eq:bound-ell-difference} and Corollary~8 in \cite{bousquet2019sharper} together, we derive the following probabilistic inequality
 $$ \mathbb{P}_{\S \sim \D^n, \A}\left[|\cR(\mathcal{A(S)})-\cR_S(\mathcal{A(S)})|\ge c\bigg((M+ LG^\alpha  ) \Delta_\A \log(n)\log(1/{\gamma})+ \big( M_0+ (M+LG^\alpha)G \big)  \sqrt{n^{-1}\log(1/\gamma)} \bigg)\right] \le  \gamma.  
 $$
 The proof is completed.
\end{proof}



\begin{proof}[Proof of Lemma~\ref{lem:generror-high-probability}]
Let $E_1=\{ \A : \sup_{S\simeq S'}\|\A(S)-\A(S')\|_2\ge \Delta_\A \}$ and $E_2=\Big\{ (S, \A):| \cR(\mathcal{A(S)})-\cR_S(\mathcal{A(S)})|\ge c\Big((M+ LG^\alpha  ) \Delta_\A \log(n)\log(1/{\gamma})+\big( M_0+ (M+LG^\alpha)G\big)   \sqrt{n^{-1}\log(1/\gamma)} \Big) \Big\}$. Then by the assumption we have $\mathbb{P}_\A[ \A \in E_1 ]\le \gamma_0$. Further, according to Lemma~\ref{lem:highpro-gener-1}, for any $\gamma \in (0,1)$, we have $\mathbb{P}_{S,\A}[(S, \A)  \in E_2 \cap \A \notin E_1 ]\le \gamma $. Therefore,  
\begin{align*}
	 \mathbb{P}_{S,\A}[(S, \A) \in E_2   ]&= \mathbb{P}_{S,\A}[(S, \A)  \in E_2 \cap    \A \in E_1 ] +  \mathbb{P}_{S,\A}[(S, \A)  \in E_2 \cap    \A \notin E_1 ] \nonumber\\
	 &\le \mathbb{P}[\A \in E_1]  +  \mathbb{P}_{S,\A}[(S, \A)  \in E_2 \cap    \A \notin E_1 ]\le \gamma_0+\gamma. 
\end{align*} 
The proof is completed. 
\end{proof}

\end{document}